\definecolor{likeblue}{RGB}{80, 80, 190}
\definecolor{dislikered}{RGB}{190, 80, 80}
\definecolor{neutralgray}{RGB}{150, 150, 150}
\definecolor{solidgreen}{RGB}{80, 190, 80}
\definecolor{sillyorange}{RGB}{255, 140, 20}
\definecolor{unnecessarypink}{RGB}{255, 110, 190}
\definecolor{unapologeticyellow}{RGB}{255, 215, 5}
\definecolor{gainsboro}{rgb}{0.86, 0.86, 0.86}
\newcommand\probeq{\stackrel{\mathclap{\text{\tiny{\mbox{$d$}}}}}{=}}
\newcommand\musim{\stackrel{\mathclap{\text{\tiny{\mbox{$\mu$}}}}}{\simeq}}
\def\eqref#1{equation~\ref{#1}}
\def\Eqref#1{Equation~\ref{#1}}
\def\1{\bm{1}}
\DeclareMathAlphabet{\mathsfit}{\encodingdefault}{\sfdefault}{m}{sl}
\SetMathAlphabet{\mathsfit}{bold}{\encodingdefault}{\sfdefault}{bx}{n}
\def\gF{{\mathcal{F}}}
\def\gU{{\mathcal{U}}}
\def\sP{{\mathbb{P}}}
\DeclareMathOperator*{\argmin}{arg\,min}
\newtheorem{globalcounter}{Theorem}[section] 
\newtheorem{proposition}[globalcounter]{Proposition}
\newtheorem{lemma}[globalcounter]{Lemma}
\newtheorem{remark}[globalcounter]{Remark}
\newtheorem{corollary}[globalcounter]{Corollary}
\theoremstyle{plain}
\newenvironment{propositioncopy}[1]{\innercustomprop}{\endinnercustomprop}
\newcommand\draft[1]{#1}  
\newcommand\tinysmall{\@setfontsize\tinysmall{8}{10}}
\newcommand{\ultra}{\textsc{Ultra}}
\newcommand{\mgnn}{\textsc{Motif}}
\newcommand{\trix}{\textsc{Trix}}
\newcommand{\flock}{\textsc{Flock}}
\renewcommand*{\backref}[1]{}
\renewcommand*{\backrefalt}[4]{\ifcase#1\relax\or(page #2)\else(pages #2)\fi}
\setlist[enumerate,1]{leftmargin=2em}
\setlist[itemize,1]{leftmargin=2em}
\title{\flock: A Knowledge Graph Foundation Model via Learning on Random Walks}
\author{
Jinwoo Kim\textnormal{\textsuperscript{1}\thanks{Equal contribution.}\quad}
Xingyue Huang\textnormal{\textsuperscript{2}\footnotemark[1]\quad}
Krzysztof Olejniczak\textnormal{\textsuperscript{2}\quad}
Kyungbin Min\textnormal{\textsuperscript{1}}\\\hspace{0.05ex}
\textbf{Michael Bronstein}\textsuperscript{2,4}\quad
\textbf{Seunghoon Hong}\textsuperscript{1}\quad
\textbf{{\.I}smail {\.I}lkan Ceylan}\textsuperscript{3,4,2} \\[0.6ex]
\textsuperscript{1}KAIST\quad
\textsuperscript{2}University of Oxford\quad
\textsuperscript{3}TU Wien\quad
\textsuperscript{4}AITHYRA
}
\begin{document}

\maketitle

\begin{abstract}
We study the problem of zero-shot link prediction on knowledge graphs (KGs), which requires models to generalize to \emph{novel entities} and \emph{novel relations}. Knowledge graph foundation models (KGFMs) address this task by enforcing equivariance over \emph{both} nodes and relations, which enables them to learn structural properties of nodes and relations that transfer to novel KGs with similar structure. However, the conventional notion of deterministic equivariance inherently limits the expressive power of KGFMs, as it prevents them from distinguishing relations that are structurally similar but semantically distinct. To overcome this limitation, we propose to leverage \emph{probabilistic} node-relation equivariance, which preserves equivariance \emph{in distribution} while using structured randomness to break symmetries at inference time. Building on this principle, we present $\flock$, a KGFM that iteratively samples random walks, encodes them into sequences, embeds them with a sequence model, and aggregates node and relation representations through learned pooling. $\flock$ respects probabilistic node-relation equivariance and, crucially, is a \emph{universal approximator} for isomorphism-invariant link-level functions over KGs. Empirically, $\flock$ perfectly solves our new diagnostic dataset \textsc{Petals} on which current KGFMs fail, and achieves state-of-the-art performance on entity and relation prediction tasks across 54 KGs from diverse domains.
\end{abstract}

\section{Introduction}

Knowledge graph foundation models (KGFMs)~\citep{galkin2023ultra,zhang2024trix,huang2025how}
aim to infer missing links over novel knowledge graphs (KGs) that are not part of the training data or domains. This task requires generalization to \emph{both} unseen nodes and unseen relation types. To achieve this, KGFMs learn \emph{node and relation invariants}: structural properties of nodes and relations that are transferable across KGs even when their relational vocabularies differ. This inductive bias is formalized as double equivariance \citep{gao2023double}---equivariance under permutations of both entities and relations---and used as a core design principle of current KGFMs.

\begin{wrapfigure}{r}{5.5cm}
    \centering
    \vspace{-2em}

    \begin{tikzpicture}[
        scale=1,
        every node/.style={circle, draw, fill=black, inner sep=2pt, line width=0.5pt}, 
        line width=1pt, 
        >=Stealth,
        shorten >=3pt, 
        shorten <=3pt,
        transform shape
    ]

    \node (s) at (0,0)   [label={[font=\scriptsize]270:$\mathsf{Luke}$}] {};
    \node (t) at (0,1.5) [label={[font=\scriptsize,yshift=-0.5em]90:$\mathsf{Yoda}$}] {};

    \node (a1) at (-1, 1)   [label={[font=\scriptsize,yshift=-1em]90:$\mathsf{HanSolo}$}] {};
    \node (a2) at (-1,-1)   [label={[font=\scriptsize,yshift=1em]-90:$\mathsf{Emperor}$}] {};
    \node (a3) at (-2, 0)   [label={[font=\scriptsize]-90:$\mathsf{Jabba}$}] {};
    
    \node (b1) at ( 1, 1)   [label={[font=\scriptsize,yshift=-1.5em]90:$\mathsf{DarthVader}$}] {};
    \node (b2) at ( 1,-1)   [label={[font=\scriptsize,yshift=1.4em]-90:$\mathsf{Chewbacca}$}] {};
    \node (b3) at ( 2, 0)   [label={[font=\scriptsize]-90:$\mathsf{Leia}$}] {};

    \draw[->, color=solidgreen] (s) -- (t);

    \draw[->, color=likeblue]   (s) -- (a1);
    \draw[->, color=dislikered] (s) -- (a2);
    \draw[->, color=dislikered] (a1) -- (a3);
    \draw[->, color=dislikered] (a2) -- (a3);

    \draw[->, color=dislikered] (s) -- (b1);
    \draw[->, color=likeblue]   (s) -- (b2);
    \draw[->, color=likeblue]   (b1) -- (b3);
    \draw[->, color=likeblue]   (b2) -- (b3);

    \draw[->, color=solidgreen, dashed] (s) to[in=-80, out=170] (a1);
    \draw[->, color=solidgreen, dashed] (s) to[in=80,  out=190] (a2);

    \end{tikzpicture}
    \vspace{-1.5em}
    \caption{A KG representing characters' relationships in Star Wars movies. Blue arrows indicate \textcolor{likeblue}{$\mathsf{like}$}, red -- \textcolor{dislikered}{$\mathsf{dislike}$}, and green arrows indicate \textcolor{solidgreen}{$\mathsf{friendWith}$}.}
    \vspace{-1em}
    \label{fig:movie-preferences}
\end{wrapfigure}

\paragraph{Problem statement.} In this work, we challenge a fundamental assumption dictated by strict equivariance in existing KGFMs: \emph{structural isomorphism of relations implies semantic equivalence}.
Consider, for example, the KG in \Cref{fig:movie-preferences}, where the relations \textcolor{likeblue}{$\mathsf{like}$} and \textcolor{dislikered}{$\mathsf{dislike}$} are structurally isomorphic yet semantically opposite. Any KGFM that computes relation invariants is forced to assign the same representation to both \textcolor{likeblue}{$\mathsf{like}$} and \textcolor{dislikered}{$\mathsf{dislike}$}---losing the ability to distinguish entities with opposite relationships. This expressiveness limitation is an architectural one and \emph{cannot} be resolved through finetuning, further limiting the downstream use of existing KGFMs. This raises a central question: how can we design KGFMs that are both \emph{expressive} and have the right \emph{inductive bias} for generalization?

\paragraph{Approach.} 
We propose a new approach for KGFMs that relies on \emph{probabilistic} node-relation equivariance as inductive bias, instead of enforcing \textit{deterministic} equivariance over nodes and relations. Probabilistic equivariance relaxes the hard constraint that ``structurally isomorphic relations \emph{must} have identical representations'', and requires only that the representations of structurally isomorphic relations need to be equivalent \emph{in distribution} over a model's stochastic predictive processes \citep{Srinivasan2020On,abboudsurprising}. This way, the model retains the inductive bias needed for generalizing across different KGs, while the stochasticity of each forward pass ensures that structurally identical but semantically distinct relations are assigned different representations.

Inspired by the success of models that learn probabilistic invariants via random walks~\citep{deepwalk, node2vec-kdd2016, rwnn_original, kim_rwnn}, we introduce $\flock$, a KGFM that inherently computes probabilistic node-relation invariants. Given a potentially unseen KG and a query, $\flock$ iteratively samples random walks over the KG based on the query, encoding both encountered nodes and relations with a recording protocol. To ensure the model can generalize to unseen entities and relation types, we \emph{anonymize} all nodes and relations, enforcing that $\flock$ only learn from their structural roles. These anonymized sequences are then fed into a sequence processor, and the representations for each node and relation are aggregated via a consensus protocol. 
Finally, we construct per-query features from the aggregated entity and relation embeddings and input them into a binary classifier for link prediction.

\paragraph{Key findings and contributions.} The design of $\flock$ offers several key advantages over existing KGFMs. First, it entirely abandons the conventional two-stage process of encoding relations and node representations via two separate networks, and does not rely on message passing at all, thereby avoiding the well-known expressivity limitations of MPNNs on KGs~\citep{barcelo2022wlgorelational, huang2023theory, huang2025how}.  Second, $\flock$ is a universal approximator (Proposition \ref{prop:universal_approximator}), capable of approximating every link-level function on KGs of any bounded size. Finally, $\flock$'s architecture inherently respects probabilistic node-relation equivariance, enabling strong generalization. Our experiments on both entity and relation prediction validate this approach, demonstrating that $\flock$ consistently outperforms state-of-the-art KGFMs on existing benchmarks.
Our contributions are:
\begin{itemize}
    \item We highlight a limitation of existing KGFMs: their reliance on deterministic node–relation equivariance prevents them from distinguishing between structurally similar but semantically different relations, limiting their expressivity.
    \item We \draft{propose to leverage} probabilistic node-relation equivariance, a property for KGFMs that ensures equivariance only in distribution, \draft{as an effective solution} balancing the model expressivity and generalization.
    \item We propose $\flock$, a KGFM that respects probabilistic node-relation equivariance. $\flock$ replaces the conventional two-stage, message-passing paradigm with a direct sequence encoding approach based on random walks, and is a universal approximator of link-level functions.
    \item We validate our approach on entity and relation prediction tasks across 54 diverse KGs, where $\flock$ consistently achieves state-of-the-art performance. We further construct a synthetic dataset \textsc{Petals} to confirm our theoretical results empirically.
\end{itemize}

All proofs are in \Cref{app:proofs}.
The code is available at \url{https://github.com/jw9730/flock}.

\section{Related work}

\paragraph{Link prediction and KGFMs.}
Early methods for inferring missing links in KGs relied on learned embeddings \citep{brodes2013transe,sun2019rotate, Balazevic_2019,abbound2020boxe,schlichtkrull2017modeling,vashishth2020compositionbased}, operating in the \emph{transductive} setting and incapable of generalizing to unseen entities or relation types.
Later MPNN-based methods based on the labeling trick \citep{grail2020teru,LabelingTrick2021} or conditional message passing \citep{zhu2022neural,zhu2023anet,zhang2022redgnn,adaprop,huang2023theory} unlocked the \emph{node-inductive} scenario, while remaining restricted to a fixed relational vocabulary.
KGFMs eliminate this restriction and enable \emph{node-relation inductive} link prediction over both unseen nodes and relations \citep{geng2022relationalmessagepassingfully}, typically by first encoding relations and then nodes. Early examples are InGram~\citep{ingram} and ULTRA~\citep{galkin2023ultra}, later extended by TRIX~\citep{zhang2024trix} into a more expressive framework.
KG-ICL~\citep{cui2024prompt} achieved full inductiveness by combining in-context learning with node-relation tokenization.
ISDEA~\citep{gao2023double} and MTDEA~\citep{zhou2023multitaskperspetivelinkprediction} highlighted the benefits of node-relation equivariance, while MOTIF~\citep{huang2025how} proposed a general KGFM framework with a theoretical analysis of their expressive power.
Our work advances the field with \draft{a stochastic KGFM} that achieves invariance in probability to ensure generalization while being provably more expressive than the existing methods.
Notably, $\flock$ achieves universality without any form of message passing, instead relying on random walks and sequence models to encode both nodes and relations anonymously.
\draft{This is distinct from prior stochastic KGFMs that rely on random initialization of message passing \citep{ingram, gao2023double}.}

\paragraph{Random walks for graph representations.}
Random walks have been widely adopted in graph learning due to their simplicity and ability to gather context from neighborhoods.
DeepWalk \citep{deepwalk} and node2vec \citep{node2vec-kdd2016} were among the first approaches, treating walks as analogues of natural language sentences and processing them with skip-gram models.
Subsequent work has developed diverse neural architectures based on random walks: \citet{rwnn_original} generates graph-level predictions using joint walks on direct products of graphs and their subgraphs, CRaWL~\citep{tonshoff_crawl} represents a graph as a collection of walks and processes them with a convolutional network, WalkLM~\citep{walklm} samples walks from graphs with textual features and embeds them using a language model, RWNN~\citep{kim_rwnn} and RUM~\citep{wang2024rum} anonymize walks and process them with sequence networks, and NeuralWalker~\citep{neuralwalker} integrates walk encodings into message passing.

\paragraph{Probabilistic invariance.} Neural architectures that enforce invariance to specific transformations often exhibit more stable training and improved performance~\citep{bronstein2021geometric}, but this inductive bias can reduce their expressivity by preventing the model from distinguishing non-equivalent inputs. 
In graph learning, this trade-off is exemplified by MPNNs, whose power is limited by the 1-WL test~\citep{xu2018howexpressive,MorrisAAAI19}.
Randomization has emerged as a solution, enhancing expressivity through techniques such as noise injection~\citep{abboudsurprising}, vertex dropping~\citep{papp2021dropgnn}, subgraph sampling~\citep{bevilacquaequivariant,subgraphgnn_hierarchy}, dynamic rewiring~\citep{finkelshtein2024cooperative},
and random walks~\citep{kim_rwnn, wang2024rum}.
Despite their stochasticity, such methods can remain probabilistically invariant, ensuring that equivalent inputs yield identical expected outputs, or even identical output distributions.
\draft{In line with a prior work \citep{gao2023double}, we extend the notion of probabilistic invariance to KGs and prove that $\flock$ satisfies invariance in distribution, and further clarify its usefulness in KG learning.}

\section{Preliminary}
\label{sec:preliminaries}

\textbf{Knowledge graphs.} 
A \emph{knowledge graph} (KG) is a tuple $G=(V,E,R)$, where $V$ denotes the set of entities (nodes), $R$ the set of relation types, and $E \subseteq V \times R \times V$ the set of labeled edges (\emph{facts}). 
A fact is written as $(h,r,t)$ (or $h \xrightarrow{r} t$ interchangeably) with $r \in R$ and $h,t \in V$. 
A (potential) \emph{link} in $G$ is any triple $(h,r,t)$ in $V \times R \times V$, regardless of whether it is present in $E$. 
We denote by $R^{-1}$ the set of inverses of relations $R$, defined as $\{r^{-1}\mid r\in R\}$, and mean $r$ when writing $(r^{-1})^{-1}$. \draft{Further, let $\mathbb{K}_{n,m}$ be the space of knowledge graphs with $n$ vertices and $m$ relation types.}

\textbf{Isomorphism.} 
An \emph{isomorphism} between two knowledge graphs $G=(V,E,R)$ and $G'=(V',E',R')$ is a pair of bijections $\mu=(\pi,\phi)$, 
where $\pi: V \to V'$ and $\phi: R \to R'$, such that a fact $(h,r,t)$ belongs to $E$ if and only if the fact $\mu((h,r,t)) = (\pi(h),\phi(r),\pi(t))$ belongs to $E'$. 
Two KGs are \emph{isomorphic} if such a mapping exists, in which case we write $G\simeq G'$.

\textbf{Link invariance.}
In this work, we focus on link-invariant functions. 
Let $\omega$ be a function assigning to each KG $G=(V,E,R) \in \mathbb{K}_{n,m}$ a map $\omega(G): V \times R \times V \rightarrow \mathbb{R}^d$.
We say that $\omega$ is \emph{link invariant} if for every pair of isomorphic KGs $G,G' \in \mathbb{K}_{n,m}$, 
every isomorphism $(\pi,\phi)$ from $G$ to $G'$, and every link $(h,r,t)$ in $G$, we have
$
\omega(G)((h,r,t)) = \omega(G')((\pi(h),\phi(r),\pi(t))).
$

\textbf{Probabilistic invariance.}
A stochastic KG model $\varphi$ can be viewed as a function that takes a KG~and returns a random variable $\varphi(G)$. Following \citet{kim_rwnn}, we call $\varphi$ \emph{invariant in probability} if
\[
\forall G, G' \in \mathbb{K}_{n,m}: \qquad G\simeq G' \implies \varphi(G) \probeq \varphi(G'),
\]
i.e., the distributions of $\varphi(G)$ and $\varphi(G')$ are equal. In particular, this implies $\mathbb{E}[\varphi(G)] = \mathbb{E}[\varphi(G')]$.

\section{Methodology}\label{sec:methods}

\begin{figure}
\vspace{-2em}
\begin{tikzpicture}[
        scale=1,
        every node/.style={circle, draw, fill=black, inner sep=2pt, line width=0.5pt}, 
        line width=0.5pt, 
        >=Stealth, 
        shorten >=3pt, 
        shorten <=3pt,
        transform shape
    ]
    \definecolor{vaguegray}{rgb}{0.5,0.5,0.5} 

\begin{scope}[scale=0.8, xshift=0cm]
    \node[circle, draw, fill=white, inner sep=2pt, line width=0.5pt] (s) at (0,0) {};
    \node[circle, draw, fill=white, inner sep=2pt, line width=0.5pt] (t) at (0,1.5) {};

    \node[circle, draw, fill=white, inner sep=2pt, line width=0.5pt] (a1) at (-1, 1) {};
    \node[circle, draw, fill=white, inner sep=2pt, line width=0.5pt] (a2) at (-1,-1){};
    \node[circle, draw, fill=white, inner sep=2pt, line width=0.5pt] (a3) at (-2, 0) {};

    \node[circle, draw, fill=white, inner sep=2pt, line width=0.5pt] (b1) at (1, 1) {};
    \node[circle, draw, fill=white, inner sep=2pt, line width=0.5pt] (b2) at (1,-1){};
    \node[circle, draw, fill=white, inner sep=2pt, line width=0.5pt] (b3) at (2, 0) {};

    \draw[->, color=black] 
        (s) to node[midway, right, draw=none, fill=none, text=black] {\scriptsize $r_3$} (t);

    \draw[->, color=black] 
        (s) to node[midway, below, draw=none, fill=none, text=black] {\scriptsize $r_1$} (a1);
    \draw[->, color=black] 
        (s) to node[midway, above, draw=none, fill=none, text=black] {\scriptsize $r_1$} (b2);
    \draw[->, color=black] 
        (b1) to node[midway, above, draw=none, fill=none, text=black] {\scriptsize $r_1$} (b3);
    \draw[->, color=black] 
        (b2) to node[midway, below, draw=none, fill=none, text=black] {\scriptsize $r_1$} (b3);

    \draw[->, color=black] 
        (s) to node[midway, above, draw=none, fill=none, text=black] {\scriptsize $r_2$} (a2);
    \draw[->, color=black] 
        (a1) to node[midway, above, draw=none, fill=none, text=black] {\scriptsize $r_2$} (a3);
    \draw[->, color=black] 
        (a2) to node[midway, below, draw=none, fill=none, text=black] {\scriptsize $r_2$} (a3);
    \draw[->, color=black] 
        (s) to node[midway, below, draw=none, fill=none, text=black] {\scriptsize $r_2$} (b1);

\begin{scope}[on background layer, transparency group, opacity=0.1]

    \draw[draw=red!25, line width=2pt, fill=red!25] (t) circle (0.3);
    \draw[draw=red!25, line width=2pt, fill=red!25] (s) circle (0.3);
    \draw[draw=red!25, line width=2pt, fill=red!25] (a1) circle (0.3);
    \draw[draw=red!25, line width=2pt, fill=red!25] (a3) circle (0.3);

  \draw[->, draw=red!25, double=red!25, double distance=1pt,
        line width=1.5pt, shorten >=0.5pt, shorten <=0.5pt] (t) to (s);

  \draw[->, draw=red!25, double=red!25, double distance=1pt,
        line width=1.5pt, shorten >=0.5pt, shorten <=0.5pt] (s) to (a1);

  \draw[->, draw=red!25, double=red!25, double distance=1pt,
        line width=1.5pt, shorten >=0.5pt, shorten <=0.5pt] (a1) to (a3);
\end{scope}

\begin{scope}[on background layer, transparency group, opacity=0.1]
\draw[draw=teal!25, line width=2pt, fill=teal!25] (s) circle (0.2);
\draw[draw=teal!25, line width=2pt, fill=teal!25] (b3) circle (0.2);
\draw[draw=teal!25, line width=2pt, fill=teal!25] (b2) circle (0.2);
\draw[draw=teal!25, line width=2pt, fill=teal!25] (b1) circle (0.2);
  \draw[->, draw=teal!25, double=teal!25, double distance=1pt,
        line width=1.5pt, shorten >=0.5pt, shorten <=0.5pt] (s)  to (b2);
  \draw[->, draw=teal!25, double=teal!25, double distance=1pt,
        line width=1.5pt, shorten >=0.5pt, shorten <=0.5pt] (b2) to (b3);
  \draw[->, draw=teal!25, double=teal!25, double distance=1pt,
        line width=1.5pt, shorten >=0.5pt, shorten <=0.5pt] (b3) to (b1);
\end{scope}

    \node[fill=none, draw=none,text=black] (title) at (0,-1.95) {\textbf{Random walks}};
\end{scope}

\begin{scope}[scale=0.8, xshift=3cm, yshift=0.5cm,shorten >=1pt, 
        shorten <=1pt,]
    \node[fill=white, label=above:{$1$,}] (t1) at (0,0) {};
    \node[fill=white, label=above:{$2$,}] (t2) at (1.0,0) {};
    \node[fill=white, label=above:{$3$,}] (t3) at (2.0,0) {};
    \node[fill=white, label=above:{$4$\phantom{,}}] (t4) at (3.0,0) {};

    \draw[->, draw=red!25, double=red!25,  double distance=3pt,
        line width=0.5pt, shorten >=0.5pt, shorten <=0.5pt] (t1) -- (t2) node[midway, above,fill=none, draw=none,text=black, yshift=0.0em] {$\alpha^{\scalebox{0.4}{$-1$}}$,};
    \draw[->, draw=red!25, double=red!25,  double distance=3pt,
        line width=0.5pt, shorten >=0.5pt, shorten <=0.5pt] (t2) -- (t3) node[midway, above,fill=none, draw=none,text=black, yshift=0.2em] {$\beta$,};
    \draw[->, draw=red!25, double=red!25,  double distance=3pt,
        line width=0.5pt, shorten >=0.5pt, shorten <=0.5pt] (t3) -- (t4) node[midway, above,fill=none, draw=none,text=black, yshift=0.2em] {$\gamma$,};

    \draw[->, color=black, line width=0.3pt] (t2) -- (t1)  ;
    \draw[->, color=black, line width=0.3pt] (t2) -- (t3) ;
    \draw[->, color=black, line width=0.3pt] (t3) -- (t4) ;

\begin{scope}[on background layer]
  \draw[draw=red!25, line width=2pt, fill=red!25] (t1) circle (0.2);
  \draw[draw=red!25, line width=2pt, fill=red!25] (t2) circle (0.2);
  \draw[draw=red!25, line width=2pt, fill=red!25] (t3) circle (0.2);
  \draw[draw=red!25, line width=2pt, fill=red!25] (t4) circle (0.2);
\end{scope}
    
    \node[fill=white, label=below:{$1$,}] (b1) at (0,-1) {};
    \node[fill=white, label=below:{$2$,}] (b2) at (1.0,-1) {};
    \node[fill=white, label=below:{$3$,}] (b3) at (2.0,-1) {};
    \node[fill=white, label=below:{$4$}\phantom{,}] (b4) at (3.0,-1) {};

    \draw[->, draw=teal!25, double=teal!25,  double distance=3pt,
        line width=0.5pt, shorten >=0.5pt, shorten <=0.5pt] (b1) -- (b2) node[midway, below,fill=none, draw=none,text=black, yshift=-0.4em] {$\alpha$,};
    \draw[->, draw=teal!25, double=teal!25,  double distance=3pt,
        line width=0.5pt, shorten >=0.5pt, shorten <=0.5pt] (b2) -- (b3) node[midway, below,fill=none, draw=none,text=black, yshift=-0.4em] {$\alpha$,};
    \draw[->, draw=teal!25, double=teal!25,  double distance=3pt,
        line width=0.5pt, shorten >=0.5pt, shorten <=0.5pt] (b3) -- (b4) node[midway, below,fill=none, draw=none,text=black, yshift=-0.05em] {$\alpha^{\scalebox{0.4}{$-1$}}$,};

    \draw[->, color=black, line width=0.3pt] (b1) -- (b2)  ;
    \draw[->, color=black, line width=0.3pt] (b2) -- (b3) ;
    \draw[->, color=black, line width=0.3pt] (b4) -- (b3) ;

    \node[fill=none, draw=none,text=black] (bracket1left) at (-0.3,0.42) {$[$};
    \node[fill=none, draw=none,text=black] (bracket1left) at (3.2,0.42) {$]$};

    \node[fill=none, draw=none,text=black] (bracket1left) at (-0.3,-1.40) {$[$};
    \node[fill=none, draw=none,text=black] (bracket1left) at (3.2,-1.40) {$]$};

\begin{scope}[on background layer]
  \draw[draw=teal!25, line width=2pt, fill=teal!25] (b1) circle (0.2);
  \draw[draw=teal!25, line width=2pt, fill=teal!25] (b2) circle (0.2);
  \draw[draw=teal!25, line width=2pt, fill=teal!25] (b3) circle (0.2);
  \draw[draw=teal!25, line width=2pt, fill=teal!25] (b4) circle (0.2);
\end{scope}

    \node[fill=none, draw=none,text=black] (title) at (1.5,-2.5) {\textbf{Recording protocol}};
\end{scope}

\begin{scope}[scale=0.8, xshift=7.2cm, yshift=0.5cm, shorten >=1pt, shorten <=1pt]
  \node[fill=vaguegray!40, draw=none]       (t1) at (0,0)   {}; 
  \node[fill=yellow]        (t2) at (1.0,0) {}; 
  \node[fill=vaguegray!40, draw=none]       (t3) at (2.0,0) {}; 
  \node[fill=vaguegray!40, draw=none]       (t4) at (3.0,0) {}; 

  \draw[->, color=green!50!black, opacity=0.4,    line width=0.7pt] (t2) -- (t1);
  \draw[->, color=blue!70!black,  line width=0.7pt] (t2) -- (t3);
  \draw[->, color=red, opacity=0.4,  line width=0.7pt] (t3) -- (t4);

  \node[fill=red]           (b1) at (0,-1)  {}; 
  \node[fill=vaguegray!40, draw=none]     (b2) at (1.0,-1){}; 
  \node[fill=vaguegray!40, draw=none]  (b3) at (2.0,-1){}; 
  \node[fill=vaguegray!40, draw=none] (b4) at (3.0,-1){}; 

  \draw[->, color=blue!30,   line width=0.7pt] (b1) -- (b2);
  \draw[->, color=blue!40, line width=0.7pt] (b2) -- (b3);
  \draw[->, color=blue!70,   line width=0.7pt] (b4) -- (b3);

\begin{scope}[on background layer]

\node[circle, draw=vaguegray!40, fill=vaguegray!40,
      minimum size=0.5cm, inner sep=0pt] at (t2) {};

\node[circle, draw=vaguegray!40, fill=vaguegray!40,
      minimum size=0.5cm, inner sep=0pt] at (b1) {};

\end{scope}

  \node[fill=none, draw=none, text=black] (bracket1left) at (1.5,-2.5) {\textbf{Sequence processor}};
\end{scope}

    \begin{scope}[scale=0.8, xshift=13cm]
         \node[circle, fill=orange, inner sep=2pt, line width=0.5pt] (s) at (0,0) {};
        \node[color=vaguegray, opacity=0.4, draw=none] (t) at (0,1.5) {};
        \node[color=vaguegray,opacity=0.4, draw=none]  (a1) at (-1, 1) {};
        \node[color=vaguegray,opacity=0.1, draw=none]  (a2) at (-1,-1){};
        \node[color=vaguegray,opacity=0.4, draw=none]  (a3) at (-2, 0) {};
        \node[color=vaguegray,opacity=0.4, draw=none]  (b1) at (1, 1) {};
        \node[color=vaguegray,opacity=0.4, draw=none]  (b2) at (1,-1){};
        \node[color=vaguegray,opacity=0.4, draw=none]  (b3) at (2, 0) {};
        \draw[->, color=green!50!black, opacity=0.4] (s) to (t);
        \draw[->, color=blue] (s) to (a1);
        \draw[->, color=vaguegray, opacity=0.1] (s) to (a2);
        \draw[->, color=red, opacity=0.4] (a1) to (a3);
        \draw[->, color=vaguegray, opacity=0.1] (a2) to (a3);
        \draw[->, color=vaguegray, opacity=0.1] (s) to (b1);
        \draw[->, color=blue] (s) to (b2);
        \draw[->, color=blue] (b1) to (b3);
        \draw[->, color=blue] (b2) to (b3);

\begin{scope}[on background layer]
  \node[circle, draw=vaguegray!40, fill=vaguegray!40,
        minimum size=0.5cm, inner sep=0pt] (s) at (0,0) {};
\end{scope}

    \node[fill=none, draw=none,text=black] (bracket1left) at (0,-2) {\textbf{Consensus protocol}};
    \end{scope}

\end{tikzpicture}
\vspace{-3em}
\caption{\textbf{An overview.} In each updating step, 
$\flock$ \textbf{(1)} samples random walks on the KG \draft{(two walks indicated by \textcolor{white!40!red}{red} and \textcolor{white!30!teal}{teal}, respectively)}, \textbf{(2)} anonymizes the encountered nodes and relations via a recording protocol \draft{(for each walk, nodes are anonymized as $1, 2, ...$ and relations as $\alpha,\beta,...$)}, and \textbf{(3)} feeds the sequences in a sequence processor to compute node and relation representations. \textbf{(4)} A consensus protocol then pools them back to the original KG’s nodes and relations.}
    \label{fig:pipeline}
\end{figure}

We present $\flock$, a KGFM respecting probabilistic node-relation invariance.
$\flock$ is a randomized function $X_\theta(\cdot)$ which takes as input a KG $G = (V, E, R)$ and a link prediction query $q$.
We consider two types of queries: {\em entity prediction} $q=(h, r, ?)$ and {\em relation prediction} $q=(h, ?, t)$.
$\flock$ outputs a random variable $\hat{{\bf y}}\sim X_\theta(G, q)$ which is suited for the task at hand.
For entity prediction, it outputs $\hat{{\bf y}}:V\to[0,1]$ such that a potential link $(h, r, t)$ can be evaluated by $\hat{{\bf y}}(t)\in[0, 1]$.
For relation prediction, it outputs $\hat{{\bf y}}:R\to[0,1]$ such that a link $(h, r, t)$ can be evaluated by $\hat{{\bf y}}(r)$.

\draft{At test time, we average multiple (\(P\)) independent stochastic predictions to produce the final output.
This improves performance and reduces variance through an ensembling effect.}

We describe the architecture of $\flock$ in \Cref{sec:architecture} focused on four main components, and then analyze its theoretical properties in \Cref{sec:theoretical_properties}, showing universality and probabilistic equivariance.
\draft{An expansion on the model details can be found in \Cref{app:methodology_details}.}

\subsection{Flock}\label{sec:architecture}
Internally, $\flock$ has two lookup tables of hidden states, ${\bf v}:V\to \mathbb{R}^d$ for entities and ${\bf r}:R\to \mathbb{R}^d$ for relations, respectively.
At each forward pass, it starts from trained initializations of these states ${\bf v}^{(0)}(\cdot)\coloneqq {\bf v}_0$ and ${\bf r}^{(0)}(\cdot)\coloneqq {\bf r}_0$, and updates them iteratively ${\bf v}^{(i)}, {\bf r}^{(i)}$ for $i\leq I$.
Each update is done residually using a randomized function $U_{\theta_i}$:
\begin{align*}
    {\bf v}^{(i+1)} \coloneqq {\bf v}^{(i)} + \Delta{\bf v},\qquad
    {\bf r}^{(i+1)} \coloneqq {\bf r}^{(i)} + \Delta{\bf r},\qquad
    (\Delta{\bf v}, \Delta{\bf r}) \sim {\rm update}_{\theta_i}({\bf v}^{(i)}, {\bf r}^{(i)}).
\end{align*}
The final hidden states ${\bf v}^{(I)}:V\to\mathbb{R}^d$ and ${\bf r}^{(I)}:R\to\mathbb{R}^d$ are then processed by a binary classifier ${\rm head}:\mathbb{R}^d\to[0,1]$ to produce the output $\hat{\bf y}$ which is $V\to[0, 1]$ or $R\to[0, 1]$ depending on task.

We now describe the randomized ${\rm update}_\theta$.
We drop $i$ for brevity.
It consists of four components:
\begin{enumerate}
    \item \textbf{Random walk algorithm} produces $n$ random walks $\eta_1, ..., \eta_n$ of length $\ell$ on the input KG.
    \item \textbf{Recording protocol} $w:\eta_j\mapsto {\bf z}_j$ transforms each walk into a graph-agnostic sequence.
    \item \textbf{Sequence processor} $f_\theta:{\bf z}_j\mapsto {\bf h}_j$ processes each sequence independently, outputting features.
    \item \textbf{Consensus protocol} $c:({\bf h}_{1:N}, \eta_{1:N})\mapsto(\Delta{\bf v}, \Delta{\bf r})$ collects features of all walks and decides state updates for each entity and relation type.
\end{enumerate}
An overview is presented in Figure~\ref{fig:pipeline}.
We note that $w$, $f_\theta$, and $c$ are all deterministic, and the random walk is the only source of stochasticity.
We now discuss the design choice for each.
For the ease of exposition, we explain for entity prediction tasks $q=(h,r,?)$, but relation prediction is similar.

\paragraph{Random walks.}
In $\flock$, random walks are central in two ways: they rewrite the connectivity of nodes and relations as sequences, and support generalization via probabilistic equivariance.

Formally, the random walk algorithm produces $n$ random walks $\eta_1,...,\eta_n$ of length $\ell$ on KG $G$.
Each random walk $\eta$ is a chain of random variables, written as:n as:
\begin{align*}
    \eta = v_0 \xrightarrow{r_1} v_1 \xrightarrow{r_2} \cdots \xrightarrow{r_\ell} v_\ell,\qquad v_s\in V, r_s\in R,(v_{s-1},r_s,v_s)\in E,
\end{align*}
where the underlying transition mechanism and $\ell$ are hyperparameters.

To support probabilistic equivariance, we ask the walk algorithm to be invariant in probability.
We say $\eta$ is invariant in probability if for any $G \simeq H$ in $\mathbb{K}_{n,m}$ with isomorphism $(\pi, \phi)$ from $G$ to $H$:
\begin{align*}
    \pi(v_0) \xrightarrow{\phi(r_1)} \pi(v_1) \xrightarrow{\phi(r_2)} \cdots \xrightarrow{\phi(r_\ell)} \pi(v_\ell) \probeq u_0 \xrightarrow{s_1} u_1 \xrightarrow{s_2} \cdots \xrightarrow{s_\ell} u_\ell,
\end{align*}
where $v_0 \xrightarrow{r_1} \cdots \xrightarrow{r_\ell} v_\ell$ and $u_0 \xrightarrow{s_1} \cdots \xrightarrow{s_\ell} u_\ell$ follow the distributions of $\eta(G, \ell)$ and $\eta(H, \ell)$, respectively.
In such case, the isomorphism $(\pi, \phi)$ yields a natural translation from walks in $G$ to $H$.

\draft{In $\flock$, we use a simple random walk algorithm which we show to be invariant in probability.
Specifically, we use uniform walks with non-backtracking, with minor modifications to handle directed multi-edges of KGs.
Despite the simplicity, we find that this choice works well in practice, consistent with findings of prior works \citep{tonshoff_crawl, kim_rwnn}.}

\draft{Under this choice, we diversify the starting locations of walks such that local context around the query $q$ and broad regions of the nodes and relations in a KG are both well-captured. Our \emph{diversification strategy} is as follows:} given a base walk count $n$, for entity prediction queries $(h,r,?)$, we use $3n$ walks with three types of start locations.
\draft{The first $n$ walks start at query node $h$, capturing local context around the query; the second $n$ walks start by traversing a random edge $(v, s, u)$ where $s$ is a uniformly chosen relation, broadly capturing the relations of the KG including $r$; the last $n$ walks start at random nodes, broadly capturing various regions of the KG.
For relation prediction queries $(h,?,t)$, we additionally start $n$ walks at the tail node $t$, sampling a total of $4n$ walks.}

We lastly discuss how to choose the base walk count $n$.
While this is a fixed hyperparameter $n_{\rm train}$ at pretraining, we find that scaling it adaptively to input KG at test-time benefits size generalization. We thus propose \emph{test-time adaptation of walk counts}, and use:
\begin{align}
\label{eq:harmonic_mean}
    n = n_{\rm train}\times {\rm harmonic}\ {\rm mean}\left(\frac{|V|}{|V|_{\rm train}}, \frac{|E|}{|E|_{\rm train}}\right),
\end{align}
where $|V|_{\rm train},|E|_{\rm train}$ are average numbers of nodes and edges in pretraining KGs, respectively.
Intuitively, this scales $n$ proportionally to the sizes of test KGs relative to pretraining.
In practice, we clamp $n$ to the nearest power of $2$ \draft{and limit its value in an interval to avoid out-of-memory errors}.

\paragraph{Recording protocol.}
While random walks provide a basis for invariant sequence representations of KGs, two issues remain: (1) They reveal nodes $v_s$ and relations $r_s$ specific to each KG which obstructs transferability to unseen KGs. (2) They do not offer a way to condition on current states of entities ${\bf v}$, relations ${\bf r}$, and the query $q=(h,r,?)$ as often done in KGFMs via the labeling trick.

The recording protocol $w:\eta_j\mapsto {\bf z}_j$ resolves this by transforming each walk into a \emph{graph-agnostic} sequence that only leaves structural information.
Motivated by prior works on node anonymization for invariance~\citep{kim_rwnn, wang2024rum}, we propose an extension called node-relation anonymization: reserve separate namespaces for nodes and relations, respectively, and assign unique names in the order of discovery.
For example, with $1, 2, 3, ...$ for nodes and $\alpha, \beta,...$ for relations:
\begin{align*}
    \eta =v_0 \xrightarrow{r_1} v_1 \xrightarrow{r_2} v_2 \xrightarrow{r_1^{-1}} v_0 \qquad \mapsto \qquad 1 \xrightarrow{\alpha} 2 \xrightarrow{\beta} 3 \xrightarrow{\alpha^{-1}} 1,
\end{align*}
where $(\cdot)^{-1}$ marks direction of a relation.
The protocol additionally employs a simple conditioning on current states $({\bf v},{\bf r})$ and query $q=(h,r,?)$, completing the record ${\bf z}$ as follows:
\begin{align}
    w:\eta \mapsto {\bf z} = (1, {\bf v}(v_0), {\bf 1}_h(v_0)) \xrightarrow{\alpha, {\bf r}(r_1), {\bf 1}_r(r_1)} (2,{\bf v}(v_1), {\bf 1}_{h}(v_1)) \xrightarrow{\beta, {\bf r}(r_2),{\bf 1}_r(r_2)}\cdots,\label{eq:record_example}
\end{align}
with indicator functions ${\bf 1}_h(\cdot), {\bf 1}_r(\cdot)$ at $h$ and $r$, respectively.
As we will show, the recording protocol keeps node-relation invariance by hiding nodes and relations while leaving their structural roles.

\paragraph{Sequence processor.}
Now that the recordings ${\bf z}$ only encode structural information of KG, we can safely process them with an arbitrary neural network $f_\theta:{\bf z}\mapsto{\bf h}$ without the risk of losing invariance.
Since ${\bf z}$ are sequences, we choose sequence networks to leverage their inductive bias.
Specifically, we use bidirectional GRU~\citep{cho2014properties} equipped with RMSNorm~\citep{zhang2019root} and SwiGLU feedforward network~\citep{shazeer2020glu}, which provided robust results.
\draft{To convert anonymizations into input feature vectors to the GRU, we use trainable embedding tables.}

Given that $f_\theta$ is a sequence network, it is convenient to interpret its output ${\bf h}$ as positionally aligned with each step of the walk $\eta$ or record ${\bf z}$.
Specifically, for the example in \Eqref{eq:record_example}, we obtain:
\begin{align*}
    f_\theta:{\bf z} \mapsto {\bf h} = (\Delta{\bf v}_0, a_0) \xrightarrow{\Delta{\bf r}_1, b_1} (\Delta{\bf v}_1, a_1) \xrightarrow{\Delta{\bf r}_2, b_2} \cdots.
\end{align*}
where $\Delta{\bf v}_s,\Delta{\bf r}_s\in\mathbb{R}^{h\times d_h}$ and $a_s,b_s\in\mathbb{R}^h$ are the decoded outputs at each position using linear projections.
Intuitively, $\Delta{\bf v}_s,\Delta{\bf r}_s$ encode proposals of state updates for entities and relations by $f_\theta$, and $a_s, b_s$ encode respective confidences of $f_\theta$ for the proposed updates.
This separation is useful due to the localized, pure-structure nature of the recordings $\mathbf{z}$.
If a random walk $\eta$ densely visited a cycle-like region and then terminated in a dangling manner, it is natural to assign more confidence to the cycle-like region of the structural encodings ${\bf h}$, and less confidence to the dangling region.

\paragraph{Consensus protocol.}
After sequence processing, we are left with a handful of state update proposals ${\bf h}_{1:N}$ from $f_\theta$, that are positionally aligned with random walks $\eta_{1:N}$ on KG.
The consensus protocol $c$ uses the information to decide final state updates $\Delta{\bf v}:V\to\mathbb{R}^d$ and $\Delta{\bf r}:R\to\mathbb{R}^d$.

Since $c$ can access how each $\Delta{\bf v}_s$ within ${\bf h}_j$ is associated to a node $v_s\in V$ (and how each $\Delta{\bf r}_s$ is associated to a relation $r_s\in R$) through the random walk $\eta_j$, a simple way to form a consensus is by finding all proposals $\{\Delta{\bf v}_s\}$ associated to each node $v$, and all $\{\Delta{\bf r}_s\}$ associated to each relation $r$, and take averages of these proposals.
The drawback is that uninformative proposals from e.g., dangling regions of walks are not directly suppressed, and can affect the state updates.

We can leverage the confidences $a_s,b_s$ from $f_\theta$ to alleviate this issue.
For each node $v\in V$ or relation $r \in R$, we first find all respective associated pairs $\{(\Delta{\bf v}_s,a_s)\}$ or $\{(\Delta{\bf r}_s,b_s)\}$ of proposals and confidences, and compute a multi-head softmax-normalized weighted average:
\begin{align*}
    \Delta{\bf v}(v)\coloneqq \left[\sum\exp(a_s) \odot\Delta{\bf v}_s\right] \oslash \sum\exp(a_s)\quad \Delta{\bf r}(r)\coloneqq \left[\sum\exp(b_s)\odot \Delta{\bf r}_s\right] \oslash \sum\exp(b_s),
\end{align*}
where $\odot$ and $\oslash$ are row-wise multiplication and division, respectively.
Intuitively, this normalization induces competition between state update proposals, naturally leading to uninformative proposals being suppressed.
Similar ideas are presented by \citet{locatello2020object}.

Again, we can show that the consensus protocol does not operate in a way specific to particular KGs, and hence retains node-relation equivariance.

\subsection{Theoretical analysis}
\label{sec:theoretical_properties}

\textbf{Expressivity. }
Following the notion of probabilistic expressivity introduced by \cite{abboudsurprising}, we say that a $\flock$ model $X_\theta$ is a universal approximator of link invariant functions over $\mathbb{K}_{n,m}$ if for any link invariant $\varphi:\mathbb{K}_{n,m} \rightarrow (V\times R\times V \rightarrow [0,1])$ and any $\epsilon, \delta > 0$, there exists a choice of the network parameters $\theta$ and the length of the sampled random walks $\ell$, such that:
\begin{align*}
    \sP(|\varphi(G)((h,r,t)) - X_\theta(G, (h,r,?))(t)| < \epsilon) > 1-\delta 
\end{align*}
for all graphs $G=(V,E,R) \in \mathbb{K}_{n,m}$ and all links $(h,r,t) \in V\times R\times V$.

\begin{proposition}
    \label{prop:universal_approximator}
    With a powerful enough sequence processor $f_\theta$, the $\flock$ framework described above is a universal approximator of link invariant functions over $\mathbb{K}_{n,m}$ for all pairs $(n,m)$.
\end{proposition}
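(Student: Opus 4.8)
The plan is to exploit that $\mathbb{K}_{n,m}$ contains only finitely many isomorphism classes, so a link-invariant $\varphi$ is a \emph{finite table}: the value $\varphi(G)((h,r,t))$ depends only on the isomorphism type of the quadruple $(G,h,r,t)$ (a KG with two marked nodes and one marked relation), of which there are finitely many. It therefore suffices to show that, with probability at least $1-\delta$ over the internal randomness, $\flock$ can (i) recover this isomorphism type and (ii) output the corresponding tabulated value to within $\epsilon$; part (ii) is routine once $f_\theta$ and ${\rm head}$ are allowed to approximate arbitrary maps on the (discrete, finite) spaces involved.

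First I would establish the probabilistic core: a cover-time bound. The walk mechanism used by $\flock$ has finite expected cover time on any finite graph, so for every $\rho>0$ there is a length $l$ such that a single walk of length $l$, started anywhere, traverses every edge of its connected component with probability at least $1-\rho$. Combining this with $\flock$'s start-point scheme — $n$ walks anchored at the query node $h$, $n$ anchored at a uniformly random edge, and $n$ (plus, for relation prediction, $n$ at $t$) started at uniformly random nodes — and a base count $n$ large enough relative to the number of components (at most $n$), a union bound gives that with probability at least $1-\delta$ a \emph{good event} holds: every connected component of $G$ is the starting component of, and is fully covered by, at least one sampled walk, and in fact every one of the (fixed number of) sampled walks is individually covering. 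On this event the multiset of anonymized records $\{\mathbf{z}_1,\dots,\mathbf{z}_N\}$ determines $(G,h,r)$ up to isomorphism (a covering walk is a faithful structure-preserving trace of a component, tagged with the $h$- and $r$-indicators, and node-relation anonymization names things in order of discovery), and each covering record additionally pins down which anonymized index along it corresponds to which node of $G$.

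Next I would realize a decoder inside $\flock$. Invoking the hypothesis that $f_\theta$ is powerful enough — formally, a universal approximator of sequence-to-sequence maps — I would choose $f_\theta$ to: detect whether a record is a covering trace (possible since $|V|\le n$ and edge counts are bounded in $\mathbb{K}_{n,m}$, so a trace that discovers no new node or edge for long enough is covering), and if so reconstruct $(G,h,r)$ up to isomorphism from it and, at the position of each visited node $v$, emit a proposal $\Delta\mathbf{v}_s$ from which ${\rm head}$ will read off $\varphi(G)((h,r,v))$, with a large confidence $a_s$. On the good event every contributing proposal for a node is identical and high-confidence, so the consensus returns exactly that value and ${\rm head}$ outputs $\varphi(G)((h,r,t))$ to within $\epsilon$ for every node $t$; one iteration suffices, and relation prediction is identical with $\mathbf{r}^{(I)}$ in place of $\mathbf{v}^{(I)}$.

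The hard part will be the first step, and specifically making the cover-time argument genuinely uniform over $\mathbb{K}_{n,m}$: handling directed multigraphs, the modification of the non-backtracking rule at dead ends, and \emph{disconnected} graphs, where $\varphi$ can depend on global structure and one must argue that the union of per-component covering walks (together, if necessary, with a few update iterations that propagate information through relations shared across components) suffices for each node's consensus to reconstruct the full type $(G,h,r,t)$. The remaining, more mechanical difficulty is certifying the sequence processor can implement the \emph{is-this-a-covering-trace-and-if-so-decode-it} map; this reduces to standard universal-approximation statements for sequence models, since the map is defined on a finite set of records (bounded length, bounded alphabet) and can be interpolated exactly and then approximated to error $\epsilon$.
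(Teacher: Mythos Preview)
Your strategy is the same as the paper's: bound the edge cover time uniformly over $\mathbb{K}_{n,m}$ (finitely many isomorphism classes, each with finite cover time), argue that a long enough walk covers all edges with probability $>1-\delta$, observe that the anonymized record of a covering walk determines $(G,h,r)$ up to isomorphism, and let the sequence processor implement the finite lookup table $\varphi$.

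The paper's execution is simpler than yours in two respects. First, it uses a \emph{single} walk of length $l>C_{n,m}/\delta$ (Markov's inequality on cover time) and simply ignores any additional walks; you instead vary the base walk count and reason about unions of per-component covers. Second, the paper does not ask $f_\theta$ to \emph{detect} whether its input is a covering trace: it just has $f_\theta$ always reconstruct and return $\varphi(H)((h_{q'},r_{q'},\cdot))$ at each position, and accepts that on the bad event (probability $<\delta$) the output is garbage. Your coverage-detection and confidence-weighting machinery is unnecessary once the bad event is absorbed into the $\delta$ budget.

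Where you are more careful than the paper is the disconnected case. The paper's cover-time lemma uses the diameter and shortest paths and so tacitly assumes connectivity; its single-walk argument cannot reach edges outside the walk's component. You correctly flag this as the hard part. Note, however, that your proposed fix via iterated updates through shared relations still leaves a residual gap: if two components share no relation types, no information can pass between them through ${\bf v}$ or ${\bf r}$, yet a link-invariant $\varphi$ may depend on the global isomorphism type. Neither your sketch nor the paper's proof fully closes this; if you want a complete argument you should either restrict to connected $G$ (as the paper implicitly does) or argue that the definition of $\mathbb{K}_{n,m}$ or of the framework excludes such cases.
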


\draft{All proofs are in \Cref{app:proofs}.
To offer an intuition behind the result, we provide a proof sketch.}

\draft{
\emph{Proof sketch.}  A sufficiently long random walk will cover all edges of the graph with high probability. Then, from its anonymized version, assigning unique positional identifiers to every node and relation, one can reconstruct the input graph, up to isomorphism. Thus, with a sufficiently expressive sequence processor, $\flock$  can approximate any link-invariant function.
}

\textbf{Invariance. }
Despite the stochastic nature of our framework, beyond randomized node embeddings \citep{abboudsurprising}, $\flock$ can be provably guaranteed to satisfy probabilistic invariance:

\begin{proposition}
\label{prop:general_invarinace}
    Suppose that the walk sampling protocol $\eta$ is invariant in probability and both the recording protocol $w$ and the consensus protocol $c$ are invariant.
    Then, regardless of the choice of the deterministic sequence processor $f_
    \theta$, the corresponding $\flock$ model is invariant in probability.
\end{proposition}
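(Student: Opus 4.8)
The plan is to show that, under the stated hypotheses, the entire stochastic pipeline $X_\theta$ intertwines with graph isomorphisms, so that $G \simeq G'$ forces $X_\theta(G,q) \probeq X_\theta(G',q')$ where $q'$ is the image of $q$ under the isomorphism. The natural strategy is an induction on the update steps $i = 0, 1, \dots, I$, maintaining the invariant that the joint distribution of the hidden-state pair $({\bf v}^{(i)}, {\bf r}^{(i)})$ is equivariant: if $(\pi,\phi)$ is an isomorphism from $G$ to $G'$, then $(\pi \cdot {\bf v}^{(i)}, \phi \cdot {\bf r}^{(i)})$ on $G$ has the same distribution as $({\bf v}'^{(i)}, {\bf r}'^{(i)})$ on $G'$, where $\pi$ acts on a state table by relabeling its index set. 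The base case is immediate since ${\bf v}^{(0)}, {\bf r}^{(0)}$ are the constant tables ${\bf v}_0, {\bf r}_0$, which are trivially equivariant.

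For the inductive step, I would fix an isomorphism $(\pi,\phi): G \to G'$ and analyze one application of ${\rm update}_{\theta_i}$ component by component, in the order the paper lists them. First, by hypothesis the walk sampler $\eta$ is invariant in probability, so the collection of $n$ (or $3n$, $4n$) walks on $G$, pushed forward through $(\pi,\phi)$, is equidistributed with the walks on $G'$ — here I must also check that the \emph{query-dependent} start-location rules (fixing $v_0 = h$, sampling a random edge, etc.) are themselves equivariant, which holds because $\pi(h)$ is the query head in $G'$ and $(\pi,\phi)$ is a bijection on $E$. Next, since the recording protocol $w$ is invariant, it maps a walk and its $(\pi,\phi)$-image to the \emph{same} graph-agnostic sequence ${\bf z}$ (the anonymization assigns names by order of discovery, which is unaffected by relabeling, and the conditioning terms ${\bf v}(v_s), {\bf 1}_h(v_s)$ are preserved by the inductive equivariance of ${\bf v}^{(i)}$ together with $\pi(h) = h'$). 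Then $f_\theta$, being a deterministic function of ${\bf z}$ alone, produces identical features ${\bf h}$. Finally, the consensus protocol $c$ is invariant, so the state-update tables $(\Delta{\bf v}, \Delta{\bf r})$ it returns on $G$ relabel under $(\pi,\phi)$ to exactly those returned on $G'$; adding them residually to the (inductively equivariant) states preserves equivariance, closing the induction. Applying the deterministic ${\rm head}$ pointwise then yields $\hat{\bf y}$ on $G$ equidistributed with its relabeled counterpart on $G'$, which is precisely invariance in probability of $X_\theta$.

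The main obstacle is bookkeeping the group action correctly through the "aligned" data structures: each feature $\Delta{\bf v}_s$ inside ${\bf h}_j$ is tagged with a node $v_s$ of $G$ via the walk $\eta_j$, and I need the statement "$c$ is invariant" to mean precisely that $c$ commutes with simultaneously relabeling the walks' node/relation tags by $(\pi,\phi)$ and the namespace is shared consistently across all walks. Formalizing this requires spelling out the correspondence between a walk on $G$ and its $(\pi,\phi)$-image on $G'$ (established already in the paragraph following the definition of probabilistic invariance of $\eta$) and verifying that $w$ and $c$ as defined respect it — i.e. the anonymized names, the conditioning indicators, and the softmax-weighted averages are all computed from data that transform correctly. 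A secondary subtlety is that $\eta$'s invariance is stated for a single walk, so I would note that independent sampling of $n$ walks preserves it at the level of the joint distribution (product of identical laws), and likewise that the adaptive walk count from \eqref{eq:harmonic_mean} depends only on $|V|, |E|$, which are isomorphism-invariant. Once these are pinned down, the argument is a routine composition of equivariant maps and an equidistributed stochastic primitive.
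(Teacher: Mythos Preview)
Your proposal is correct and follows essentially the same route as the paper: induction on the update index $i$, with the base case given by the constant initializations ${\bf v}_0,{\bf r}_0$, and the inductive step obtained by chaining the probabilistic invariance of $\eta$, the (deterministic) invariance of $w$, the determinism of $f_\theta$, and the invariance of $c$, then applying the deterministic ${\rm head}$. One small remark: within this proposition the invariance of $\eta$ is a \emph{hypothesis}, so you need not separately verify that the query-dependent start rules or the adaptive walk count are equivariant --- those checks belong to the companion result (Proposition~\ref{prop:specific_invariance}) about the concrete instantiation.
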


\draft{
\emph{Proof sketch.} Since each of these components is invariant (in probability), and invariance of individual component is preserved under composition, we have that $\flock$ is invariant.
}

Moreover, the designs of $\flock$'s components provided earlier in this section satisfy the conditions of Proposition \ref{prop:general_invarinace}.
Therefore, the suggested pipeline is indeed invariant in probability:

\begin{proposition}
\label{prop:specific_invariance}
    Any $\flock$ model with components as outlined in this section, and detailed in \Cref{app:methodology_details} is invariant in probability.
\end{proposition}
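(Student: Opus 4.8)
## Proof plan for Proposition~\ref{prop:specific_invariance}

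The plan is to reduce this statement to Proposition~\ref{prop:general_invarinace}: it suffices to verify that each of the three concrete components of \textsc{Flock} described in this section---the simple non-backtracking walk algorithm, the node-relation anonymization recording protocol $w$, and the softmax-weighted consensus protocol $c$---satisfies the corresponding hypothesis of that proposition. Since Proposition~\ref{prop:general_invarinace} already handles the composition and the iterative residual updates, the only work here is to check the three ingredients individually. Throughout, fix isomorphic KGs $G \simeq H$ in $\mathbb{K}_{n,m}$ with an isomorphism $\mu = (\pi,\phi)$ from $G$ to $H$.

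First I would check that the walk sampling protocol $\eta$ is invariant in probability. The point is that the transition kernel is defined purely in terms of local combinatorial data: at a given state (current node, plus the previously traversed edge for the non-backtracking constraint), the next edge is chosen uniformly among the admissible outgoing/incoming multi-edges, excluding the one just used. Because $\mu$ is a bijection on both $V$ and $R$ that preserves $E$ (and hence edge multiplicities and incidence), it maps the admissible-edge set at any state in $G$ bijectively onto the admissible-edge set at the image state in $H$, so the one-step transition probabilities agree. The start distributions must also be matched: for entity queries we start $n$ walks at $h$ (mapped to $\pi(h)$), $n$ walks from a uniformly random edge then its head (and $\mu$ pushes the uniform distribution on $E$ to the uniform distribution on $E'$), and $n$ walks from a uniformly random node (pushed to uniform on $V'$); the relation-query case adds $n$ walks started at $t\mapsto\pi(t)$ and is analogous. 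By induction on walk length $l$, the law of the tuple of $3n$ (resp.\ $4n$) walks in $G$, pushed forward by $\mu$, equals the law of the walks in $H$, which is exactly the invariance-in-probability condition stated for $\eta$ in this section.

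Next I would verify that $w$ and $c$ are (deterministically) invariant in the sense required. For the recording protocol: $w$ outputs anonymized names assigned in order of first discovery along the walk, together with the conditioning decorations ${\bf v}(v_s)$, ${\bf r}(r_s)$, ${\bf 1}_h(v_s)$, ${\bf 1}_r(r_s)$ and the direction markers $(\cdot)^{-1}$. Given a walk $\eta$ in $G$ and its $\mu$-image $\mu(\eta)$ in $H$, the ``order of first discovery'' is identical because $\pi$ and $\phi$ are bijections, so the anonymized skeleton is literally the same sequence of symbols; the direction markers are preserved because $\mu$ respects head/tail orientation; and the decorations agree provided the hidden-state tables transform correctly, i.e.\ ${\bf v}_H(\pi(v)) = {\bf v}_G(v)$ and ${\bf r}_H(\phi(r)) = {\bf r}_G(r)$ and the query in $H$ is $\mu(q)$---this is exactly the inductive hypothesis that Proposition~\ref{prop:general_invarinace} carries along the residual iteration, so it is available. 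Hence $w(\mu(\eta)) = w(\eta)$ as graph-agnostic records. For the consensus protocol: $c$ groups the per-position proposals $(\Delta{\bf v}_s, a_s)$ by the node $v_s\in V$ they are attached to (and $(\Delta{\bf r}_s,b_s)$ by the relation $r_s$), then forms the multi-head softmax-weighted average. Under $\mu$, the proposal attached to $v_s$ in $G$ is attached to $\pi(v_s)$ in $H$, with the same numerical payload (since the records and hence $f_\theta$'s outputs coincide), so the grouping commutes with $\mu$ and the weighted average is unchanged; therefore $\Delta{\bf v}_H(\pi(v)) = \Delta{\bf v}_G(v)$ and $\Delta{\bf r}_H(\phi(r)) = \Delta{\bf r}_G(r)$, which is the invariance of $c$.

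With all three hypotheses established, Proposition~\ref{prop:general_invarinace} applies verbatim and yields that the full \textsc{Flock} model is invariant in probability, completing the proof. The main obstacle---and the only place where genuine care is needed---is the walk algorithm step: one must pin down precisely what ``state'' the non-backtracking, directed-multigraph random walk conditions on, argue that $\mu$ induces a measure-preserving bijection on the relevant sample space at every step (including subtle multi-edge bookkeeping), and then turn the one-step equality into an equality of the full joint law of all $3n$ or $4n$ walks by induction; the recording and consensus pieces are essentially bookkeeping once one observes that bijections preserve ``order of first appearance'' and grouping-by-label.
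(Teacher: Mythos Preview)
Your proposal is correct and follows essentially the same approach as the paper: reduce to Proposition~\ref{prop:general_invarinace} by separately verifying invariance of the walk algorithm (via start-distribution and one-step transition arguments), the recording protocol (via order-of-first-appearance being preserved by bijections), and the consensus protocol (via grouping-by-label commuting with $\mu$). The paper packages these three checks as a corollary plus two lemmas (Corollary~\ref{cor:invariance_of_random_walk}, Lemmas~\ref{lem:recording_function_invariance} and~\ref{lem:consensus_protocol_invariance}) and then invokes Proposition~\ref{prop:general_invarinace}, exactly as you outline; the only minor imprecision is that the second walk-start scenario samples a relation type first and then a uniform edge of that type rather than a globally uniform edge, but the invariance argument you give adapts immediately.
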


\section{Experiments}
\label{sec:experiment}
We test $\flock$ over a wide range of KGs for both entity and relation predictions, aiming to answer:

\begin{itemize}
    \item[\textbf{Q1.}]
    Can $\flock$ approximate functions that existing KGFMs cannot? 
    \item[\textbf{Q2.}] How does $\flock$ generalize to unseen entities and relations compared to existing KGFMs? 
    \item[\textbf{Q3.}] How does performance scale with the sizes of pretraining graph mix and test-time ensemble?
    \item[\textbf{Q4.}] What is the impact of \draft{choices of each component on the behavior and performance of $\flock$}?
\end{itemize}

We further provide detailed scalability analysis in \Cref{sec:complexity}~and~\Cref{app:scalability_analysis}, and comparisons against current KGFMs augmented with noise injection in \Cref{sec:noise_injection}.
Experiment details and hyperparameters are in \Cref{app:experimental_details}.

\subsection{Synthetic dataset}
\label{sec:synthetic_main}

\begin{wrapfigure}{r}{0.38\textwidth}
  \centering
  \vspace{-1.2\baselineskip} 
  \begin{tikzpicture}[
      scale=1,
      every node/.style={circle, draw, fill=black, inner sep=2pt, line width=0.5pt}, 
      line width=1pt, 
      >=Stealth,
      shorten >=3pt, 
      shorten <=3pt,
      transform shape
  ]
    \node[label={-90:$s$}] (s) at (0,0) {};
    \node (t) at (0,1.5) {};
    \node[label={90:$t_1$}] (a1) at (-1, 1) {};
    \node[label={-90:$t_2$}] (a2) at (-1,-1) {};
    \node (a3) at (-2, 0) {};
    \node (b1) at ( 1, 1) {};
    \node (b2) at ( 1,-1) {};
    \node (b3) at ( 2, 0) {};

    \draw[->, color=solidgreen] (s) to (t);

    \draw[->, color=likeblue]   (s) to (a1);
    \draw[->, color=dislikered] (s) to (a2);
    \draw[->, color=dislikered] (a1) to (a3);
    \draw[->, color=dislikered] (a2) to (a3);

    \draw[->, color=dislikered] (s) to (b1);
    \draw[->, color=likeblue]   (s) to (b2);
    \draw[->, color=likeblue]   (b1) to (b3);
    \draw[->, color=likeblue]   (b2) to (b3);

    \draw[->, color=solidgreen, dashed] (s) to[in=-80, out=170] (a1);
    \draw[->, color=solidgreen, dashed] (s) to[in= 80, out=190] (a2);
  \end{tikzpicture}
  \caption{A KG in \textsc{Petals}. KGFMs with relational invariants must equate \textcolor{likeblue}{$r_1$} and \textcolor{dislikered}{$r_2$}, thus predicting the same scores for dashed queries with \textcolor{solidgreen!80!black}{$r_0$}.}
  \label{fig:synthetic-example}
\end{wrapfigure}

\paragraph{Setup.}
To validate the limitations of KGFMs with node-relation equivariance (\textbf{Q1}), we construct a synthetic benchmark \textsc{Petals}.
It contains $220$ instances, each including: \textbf{(1)} a KG $G = (V,E,R)$ consisting of a `central' node~$s$, a `stem' $T\subset V$ with query relation $r_0$, and multiple cyclic `petals', each `colored' with a different pair of relations in $R \setminus \{r_0\}$,
\textbf{(2)} an entity prediction query $(h,r_0,?)$ with $h\in \{s\}\cup T$,
and \textbf{(3)} two candidate targets $t_1$ and $t_2$ from the same `petal', located at the same distance from $s$.
An example is in \Cref{fig:synthetic-example}. See \Cref{app:synthetic-datasets} for more details.

\textsc{Petals} is designed such that each instance always admits non-trivial automorphisms, meaning that swapping relations occurring in the same `petal' results in an isomorphic KG.
Consequently, any model computing relation invariants will not be able to distinguish between potential links $(s,r_0,t_1)$ and $(s,r_0,t_2)$. 
However, the samples are constructed so that these links are not isomorphic from the graph perspective, making them distinguishable for general link-invariant functions.
We say a model \emph{solves} an instance if it can classify $(s,r_0,t_1)$ as \textsc{true} and $(s,r_0,t_2)$ as \textsc{false}.
For empirical validation, we train $\ultra$~\citep{galkin2023ultra}, $\mgnn(\gF_\text{Path}^{3})$~\citep{huang2025how}, $\trix$~\citep{zhang2024trix}, and $\flock$ from scratch and validate them on the training instances.

\begin{wraptable}{r}{0.32\textwidth}
  \centering
  \footnotesize
  \vspace{-2.2em}
  \caption{\textsc{Petals} accuracies.}
  \vspace{-0.8em}
  \begin{tabular}{lc}
    \toprule
    \textbf{Model} & \textsc{Petals} \\
    \midrule
    \ultra & 50\% \\
    \mgnn($\gF_\text{Path}^{3}$) & 50\% \\
    \trix & 50\% \\
    \midrule
    $\flock$ & \textbf{100\%} \\
    \bottomrule
  \end{tabular}
  \vspace{-2em}
  \label{tab:synthetic-exp-results}
\end{wraptable}

\paragraph{Results.} 
The results are provided in \Cref{tab:synthetic-exp-results}.
As expected, all existing KGFMs relying on learning deterministic relational invariants fail to distinguish between the candidate target triplets completely, achieving $50\%$ accuracy due to random guesses.
In contrast, $\flock$ succeeds on \textit{all} considered instances, displaying that, while remaining invariant in probability, it can differentiate between non-isomorphic links, even with isomorphic relations. 

\subsection{Entity and relation prediction over knowledge graphs}
\label{sec:main_experiment}

\begin{table}[t!]
\footnotesize
\vspace{-0.5cm}
\setlength{\tabcolsep}{4pt}
\centering
\caption{Average entity prediction MRR and Hits@10 over 54 KGs from distinct domains.}
\vspace{-0.2cm}
\label{tab:main-entity-result}
\begin{tabular}{@{}lcccccc||cc||cc@{}}
\toprule
& \multicolumn{2}{c}{\textbf{Inductive $e, r$}} & \multicolumn{2}{c}{\textbf{Inductive $e$}} & \multicolumn{2}{c}{\textbf{Transductive}} & \multicolumn{2}{c}{\textbf{Total Avg}} & \multicolumn{2}{c}{\textbf{Pretrained}} \\
\textbf{Model} & \multicolumn{2}{c}{(23 graphs)} & \multicolumn{2}{c}{(18 graphs)} & \multicolumn{2}{c}{(13 graphs)} & \multicolumn{2}{c}{(54 graphs)} & \multicolumn{2}{c}{(3 graphs)} \\
 \cmidrule(lr){2-3} \cmidrule(lr){4-5} \cmidrule(lr){6-7} \cmidrule(lr){8-9} \cmidrule(lr){10-11}
& \textbf{MRR} & \textbf{H@10} & \textbf{MRR} & \textbf{H@10} & \textbf{MRR} & \textbf{H@10} & \textbf{MRR} & \textbf{H@10}  & \textbf{MRR} & \textbf{H@10} \\
\midrule
$\ultra$ (zero-shot) & 0.345 & 0.513 & 0.431 & 0.566 & 0.312 & 0.458 & 0.366 & 0.518 & - & -\\
$\trix$ (zero-shot) & 0.368 & 0.540 & 0.455 & 0.592 & 0.339 & 0.500 & 0.390 & 0.548 & - & -\\
$\flock$ (zero-shot) &  \textbf{0.369} & \textbf{0.554} & \textbf{0.456} & \textbf{0.604} & \textbf{0.340} & \textbf{0.509} & \textbf{0.391} & \textbf{0.560} & - & -\\
\midrule
$\ultra$ (finetuned) & 0.397 & 0.556 & 0.440 & 0.582 & 0.379 & 0.543 & 0.408 & 0.562 & 0.407 & \textbf{0.568}\\
$\trix$ (finetuned) & 0.401 & 0.556 & 0.459 & 0.595 & \textbf{0.390} & \textbf{0.558} & 0.418 & 0.569 & \textbf{0.415} & 0.564\\
$\flock$ (finetuned) & \textbf{0.417} & \textbf{0.576} & \textbf{0.473} & \textbf{0.619} & 0.383 & 0.544 & \textbf{0.427} & \textbf{0.582} & \textbf{0.415} & 0.561\\
\bottomrule
\end{tabular}
\vspace{-0.1cm}
\end{table}

\begin{table*}[t!]
\footnotesize
\setlength{\tabcolsep}{4pt}
\centering
\caption{Average relation prediction MRR and Hits@1 over 54 KGs from distinct domains.}
\vspace{-0.2cm}
\label{tab:main_relation_prediction}
\begin{tabular}{@{}lcccccc||cc||cc@{}}
\toprule
& \multicolumn{2}{c}{\textbf{Inductive $e, r$}} & \multicolumn{2}{c}{\textbf{Inductive $e$}} & \multicolumn{2}{c}{\textbf{Transductive}} & \multicolumn{2}{c}{\textbf{Total Avg}} & \multicolumn{2}{c}{\textbf{Pretrained}}\\
\textbf{Model} & \multicolumn{2}{c}{(23 graphs)} & \multicolumn{2}{c}{(18 graphs)} & \multicolumn{2}{c}{(13 graphs)} & \multicolumn{2}{c}{(54 graphs)}  & \multicolumn{2}{c}{(3 graphs)} \\
\cmidrule(lr){2-3} \cmidrule(lr){4-5} \cmidrule(lr){6-7} \cmidrule(lr){8-9}\cmidrule(lr){10-11}
& \textbf{MRR} & \textbf{H@1} & \textbf{MRR} & \textbf{H@1} & \textbf{MRR} & \textbf{H@1} & \textbf{MRR} & \textbf{H@1}  & \textbf{MRR} & \textbf{H@1} \\
\midrule
$\ultra$ (zero-shot) & 0.785 & 0.691 & 0.714 & 0.590 & 0.629 & 0.507 & 0.724 & 0.613 & - & - \\
$\trix$ (zero-shot) & 0.842 & 0.770 & 0.756 & 0.611 & 0.752 & 0.647 & 0.792 & 0.687 & - & - \\
$\flock$ (zero-shot) & \textbf{0.898} & \textbf{0.846} & \textbf{0.864} & \textbf{0.782} & \textbf{0.873} & \textbf{0.813} & \textbf{0.881} & \textbf{0.817} & - & - \\
\midrule
$\ultra$ (finetuned) & 0.823 & 0.741 & 0.716 & 0.591 & 0.707 & 0.608 & 0.759 & 0.659 & 0.876 & 0.817 \\
$\trix$ (finetuned) & 0.850 & 0.785 & 0.759 & 0.615 & 0.785 & 0.693 & 0.804 & 0.706 & 0.879 & 0.797 \\
$\flock$ (finetuned) & \textbf{0.929} & \textbf{0.889} & \textbf{0.887} & \textbf{0.808} & \textbf{0.897} & \textbf{0.844} & \textbf{0.907} & \textbf{0.851} & \textbf{0.977} & \textbf{0.959} \\
\bottomrule
\end{tabular}
\vspace{-0.2cm}
\end{table*}

\paragraph{Setup.} To answer \textbf{Q2}, we follow the protocol of \citet{galkin2023ultra,zhang2024trix} and pretrain $\flock$ on FB15k-237~\citep{FB15k237}, WN18RR~\citep{Dettmers2018FB}, and CoDEx Medium~\citep{safavi-koutra-2020-codex}. We then evaluate its zero-shot and finetuned inference performance with the test set of 54 KGs (see \Cref{app:experimental_details} for details). These KGs are extracted from diverse domains across three settings: inductive on nodes and relations (\textbf{Inductive} $e,r$), inductive on nodes (\textbf{Inductive} $e$), and \textbf{transductive}. Note that these settings differ only during finetuning setup; in zero-shot setup, all entities and relations are unseen.
We choose state-of-the-art KGFMs $\ultra$ \citep{galkin2023ultra} and $\trix$ \citep{zhang2024trix} as baselines, as they are pretrained on the same KGs, to ensure a fair comparison.
%
For evaluation, we use the filtered ranking protocol \citep{brodes2013transe}, reporting mean reciprocal rank (MRR) and Hits@10 for entity prediction, and Hits@1 for relation prediction, as some KGs have fewer than 10 relations. Per-dataset results are in \Cref{app:experimental_details}.

\paragraph{Entity prediction.}
\Cref{tab:main-entity-result} shows the entity prediction results.
In the zero-shot setting, $\flock$ consistently outperforms $\ultra$ and $\trix$, demonstrating strong generalization across diverse domains. Notably, on \emph{Metafam} \citep{zhou2023multitaskperspetivelinkprediction}, a dataset designed to challenge models with conflicting and compositional relational
patterns, $\flock$ roughly doubles MRR over $\ultra$ and achieves $\approx$ 40\% gain in MRR over $\trix$ in the zero-shot setting.
\draft{We find that $\flock$ distinguishes structurally similar but semantically conflicting relations while $\ultra$ fails (\Cref{app:case_study}), explaining the gain.}
These findings align with our hypothesis that probabilistic equivariance improves expressivity without sacrificing generalization. In the finetuning setting, we observe a similar pattern: $\flock$ maintains a consistent improvement over all datasets except transductive ones, where KGs are generally larger. We hypothesize that the gap stems from walk coverages. Unlike $\ultra$ and $\trix$, whose message passing guarantees a full neighborhood coverage around the queried node, $\flock$ relies on sampled walks that may not fully cover target nodes of interest.
\draft{We find that $\flock$ favors sparse KGs (\Cref{app:sparsity}), consistent with this hypothesis as random walks cover sparse graphs faster.}

\paragraph{Relation prediction.}
\Cref{tab:main_relation_prediction} presents the relation prediction results. $\flock$ substantially outperforms all existing KGFMs across all categories in the zero-shot setting, achieving an $11.2\%$ relative improvement in MRR compared to the best baseline $\trix$. $\flock$ shows a further performance boost of $12.8\%$ in the finetuned setting. We attribute this huge gain to $\flock$'s joint encoding of entities and relations during the updating step via the sequence encoder, while existing KGFMs, $\ultra$ and $\trix$, have separate update steps for entities and relations. This joint update mechanism yields more holistic representations of both entities and relations with minimal information loss.

\subsection{Scaling analysis}
\label{sec:scaling_main}

\paragraph{Size of pretraining graph mix.}
To assess whether $\flock$ benefits from more pretraining graph and data (\textbf{Q3}), we follow the setup of \citet{galkin2023ultra}, and pretrain $\flock$ on an increasing number of KGs. We then evaluate them on all $41$ inductive benchmarks for a fair comparison. We present the detailed pretraining graph mix in \Cref{tab:pretrain-mix-table}. As shown in \Cref{fig:pretrain_scaling}, $\flock$’s generalization improves consistently as the number of pretraining KGs increases, exhibiting clear scaling behavior, which is a core characteristic of being a foundation model.

\begin{figure}[t]
  \centering
  \begin{subfigure}[t]{0.45\linewidth}
    \centering
    \includegraphics[width=\linewidth]{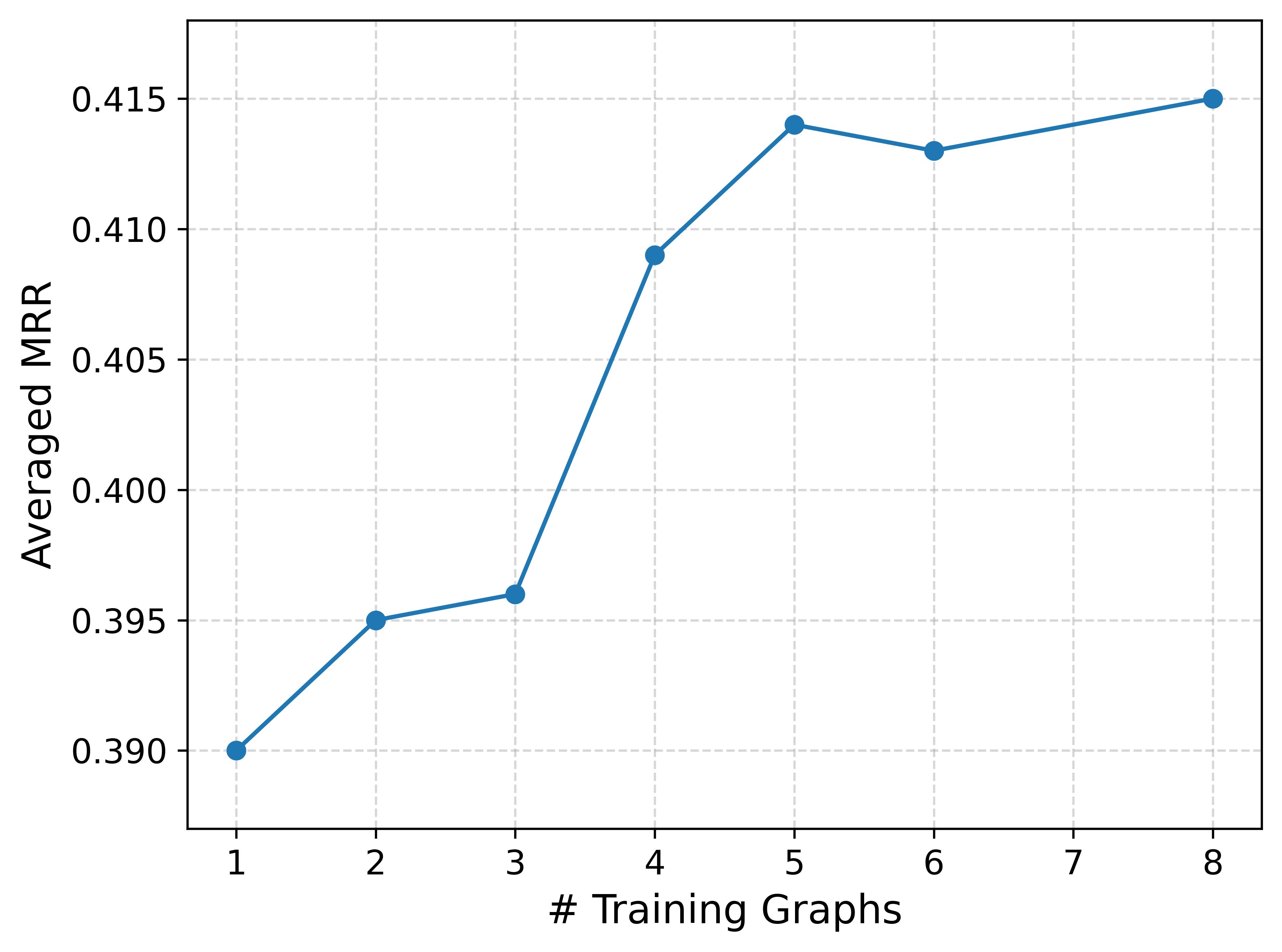}
    \caption{Zero-shot MRR vs. \#pretraining graphs.}
    \label{fig:pretrain_scaling}
  \end{subfigure}\hspace{0.5cm}
  \begin{subfigure}[t]{0.45\linewidth}
    \centering
    \includegraphics[width=\linewidth]{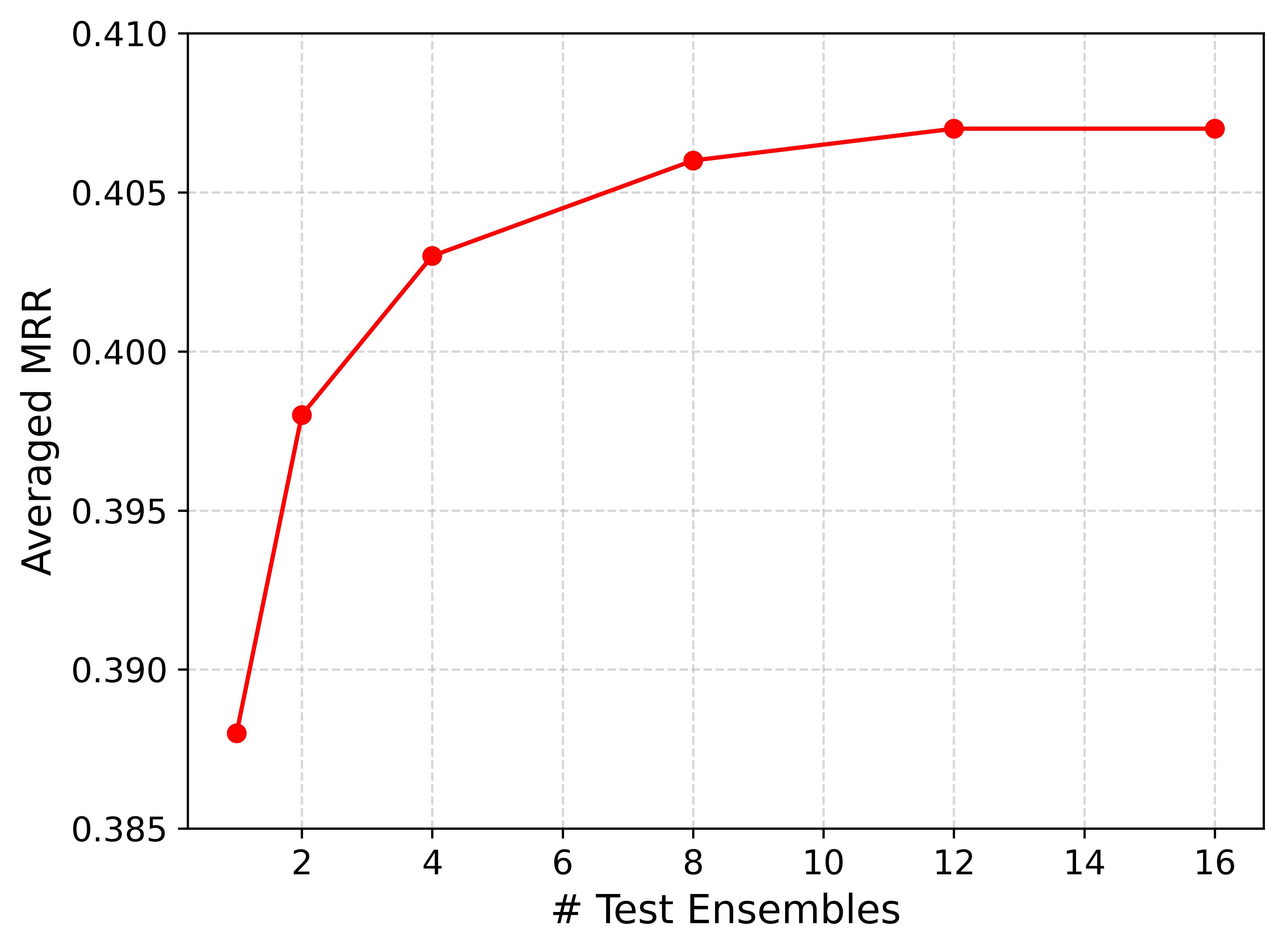}
    \caption{Zero-shot MRR vs. \#ensembled predictions.}
    \label{fig:ensemble_scaling}
  \end{subfigure}
  \caption{Pretraining and test-time scaling of $\flock$ on 41 inductive KG datasets.}
  \label{fig:pretrain_testing_scaling}
\end{figure}

\textbf{Number of ensembled predictions.}
To assess how test-time ensemble size $P$ affects performance (\textbf{Q3}), we take the pretrained $\flock$ and run zero-shot entity prediction on 41 inductive KGs by increasing the number of ensembled passes. As shown in \Cref{fig:ensemble_scaling}, performance improves from 1 to 8 passes and then begins to plateau beyond 12. This indicates a clear scaling behavior: larger ensembles provide a more accurate estimate of the underlying node and relation distributions.

\subsection{\draft{Ablation studies}}

\begin{table}[t]
\centering
\small
\caption{Ablation study of adaptive test-time walks with zero-shot entity prediction task. We show the average number of entities $|V|$, triples $|E|$, base walks $n$, MRR, and Hits@10.}
\label{tab:flock-adaptive}
\begin{tabular}{l ccccc ccccc}
\toprule
\multirow{2}{*}{\textbf{Dataset split}} & \multicolumn{2}{c}{\textbf{Statistics}} & \multicolumn{3}{c}{\flock} & \multicolumn{3}{c}{$\flock$ w/o Adap. \Cref{eq:harmonic_mean} } \\
\cmidrule(lr){2-3} \cmidrule(lr){4-6}  \cmidrule(lr){7-9}
& $|V|$ & $|E|$ & \(n\) & MRR & Hits@10  & \(n\) & MRR & Hits@10 \\
\midrule
\textbf{Inductive} \(e,r\) & 5{,}303  & 10{,}838 & 28.40 & \textbf{0.369} & \textbf{0.554} & 128 & 0.357 & 0.551 \\
\textbf{Inductive} \(e\)  & 7{,}578 & 29{,}090 & 18.08   & \textbf{0.456} & \textbf{0.604} & 128 & 0.441 & 0.596\\
\textbf{Transductive}   & 47{,}810 & 387{,}491  & 214.15 & \textbf{0.340} & \textbf{0.509} & 128 & 0.334 & 0.493\\
\bottomrule
\end{tabular}
\end{table}

\begin{table}[t]
\vspace{-0.2cm}
\centering
\small
\caption{\draft{Detailed ablation study with zero-shot entity prediction task.
For the transductive split, considering resource limits, we test NELL995, NELL23k, WDsinger, ConceptNet100k, and YAGO310.}}
\vspace{-0.2cm}
\label{tab:ablation}
\begin{adjustbox}{max width=\textwidth}
\begin{tabular}{@{}lcccccc||cc@{}}
\toprule
& \multicolumn{2}{c}{{\textbf{Inductive} $e, r$}} & \multicolumn{2}{c}{{\textbf{Inductive} $e$}} & \multicolumn{2}{c}{{\textbf{Transductive}}} & \multicolumn{2}{c}{\textbf{Total Avg}} \\
\textbf{Model} & \multicolumn{2}{c}{(23 graphs)} & \multicolumn{2}{c}{(18 graphs)} & \multicolumn{2}{c}{(5 graphs)} & \multicolumn{2}{c}{(46 graphs)}  \\
\cmidrule(lr){2-3} \cmidrule(lr){4-5} \cmidrule(lr){6-7} \cmidrule(lr){8-9} 
& {\textbf{MRR}} & \textbf{Hits@10} & {\textbf{MRR}} & \textbf{Hits@10} & {\textbf{MRR}} & {\textbf{Hits@10}} & \textbf{MRR} & \textbf{Hits@10} \\
\midrule
$\flock$ ($\ell = 128$) & {0.369} & {0.554} & {0.456} & {0.604} & {0.360} & {0.542} & \textbf{0.395} & \textbf{0.567} \\
\midrule
w/o non-backtracking & 0.370 & 0.549 & 0.456 & 0.605 & 0.334 & 0.499 & 0.386 & 0.551 \\
$\ell = 64$ & 0.372 & 0.556 & 0.459 & 0.606 & 0.351 & 0.534 & 0.394 & 0.565 \\
$\ell = 256$ & 0.360 & 0.548 & 0.458 & 0.605 & 0.338 & 0.508 & 0.385 & 0.553 \\
w/o diverse starts & 0.360 & 0.539 & 0.448 & 0.596 & 0.319 & 0.488 & 0.385 & 0.553 \\
transformer $f_\theta$ & 0.356 & 0.542 & 0.410 & 0.591 & 0.312 & 0.477 & 0.359 & 0.537 \\
w/o weighted consensus & 0.351 & 0.526 & 0.448 & 0.593 & 0.361 & 0.515 & 0.387 & 0.545 \\
\bottomrule
\end{tabular}
\end{adjustbox}
\vspace{-0.2cm}
\end{table}

\paragraph{Setup.} To understand the impact of each design choice on the performance and behavior of $\flock$~(\textbf{Q4}), we conduct a series of ablation studies spanning random walks, sequence processor and the consensus protocol, in the entity prediction task in the zero-shot setting.

\textbf{\draft{Adaptive test-time walks.}}
Recall that we employ \emph{test-time adaptation of walk counts}, which adaptively selects the base walk count $n$ based on the graph size, computed via the harmonic-mean rule shown in \Cref{eq:harmonic_mean} during inference.
\Cref{tab:flock-adaptive} compares this adaptive setting with a fixed setting that uses $128$ base walks per sample for all datasets, matching the pretraining setup ($n_{\text{train}} = 128$).
As expected, the average selected base count $n$ is smaller on both inductive splits and larger on the transductive split, yet the adaptive mechanism improves performance across all settings. 
This is consistent with the intuition that adaptive $n$ scales up walks on larger KGs to improve coverage while allocating fewer walks on smaller KGs to reduce redundant visits;
$\flock$ maintains comparable \emph{visiting rates} and \emph{coverage} to those seen during pretraining, thereby producing representations closer to the pretraining distribution and resulting in consistent performance gains.

\textbf{Non-backtracking walks.}
$\flock$ employs \emph{non-backtracking} uniform random walk, which has an effect of faster exploration and coverage of distant regions \citep{alon2007non}.
In \Cref{tab:ablation}, we compare this with uniform walk that may backtrack and hence is slower in global exploration.
While non-backtracking does not alter results much on inductive splits, it significantly improves performance on the transductive split.
This is consistent with the idea that improving coverage especially benefits performances on large KGs, which $\flock$ achieves via non-backtracking.

\textbf{Walk lengths.}
$\flock$ uses random walks of length $\ell=128$, a choice made by finding the lowest~$\ell$ reliably visiting target node and relation on various KGs.
\Cref{tab:ablation} compares this with shorter and longer walks by a factor of two.
As expected from coverage, shorter walks show degraded results on the transductive split with large KGs.
Longer walks are overall worse, which is explained by higher learning complexity of the sequence processor that has a small hidden dimension (64) for scalability.
$\flock$ finds a balance of coverage and learnability, achieving robust results on diverse KGs.

\textbf{Diverse starting locations of walks.}
We recall that $\flock$ uses a \emph{diversification} strategy of starting locations of walks, with $n$ walks from the query node, $n$ walks from random relation, and $n$ walks from random node, adding up to $3n$ walks capturing both local context near query and global information of KG.
\Cref{tab:ablation} compares this against all $3n$ walks starting at the query node.
As expected, this causes degradations on all splits, showing the benefit of using both local and global information.

\textbf{Sequence processor.}
$\flock$ uses a sequence processor $f_\theta$ with bidirectional GRU.
In \Cref{tab:ablation}, we compare this against a transformer $f_\theta$ with a similar SwiGLU-RMSNorm architecture \citep{meta2024llama3} and parameter count.
This alternative does not deliver good results, which is explained by the restrictions on model scales that are enforced to scale to large KGs.
$\flock$ benefits from reasoning efficiency of GRU in limited parameter regime, gaining good performance and scalability together.

\textbf{Consensus protocol.}
$\flock$ uses softmax-weighted averaging to pool sequence processor outputs into state updates for nodes and relations, under the intuition that this can suppress uninformative proposals from the sequence processor better than simple, unweighted averaging.
\Cref{tab:ablation} provides a comparison, showing that weighted consensus outperforms the unweighted counterpart.
This verifies our intuition on how the design of the consensus protocol strengthens $\flock$.

\section{Conclusions}
We introduced $\flock$, a knowledge graph foundation model that respects probabilistic node-relation equivariance. $\flock$ iteratively samples query-conditioned random walks, records encountered nodes and relations via a recording protocol, and relies on a sequence processor and consensus protocol to obtain node and relation representations. We evaluate $\flock$ on 54 KGs across different domains for both entity and relation prediction, demonstrating superior zero-shot and finetuned performance. We further construct a synthetic dataset $\textsc{petals}$ to validate our theoretical findings. 
One limitation is scalability (\Cref{app:scalability_analysis}): ensuring coverage of the sampled random walk in large KGs requires many long walks, which can quickly become computationally infeasible. A future direction is to develop approximation strategies~\draft{\citep{chamberlaingraph, lacki2020walking}} that reduce the cost of random walk sampling while retaining $\flock$'s downstream performance.
\draft{Another avenue for future work is studying the families of approximable functions when the walk lengths are restricted, for example based on connections to subgraph-based reconstructions \citep{cotta2021reconstruction}.}

\section*{Acknowledgment}
This work was in part supported by the National Research Foundation of Korea (RS-2024-00351212 and RS-2024-00436165) and the Institute of Information \& Communications Technology Planning \& Evaluation (IITP) (RS-2024-00509279, RS-2022-II220926, RS-2022-II220959, and RS-2019-II190075) funded by the Korean government (MSIT). Computational resources were in part provided by the “HPC support” project funded by MSIT and NIPA. MB is supported by EPSRC Turing AI World-Leading Research Fellowship No. EP/X040062/1 and EPSRC AI Hub on Mathematical Foundations of Intelligence: An ``Erlangen Programme'' for AI No. EP/Y028872/1.

\section*{Ethics statement}
\label{app:ethics_statement}

This work introduces a probabilistic framework for knowledge graph foundation models, aiming to improve the generalization in zero-shot link prediction. Our contributions are methodological and theoretical, with evaluations performed on publicly available benchmarks and a synthetic dataset designed to validate our theoretical results. We do not anticipate any direct ethical risks associated with this approach. We acknowledge and adhere to the \href{https://iclr.cc/public/CodeOfEthics}{ICLR Code of Ethics}.

\section*{Reproducibility statement}
\label{app:reproducibility_statement}

We make every effort to ensure the reproducibility of the experiments in our paper.  We release a codebase at \url{https://github.com/jw9730/flock} with training and evaluation scripts for \flock, including pretraining scripts and checkpoints on FB15k-237, WN18RR, and CoDEx Medium, evaluation over 54 KGs, and the synthetic dataset \textsc{Petals} generator. All architectural details needed to reimplement the method, including the random-walk sampler, recording protocol, sequence processor, and consensus protocol, are specified in \Cref{app:methodology_details}, and our theoretical claims are supported with complete proofs in \Cref{app:proofs}. We additionally include further experimental details in \Cref{app:experimental_details}.

\bibliography{iclr2026_conference}
\bibliographystyle{iclr2026_conference}

\appendix

\Crefname{appendix}{Appendix}{Appendices}
\crefalias{section}{appendix}
\crefalias{subsection}{appendix} 

\clearpage

\section{Methodology - details}
\label{app:methodology_details}

In this section, we expand on the descriptions of individual components of $\flock$ summarized in \Cref{sec:methods}: the random walk algorithm, the recording protocol, the sequence processor, and the consensus protocol.

\subsection{Uniform random walk}
\label{sec:random_walk_details}

Let $G=(V,E,R)$ be a knowledge graph, and let $\ell$ be the length of random walks.
For each node $v\in V$, we will denote by $\mathcal{N}(v)$ the set of neighbors of $v$:
\[
\mathcal{N}(v) = \{w \in V : \exists r\in R . (v,r,w)\in E \lor (w,r,v) \in E\}
\]
and by $E{(v,w)}$, the set of relational edges from $v$ to $w$ (allowing for the inverse direction):
\small
\[\begin{aligned}
E{(v,w)} =& \{(v,r,w) \in R\times\{v\}\times\{w\} : (v,r,w)\in E\} \\&\cup \{(v,r^{-1},w) \in R^{-1}\times\{v\}\times\{w\} : (w,r,v)\in E\}
\end{aligned}
\]
\normalsize
where $R^{-1}$ is the set symbolizing the inverses of relation types in $R$.
The uniform random walk with no backtracking $\eta(G,\ell)$ of length $\ell$ over $G$, represented as:
\[
V_0 \xrightarrow{R_1} V_1 \xrightarrow{R_2} \cdots \xrightarrow{R_\ell} V_\ell
\]
is a second-order Markov process that follows the rules:
\begin{equation}
\label{eq:uniform_random_walk_markov}
\begin{aligned}
    \sP(V_{i+2} = v \mid V_{i+1}=w, V_{i}=u) &= \begin{cases}
        0 &\text{if } v=u \text{ and } |\mathcal{N}_w| > 1 \\
        1 &\text{if } v=u \text{ and } \mathcal{N}_w = \{u\} \\
        \frac{1}{|\mathcal{N}_w|-1} &\text{if } v\neq u \text{ and } v\in\mathcal{N}_w \\
        0 &\text{if } v \notin\mathcal{N}_w
    \end{cases}\\
    \sP(R_{j+1} = r \mid V_{j+1}=w, V_j = u) &= \begin{cases}
        \frac{1}{|E_{(w,u)}|} &\text{if } r(w,u) \in E_{(w,u)} \\
        0 &\text{otherwise}
    \end{cases}
\end{aligned}
\end{equation}

for all $i\geq 0, j\geq 1$. Intuitively, at each step of the walk, we first select a neighbor (except for the vertex chosen one step ago) of the current node uniformly at random (disregarding multi-edges and edge directions), and then sample an edge between these two nodes uniformly at random. If the current node has only one neighbor, we are forced to return to it.

The initial conditions depend on the selected scenario.
Given a query $q=(h,r,?)$ over $G$, we can describe the process of selecting the first step $V_0\xrightarrow{R_1}V_1$ as setting either (each with probability $\frac{1}{3}$):

\begin{itemize}
    \item $V_0=h$ and selecting the first step uniformly at random as described above, meaning:
    \[
    \begin{aligned}
            \sP(V_{1} = v \mid V_{0}=h) &= \begin{cases}
        \frac{1}{|\mathcal{N}_h|} &\text{if } v\in\mathcal{N}_h \\
        0 &\text{if } v \notin\mathcal{N}_h
    \end{cases}\\
    \sP(R_{1} = r \mid V_{1}=w) &= \begin{cases}
        \frac{1}{|E_{(w,h)}|} &\text{if } r(w,h) \in E_{(w,h)} \\
        0 &\text{otherwise}
        \end{cases}
    \end{aligned}
    \]
    \item setting $R_1=r$ and selecting $V_0\xrightarrow{R_1}V_1$ uniformly at random from edges with type $r$.
    \item choosing $V_0$ uniformly at random, and then sampling the first step at random as well:
    \[
        \begin{aligned}
        \sP(V_0 = w)&= \frac{1}{|V|}\\
            \sP(V_{1} = w \mid V_{0}=v) &= \begin{cases}
        \frac{1}{|\mathcal{N}_w|} &\text{if } v\in\mathcal{N}_w \\
        0 &\text{if } v \notin\mathcal{N}_w
    \end{cases}\\
    \sP(R_{1} = r \mid V_{1}=v, V_0=w) &= \begin{cases}
        \frac{1}{|E_{(w,v)}|} &\text{if } r(w,v) \in E_{(w,v)} \\
        0 &\text{otherwise}
        \end{cases}
    \end{aligned}
    \]
\end{itemize}

For the relation prediction objective, we add one more scenario, similar to the first one described above, but substituting $V_0=t$ instead.
For that problem, each scenario is chosen with probability $\frac{1}{4}$.

\subsection{Recording function}
\label{sec:recording_protocol_details}
Given a KG $G=(V,E,R)$, a query $q=(h_q,r_q,?)$, a walk $\bar\eta = v_0 \xrightarrow{r_1} v_1 \xrightarrow{r_2} \dots \xrightarrow{r_\ell} v_\ell$ of length $\ell$ over $G$, and a set of embeddings $\mathbf{v}$ of nodes $V$ and $\mathbf{r}$ of relations $R$, our recording function $w$ first splits the walk into a sequence of $\ell+1$ steps:
\[(r_0, v_0), (r_1, v_1), \dots (r_\ell, v_\ell)\]
with $r_0 = r_\varnothing$ being a special marker for no relation. Each step $(r_i, v_i)$ is transformed into a 7-tuple:
\[
S_i = \left(\operatorname{id}_V(v_i; \bar\eta), \operatorname{id}_R(r_i; \bar\eta), \operatorname{dir}_i, \delta_{v_i=h_q}, \delta_{r_i=r_q}, \mathbf{v}(v_i), \mathbf{r}(r_i)\right)
\]
where:
\begin{itemize}
    \item $\operatorname{id}_V(v_i; \bar\eta)$ and $\operatorname{id}_R(r_i; \bar\eta)$ are the anonymized id's of the node $v_i$ and relation $r_i$, evaluated as:
    \[
    \begin{aligned}
        \operatorname{id}_V(v_i; \bar\eta) &= \argmin_t [v_t = v_i]\\
        \operatorname{id}_R(r_i; \bar\eta) &= \argmin_t \left[r_t = r_i \lor r_t = r_i^{-1}\right]
    \end{aligned}
    \]
    \item $\operatorname{dir}_i$ denotes the direction in which we follow the edge. We set $\operatorname{dir}_i\!=\!0$ if $r_i\!\in\!R$ (the edge is traversed from head to tail) and $\operatorname{dir}_i = 1$ if $r_i\in R^{-1}$ (the edge is taken in the reverse direction).
    \item $\delta_{v_i=h_q}$ and $\delta_{r_i\sim r_q}$ are binary flags representing whether the current node $v_i$ is the query head $v_q$ and if the relation $r_i$ is either the queried relation $r_q$ or its inverse $r_q^{-1}$.
    \item $\mathbf{v}(v_i),  \mathbf{r}(r_i)$ are the embeddings of $v_i$ and $r_i$, respectively.
\end{itemize}
The output of $w$ for $\bar\eta$ given $G, q, \mathbf{v}, \mathbf{r}$ is then:
\[
w(\bar\eta ; G,q,\mathbf{v},\mathbf{r}) = (S_0, S_1, \dots, S_\ell)
\]

\subsection{Sequence processor}

Once the sampled walks are anonymized by the recording protocol $w$, the output for each walk $\bar\eta_i$:
\[
w(\bar\eta ; G,q,\mathbf{v},\mathbf{r}) = (S_0, S_1, \dots, S_\ell)
\]
is passed through the sequence processor $f_\theta$, parametrized by the following modules:
\begin{itemize}
    \item $\mathbf{A}_v, \mathbf{A}_r \in \mathbb{R}^{(\ell+1)\times d}$: embedding tables for anonymized vertices and relations, respectively,
    \item $\mathbf{D} \in \mathbb{R}^{2\times d}$: look-up table for the direction embedding,
    \item $\mathbf{Q}_h, \mathbf{Q}_r \in \mathbb{R}^{2\times d}$: embedding tables for the binary query labels,
    \item $\mathbf{V}, \mathbf{R} : \mathbb{R}^{d}\rightarrow \mathbb{R}^{d}$: linear maps applied to the passed embeddings of vertices and relations,
    \item $\mathbf{\Omega}$: a bi-directional GRU~\citep{cho2014properties} cell equipped with RMSNorm~\citep{zhang2019root} and SwiGLU~\citep{shazeer2020glu} activation function.
\end{itemize}
For each step, encoding $S_i$ of the form:
\[
S_i = \left(\operatorname{id}_V(v_i; \bar\eta_i), \operatorname{id}_R(r_i; \bar\eta_i), \operatorname{dir}_i, \delta_{v_i=h_q}, \delta_{r_i=r_q}, \mathbf{v}(v_i), \mathbf{r}(r_i)\right)
\]
we evaluate the processed embedding $\mathbf{c_i}$ of $S_i$ as a sum of the corresponding encoded components:
\[
\begin{aligned}
\mathbf{c}_i =& \mathbf{A}_v(\operatorname{id}_V(v_i;\bar\eta_i)) + \mathbf{A}_r(\operatorname{id}_R(r_i;\bar\eta_i)) + \mathbf{D}(\operatorname{dir}_i)\\
&+\mathbf{Q_h}(\delta_{v_i=h_q}) + \mathbf{Q}_r(\delta_{r_i=r_q}) + \mathbf{V}(\mathbf{v}(v_i)) + \mathbf{R}(\mathbf{r}(r_i))
\end{aligned}
\]

These are then passed to the GRU cell $\mathbf\Omega$, which fuses the features across the whole walk and produces multi-head embeddings of vertices and relations, as well as the associated weights:
\[
\left(\mathbf{s}^{(V)}_i, \mathbf{s}^{(R)}_i, \mathbf{a}^{(V)}_i, \mathbf{a}^{(R)}_i\right) 
= \mathbf{\Omega} ([\mathbf{c}_0, \mathbf{c}_1, \dots, \mathbf{c}_\ell])
\]
where $\mathbf{s}^{(V)}_i,\mathbf{s}^{(R)}_i \in \mathbb{R}^{(\ell+1) \times h \times d_h}$ and $\mathbf{a}^{(V)}_i, \mathbf{a}^{(R)}_i \in \mathbb{R}^{(\ell+1)\times h}$.
Stacking all $N$ of them gives us the final output of the sequence processor.

\subsection{Consensus protocol}
\label{sec:consensus_protocol_details}
Given walks $\bar\eta_{1:N}$ over $G=(V,E,R)$ and the outputs $\mathbf{s}^{(V)},\mathbf{s}^{(R)}, \mathbf{a}^{(V)},\mathbf{a}^{(R)}$ of the sequence processor, the consensus protocol $c$ aggregates the signal for each node by evaluating a weighted sum over the appearances of this node across the walks. More precisely, for each node $v\in V$, we find all pairs of indices $(i,j)$, such that the $j^\text{th}$ node visited in $\bar\eta_i$ was $v$, and concatenate the weighted sums of embeddings produced by each head, with weights exponentially proportional to the scores $\mathbf{a}^{(V)}$:
\[
\Delta \mathbf{v}(v) = \bigoplus_{k=1}^{h} \frac{\sum\limits_{\substack{i,j\\\bar\eta_i(v_j) = v}} \exp\left(\mathbf{a}^{(V)}_{i,j,k}\right)\cdot \mathbf{s}^{(V)}_{i,j,k}}{\sum\limits_{\substack{i,j\\\bar\eta_i(v_j) = v}} \exp\left(\mathbf{a}^{(V)}_{i,j,k}\right)}
\]
Similarly, we aggregate the encodings for relations, considering their occurrences in both directions:
\[
\Delta \mathbf{r}(r) = \bigoplus_{k=1}^{h} \frac{\sum\limits_{\substack{i,j\\\bar\eta_i(r_j) \in \{r, r^{-1}\}}} \exp\left(\mathbf{a}^{(R)}_{i,j,k}\right)\cdot \mathbf{s}^{(R)}_{i,j,k}}{\sum\limits_{\substack{i,j\\\bar\eta_i(r_j) \in \{r, r^{-1}\}}} \exp\left(\mathbf{a}^{(R)}_{i,j,k}\right)}
\]
In both formulas above, $\bigoplus$ denotes concatenation. 

Additionally, we say that a consensus protocol $c$ is \emph{invariant} if for any pair of isomorphic KGs $G=(V,E,R)$ and $H = (V',E',R')$, any isomorphism $\mu=(\pi,\phi)$  from $G$ to $H$, any list of embeddings $\mathbf{h}_{1:N}$ with $\mathbf{h}_i\in\mathbb{R}^d$, and any sequence of sampled walks $\bar{\eta}_{1:N}$ over $G$, the outputs
\[
\begin{aligned}
(\Delta \mathbf{v}, \Delta \mathbf{r}) &= c(\mathbf{h}_{1:N}, \bar{\eta}_{1:N}) \\  
(\Delta \mathbf{v}', \Delta \mathbf{r}') &= c(\mathbf{h}_{1:N}, \mu(\bar{\eta}_{1:N}))   
\end{aligned}
\]
satisfy:
\[
\begin{aligned}
    \Delta\mathbf{v}(v) &= \Delta\mathbf{v}'(\pi(v)) \quad &\forall v\in V\\
    \Delta\mathbf{r}(r) &= \Delta\mathbf{r}'(\phi(r)) \quad &\forall r\in R\\
\end{aligned}
\]

\newpage

\section{Proofs}
\label{app:proofs}

\subsection{Expressivity}

The main proposition of this section formalizes the fact that $\flock$ can approximate any link-invariant function over fixed-size knowledge graphs in probability.
Intuitively, when the length of the sampled walks $\ell$ becomes higher, the probability of a single walk witnessing all the edges grows to 1.
Once a walk visits all the edges, a sufficiently powerful sequence processor can derive the whole graph structure from its anonymized representation, recreating the graph in its entirety, up to isomorphism.
Then, the processor can return the value of the approximated function for that graph.

We start by showing that the edge cover time $C_E(\cdot)$ of graphs in $K_{n,m}$ is bounded:

\begin{lemma}
    \label{lem:cover_time_is_finite}
    Let $G\in\mathbb{K}_{n,m}$ for some $n,m$. The edge cover time $C_E(G)$ of $G$, using the algorithm from \Cref{sec:random_walk_details}, is finite.
\end{lemma}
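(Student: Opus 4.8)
\textbf{Proof plan for Lemma \ref{lem:cover_time_is_finite}.}

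The plan is to show that the edge cover time $C_E(G)$ is finite by arguing that, for a graph $G\in\mathbb{K}_{n,m}$, the random walk described in \Cref{sec:random_walk_details} has a strictly positive probability of covering all edges within a bounded number of steps, so that the tail of the cover-time distribution decays geometrically and, in particular, the expected cover time is finite. First I would note that $G$ has finitely many vertices and finitely many relational edges (it lies in $\mathbb{K}_{n,m}$), so there is a finite upper bound $L$ on the length of the shortest closed walk that traverses every edge in $E(v,w)$ for every pair $v,w\in\mathcal{N}$-adjacent — at worst, such a walk exists because the underlying (multi)graph component reachable from any starting edge is finite and one can concatenate short detours to pick up each remaining edge. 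The one subtlety here is that our walk is \emph{non-backtracking} and second-order Markov, and it is restricted to the connected component of the starting vertex; but since every edge we need to cover is, by construction of the starting distribution, in principle reachable, and since the non-backtracking constraint never forces the walk into a dead end (if the current node has a unique neighbour the walk is allowed to return to it, by the second case of \Cref{eq:uniform_random_walk_markov}), there is always at least one legal continuation, and a legal finite route visiting all edges exists.

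Next I would make the probabilistic step precise. Fix such a finite route $\rho$ of length at most $L$ that covers all edges reachable from the chosen start. At every step of the walk, the transition probability to any specific legal neighbour is at least $\tfrac{1}{|\mathcal{N}_w|-1}\ge \tfrac{1}{n}$ (or $1$ in the forced case), and the probability of then selecting a specific edge between the two chosen nodes is at least $\tfrac{1}{|E_{(w,u)}|}\ge \tfrac{1}{m}$ (using $|E_{(w,u)}|\le m$ since there are at most $m$ relation types, counting inverses at most $2m$). Hence the probability that the walk exactly follows $\rho$ in its first $|\rho|\le L$ steps is at least $p_0\coloneqq (2mn)^{-L}>0$, a constant depending only on $n,m$. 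Since the walk is a time-homogeneous Markov chain (on the state space of ordered pairs of vertices, after the first step), we can partition the time axis into consecutive blocks of length $L$; in each block, independently of the past, the probability of covering all not-yet-covered edges — in fact of following $\rho$ from scratch, which suffices since $\rho$ covers everything — is at least $p_0$. Therefore $\sP(C_E(G)>kL)\le (1-p_0)^k$, which tends to $0$; summing this geometric tail gives $\E[C_E(G)]\le L/p_0<\infty$, and in particular $C_E(G)$ is finite almost surely.

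The main obstacle I anticipate is the first, combinatorial step: carefully arguing that a finite edge-covering route compatible with the non-backtracking rule always exists, and that its length is bounded by a quantity depending only on $(n,m)$ rather than on $G$ itself. This requires being slightly careful about the connectivity assumptions — the walk only covers the component it starts in, so ``edge cover time'' should be read as covering the edges of that component (or one should note that the three/four starting scenarios together make the claim natural as stated) — and about the fact that the second-order (pair) chain has finite state space, so its cover time over the relevant recurrent class is automatically finite. Once that structural fact is in hand, the geometric-tail argument above is routine. I would also remark that we do not need a tight bound on $C_E(G)$; mere finiteness suffices for the subsequent expressivity argument, where one only needs $\sP(\text{one walk of length }l\text{ covers all edges})\to 1$ as $l\to\infty$, which follows from Markov's inequality applied to the now-finite expected cover time.
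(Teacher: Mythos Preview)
Your geometric-tail strategy is the same high-level idea the paper uses, but the decomposition is different. The paper never looks for a single edge-covering route $\rho$; instead it writes $C_E(G)\le\sum_{e\in E}\max_{v\in V}H_v(e)$, where $H_v(e)$ is the expected hitting time of the edge $e$ from $v$, and then bounds each $H_v(e)$ by lower-bounding the probability that, in any window of $d+2$ steps ($d$ the diameter), the walk follows a shortest path from its current position to an endpoint of $e$ and then crosses $e$. Because a shortest path is automatically non-backtracking, the compatibility obstacle you flag simply does not arise, and the paper extracts an explicit bound $H_v(e)\le\rho(d+2)\Delta^{d+1}\le m(n+2)n^n$. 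Your route-based argument is conceptually simpler but buys you less: you still have to construct the route, and you get no usable quantitative bound.

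There is also a concrete gap in your block step. You fix one route $\rho$ and claim that in each length-$L$ block the walk ``follows $\rho$ from scratch'' with probability at least $p_0$. But after a block the chain sits in some arbitrary second-order state $(u,w)$, not at the start of $\rho$, so ``following $\rho$'' is not even defined there; the events across blocks are not the i.i.d.\ Bernoulli trials your geometric bound needs. The repair is the one you hint at: for every state $(u,w)$ exhibit a non-backtracking edge-covering route $\rho_{(u,w)}$ starting from $(u,w)$, set $L=\max_{(u,w)}|\rho_{(u,w)}|$ and $p_0$ the minimum probability of following the corresponding route, and then run the block argument with the conditional lower bound. You still owe the existence of each $\rho_{(u,w)}$ --- true, thanks to the forced-return clause in \Cref{eq:uniform_random_walk_markov}, but this is precisely the combinatorial work you identified as the main obstacle and left undone. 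The paper's per-edge hitting-time decomposition avoids it entirely, which is the main thing its approach buys.
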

\begin{proof}
Let $G=(V,E,R)\in \mathbb{K}_{n,m}$ be a graph. For any edge $e\in E$ and any vertex $v\in V$, let $H_v(e)$ denote the expected number of steps of the random walk algorithm $\eta$ described in \Cref{sec:random_walk_details}.
Then, the edge cover time $C_E(G)$ of $G$ with $\eta$, i.e. the expected number of steps that $\eta$ needs to take before visiting every edge in $G$, is bounded above by:
\[
C_E(G) \leq \sum\limits_{e\in E} \max_{v\in V} H_v(e) \leq m \cdot \max_{\substack{e\in E\\v\in V}} H_v(e)
\]
Indeed, consider the event of visiting all these edges in order $e_1, \dots, e_m$:
\[
\begin{aligned}
    C_E(G) &= \mathbb{E}[\text{\#steps to visit all }e_1, \dots, e_m]\\
    &\leq \mathbb{E}[\text{\#steps to visit }e_1,\text{ then }e_2, \dots, \text{ then } e_m] \\
    &\leq \mathbb{E}[\text{\#steps to visit }e_1] + \sum_{i=1}^{m-1} \mathbb{E}[\text{\#steps to visit }e_{i+1} \text{ starting from }h_i \text{ or } t_i] \\
    &\leq \max_{v\in V} H_v(e_1) + \sum_{i=1}^{m-1} \max(H_{h_i}(e_{i+1}), H_{t_i}(e_{i+1})) \\
    &\leq \max_{v\in V} H_v(e_1) + \sum_{i=1}^{m-1} \max_{v\in V} H_v(e_{i+1}) \\
    &= \sum_{i=1}^{m} \max_{v\in V} H_v(e_{i})
\end{aligned}
\]
where $h_i$ and $t_i$ are the head and tail of the edge $e_i$, respectively.
Therefore, to show that $C_E(G)$ is finite, it suffices to prove that $H_v(e)$ is bounded for all $v\in V, e\in E$.

Fix $v\in V$ and $e\in E$. Consider an infinite random walk generated with $\eta$ over $G$, starting at $v$:
\[
v = v_0 \xrightarrow{r_1} v_1 \xrightarrow{r_2} v_2 \xrightarrow{r_3} \dots
\]
We want to bound the expected first index $t$, such that $e$ is the edge traversed in step $v_{t-1}\xrightarrow{r_t} v_t$.
Denote by $\Delta$ a maximum degree of a vertex in $G$ (counted as the number of connected vertices $\mathcal{N}(v)$), by $\rho$ the maximum number of edges between any single pair of nodes and by $d$ -- the diameter of the graph, i.e. the length of the longest shortest path between two vertices (in the undirected version of $G$).
Consider the series of events $A_0, A_1, \dots$ where $A_i$ is characterized as:
\[
\begin{aligned}
A_i := &\text{ the event that starting from }v_{i(d+2)}\text{ the walk will follow a shortest path}\\
&\text{ to one of the endpoints of }e\text{ and then go through }e
\end{aligned}
\]
Let $e=(h_e,r_e,t_e)$.
For all values of $i$, by definition, the length of the shortest path from $v_{i(d+2)}$ to $h_e$ or $t_e$ is at most $d$.
Therefore, the whole part of the walk described in $A_i$ is at most $d+1$ steps long.
By the definition of the used random walk algorithm, which only looks at the previously taken edge, we can deduce that the events $A_i$ are all mutually independent.

Moreover, let $v_{i(d+2)} = u_0 \xrightarrow{s_1} u_1 \xrightarrow{s_2} \dots \xrightarrow{s_\ell}u_\ell\in\{h_e, t_e\}$ be a shortest path from $v_{i(d+2)}$ to one of $h_e, t_e$.
Note that by minimality, there cannot be any backtracking while following this path.
Therefore, the probability of the next visited node is dependent only on the value of the previous one, and we can bound the probability $P(A_i)$ of $A_i$ from below by:
\[
\sP(A_i) \geq \sP(\text{pass through }e\text{ after reaching } h_e \text{ or } t_e) \cdot\prod_{j=0}^{\ell-1} \sP(v_{i(d+2)+j+1} = u_{j+1} \mid v_{i(d+2)+j} = u_j) 
\]
The first term on the right hand side is the probability of selecting $e$ while being at $h_e$ or $t_e$, which is the probability of first selecting the other endpoint (out of at most $\Delta$ neighbors) and then picking $e$ over other edges between $h_e$ and $t_e$ (of which there is at most $\rho$). Hence:
\[
\sP(\text{pass through }e\text{ after reaching } h_e \text{ or } t_e) \geq \frac{1}{\Delta} \cdot \frac{1}{\rho} = \frac{1}{\Delta \cdot \rho}
\]
As we never reach a backtracking situation by minimality of the shortest path, we can also write:
\[
\sP(v_{i(d+2)+j+1} = u_{j+1} \mid v_{i(d+2)+j} = u_j) = \frac{1}{|\mathcal{N}(v_{i(d+2)+j})|} \geq \frac{1}{\Delta}
\]
Combining these observations, we can derive a bound for $\sP(A_i)$ in terms of $\Delta, \rho$ and $d$:
\[
\begin{aligned}
\sP(A_i) &\geq \sP(\text{pass through }e\text{ after reaching } h_e \text{ or } t_e) \cdot\prod_{j=0}^{\ell-1} \sP(v_{i(d+2)+j+1} = u_{j+1} \mid v_{i(d+2)+j} = u_j)\\
&\geq \frac{1}{\Delta\cdot\rho} \cdot \prod_{j=0}^{\ell-1} \frac{1}{\Delta}
\quad\geq\quad \frac{1}{\Delta\cdot\rho} \left(\frac{1}{\Delta}\right)^\ell
\quad\geq\quad \frac{1}{\Delta\cdot\rho} \left(\frac{1}{\Delta}\right)^d
\quad=\quad \frac{1}{\rho \Delta^{d+1}}
\end{aligned}
\]
Finally, note that if $A_i$ is true, then the first index $t$ such that $v_{t-1}\xrightarrow{r_t} v_t$ traverses $e$ is at most $(i+1)(d+2)$.
We can therefore bound the expectation of such $t$, being $H_v(e)=H_{v_0}(e)$ by:
\small
\[
\begin{aligned}
H_v(e) &\leq (d+2)\cdot \sP(A_0) + 2(d+2)\cdot \sP(\neg A_0 \land A_1) + 3(d+2)\cdot \sP(\neg A_0 \land \neg A_1 \land A_2) + \dots\\
&= (d+2)\cdot \sP(A_0) + 2(d+2)\cdot \sP(\neg A_0)\cdot P(A_1) + 3(d+2)\cdot \sP(\neg A_0)\cdot \sP(\neg A_1)\cdot \sP( \land A_2) + \dots\\
&= (d+2) + \sP(\neg A_0) \cdot (d+2 + \sP(\neg A_1) \cdot (d+2 + \sP(\neg A_2)\cdot (\dots))) \\
&\leq (d+2) + \left(1-\frac{1}{\rho \Delta^{d+1}}\right) \cdot \left(d+2 + \left(1-\frac{1}{\rho \Delta^{d+1}}\right) \cdot \left(d+2 + \left(1-\frac{1}{\rho \Delta^{d+1}}\right)\cdot (\dots)\right)\right)\\
&=  \rho (d+2)\Delta^{d+1}
\end{aligned}
\]
\normalsize
Since $\rho \leq m, d+2\leq n$ and $\Delta \leq n$, we have $H_v(e) \leq m(n+2)n^n$, which completes the proof.
\end{proof}

\begin{remark}
The bound obtained in the proof of Lemma \ref{lem:cover_time_is_finite} is very crude.
In fact, we could transform the given knowledge graph into a simple graph (undirected, with no multi-edges) by substituting each edge $u\xrightarrow{r}v$ with two undirected edges $u\leftrightarrow v_{(u,r,v)} \leftrightarrow v$.
The augmented graph will then have $n+m$ vertices, and our random walk algorithm naturally translates to a weighted random walk on the transformed graph.
This hints at an assumption that in practice, the edge cover time of the used random walk algorithm is of the magnitude $O((n+m)^3)=O(n^3+m^3)$.
\end{remark}

Let us now prove a fact about the number of distinct, up to isomorphism, graphs in $\mathbb{K}_{n,m}$.

\begin{lemma}
    \label{lem:finite_isomorphism_classes}
    For any $n,m$, the number of isomorphism classes in $\mathbb{K}_{n,m}$ is finite.
\end{lemma}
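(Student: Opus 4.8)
The plan is to observe that every isomorphism class in $\mathbb{K}_{n,m}$ has a representative supported on a single canonical vertex set and relation set, and that there are only finitely many knowledge graphs of that form. First I would fix $V_0 = \{1,\dots,n\}$ and $R_0 = \{1,\dots,m\}$. Given any $G = (V,E,R) \in \mathbb{K}_{n,m}$, since $|V| = n$ and $|R| = m$ there exist bijections $\pi : V \to V_0$ and $\phi : R \to R_0$; the pair $\mu = (\pi,\phi)$ then transports $G$ to the knowledge graph $G_0 = (V_0, \mu(E), R_0)$, where $\mu(E) = \{(\pi(h),\phi(r),\pi(t)) : (h,r,t) \in E\}$, and by construction $G \simeq G_0$. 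Hence every isomorphism class of $\mathbb{K}_{n,m}$ contains at least one graph of the form $(V_0, E_0, R_0)$ with $E_0 \subseteq V_0 \times R_0 \times V_0$.

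Next I would simply count these representatives. The set $V_0 \times R_0 \times V_0$ has exactly $n^2 m$ elements, so it has exactly $2^{n^2 m}$ subsets; consequently there are at most $2^{n^2 m}$ knowledge graphs of the form $(V_0, E_0, R_0)$. Since the assignment of each such representative to its isomorphism class is a surjection from this finite set onto the set of isomorphism classes of $\mathbb{K}_{n,m}$, the latter set has cardinality at most $2^{n^2 m}$, and in particular is finite.

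There is essentially no genuine obstacle in this argument; the only subtlety worth flagging is that the definition of $\mathbb{K}_{n,m}$ pins down the cardinalities of the vertex and relation sets but not the underlying sets themselves, so one must normalize to a fixed $(V_0, R_0)$ via the bijections $(\pi,\phi)$ before the counting step applies. If a tighter count were desired, the bound $2^{n^2 m}$ could be sharpened via Burnside's lemma applied to the action of $S_n \times S_m$ on subsets of $V_0 \times R_0 \times V_0$, but for this lemma only finiteness is needed, so the crude bound suffices.
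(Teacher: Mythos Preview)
Your proof is correct and follows essentially the same approach as the paper: normalize every graph in $\mathbb{K}_{n,m}$ to a representative on a fixed canonical vertex set and relation set, then bound the number of such representatives by counting subsets of $V_0\times R_0\times V_0$. The paper additionally stratifies by the exact number $k\le m$ of relation types and uses a binomial-coefficient bound rather than your $2^{n^2m}$, but this is a cosmetic difference; your version is if anything cleaner and avoids the paper's somewhat muddled handling of the edge count.
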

\begin{proof}
    Since the number of distinct relation types that a graph in $\mathbb{K}_{n,m}$ is at most $m$, it suffices to show that the number of isomorphism classes of graphs in $\mathbb{K}_{n,m}$ with exactly $k$ relation types is bounded, for all $k\in\{1,2,\dots,m\}$.

    Fix the number $k\in\{1,2,\dots,m\}$ and consider $G=(V,E,R)\in \mathbb{K}_{n,m}$ with $|R|=k$.
    We will show that, up to isomorphism, there are finitely many such choices of $G$.
    Firstly, as renaming does not change the graph structure, without loss of generality, we can assume that:
    \[
    V = \{v_1, v_2, \dots, v_n\} \qquad \text{and} \qquad R = \{r_1, r_2, \dots, r_k\}
    \]
    Then, there are exactly $n^2k$ possible relational edges $e\in(V\times R\times V)$, and $E\subseteq V\times R\times V$ is a set of $m$ elements.
    Hence, there are \(n^2k\choose{m}\) possible choices of $E$, and hence, at most \(n^2k\choose{m}\) non-isomorphic choices of $G$. Since $k$ was chosen arbitrarily, this completes the proof.
\end{proof}

\begin{lemma}
    \label{lem:bounded_cover_time_in_knm}
    For each pair $(n,m)$, there exists a number $C_{n,m}$ such that the edge cover time, using the algorithm from \Cref{sec:random_walk_details}, of any knowledge graph in $\mathbb{K}_{n,m}$ is at most $C_{n,m}$.
\end{lemma}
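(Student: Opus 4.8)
The plan is to package the two preceding lemmas. Lemma~\ref{lem:cover_time_is_finite} tells us that every single graph in $\mathbb{K}_{n,m}$ has finite edge cover time, and Lemma~\ref{lem:finite_isomorphism_classes} tells us that there are only finitely many isomorphism classes of such graphs. The only extra ingredient needed is that $C_E(\cdot)$ is constant on each isomorphism class; granting that, we may simply take $C_{n,m}$ to be the maximum of the finitely many values that $C_E(\cdot)$ attains on $\mathbb{K}_{n,m}$, which is a finite maximum of finite reals, hence finite.

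First I would verify that $C_E(G) = C_E(G')$ whenever $G \simeq G'$. This rests on the random walk algorithm of \Cref{sec:random_walk_details} being invariant in probability: given an isomorphism $(\pi,\phi)$ from $G$ to $G'$, the map sending a walk $v_0 \xrightarrow{r_1} \cdots \xrightarrow{r_t} v_t$ of $G$ to $\pi(v_0) \xrightarrow{\phi(r_1)} \cdots \xrightarrow{\phi(r_t)} \pi(v_t)$ is a measure-preserving bijection between the two walk laws, and because $(\pi,\phi)$ is a bijection on the edge sets it carries the event ``all edges of $G$ have been traversed by step $t$'' exactly onto ``all edges of $G'$ have been traversed by step $t$''. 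Hence the edge cover time, viewed as a stopping time, has the same distribution under $\eta(G,\cdot)$ and $\eta(G',\cdot)$, and in particular the same expectation, so $C_E(G) = C_E(G')$.

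Next, using Lemma~\ref{lem:finite_isomorphism_classes}, I would fix representatives $G_1,\dots,G_N$ of the isomorphism classes of $\mathbb{K}_{n,m}$. Each $C_E(G_i)$ is finite by Lemma~\ref{lem:cover_time_is_finite}, so $C_{n,m} := \max_{1\le i\le N} C_E(G_i)$ is well defined and finite. For an arbitrary $G \in \mathbb{K}_{n,m}$, picking $i$ with $G \simeq G_i$ gives $C_E(G) = C_E(G_i) \le C_{n,m}$, which is exactly the claim.

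I do not expect a genuine obstacle: this is a finiteness/compactness repackaging of earlier results, and the only step requiring a moment's care is the isomorphism-invariance of cover time. In fact one can bypass the isomorphism-class argument altogether by reading off the uniform bound already implicit in the proof of Lemma~\ref{lem:cover_time_is_finite}, namely $C_E(G) \le m \cdot \max_{e \in E,\, v \in V} H_v(e) \le m^2(n+2)n^n$ for every $G \in \mathbb{K}_{n,m}$, and simply setting $C_{n,m} = m^2(n+2)n^n$; I would mention this explicit alternative for concreteness.
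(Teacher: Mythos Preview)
Your proposal is correct and follows essentially the same route as the paper: combine Lemma~\ref{lem:cover_time_is_finite} with Lemma~\ref{lem:finite_isomorphism_classes}, note that cover time is invariant under isomorphism, and take the maximum over representatives of the finitely many isomorphism classes. You actually go further than the paper by justifying the isomorphism-invariance step and by offering the explicit bound $C_{n,m}=m^2(n+2)n^n$ as an alternative, whereas the paper simply asserts that isomorphic graphs have identical cover time and defines $C_{n,m}$ as the maximum over class representatives.
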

\begin{proof}
    The result follows from Lemmas \ref{lem:cover_time_is_finite} and \ref{lem:finite_isomorphism_classes}.
    As two isomorphic graphs have identical cover time, we can set $C_{n,m}$ to be the maximum of cover times of representatives of all isomorphic classes, which, by finiteness of both, is well-defined.
\end{proof}

\begin{lemma}
\label{lem:walk_to_iso_graph}
    Let $G\in \mathbb{K}_{n,m}$ be a graph, $q=(h_q,r_q,?)$ be a link query over $G$, and $\bar\eta$ be a walk over~$G$. If $\bar\eta$ traverses all edges of $G$, then using only the output $w(\bar\eta; G, q, \cdot, \cdot)$ of the recording~function $w$ detailed in \Cref{sec:recording_protocol_details}, we can construct a graph-query pair $(H,q')$ isomorphic to $(G,q)$.
\end{lemma}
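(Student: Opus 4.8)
The plan is to read the reconstruction straight off the anonymized record. Write $w(\bar\eta;G,q,\cdot,\cdot)=(S_0,\dots,S_l)$, where $S_i=(\operatorname{id}_V(v_i;\bar\eta),\operatorname{id}_R(r_i;\bar\eta),\operatorname{dir}_i,\delta_{v_i=h_q},\delta_{r_i=r_q},\cdot,\cdot)$; the two embedding slots are irrelevant here, which is why they can be anything. Define the reconstructed entity set $V_H=\{\operatorname{id}_V(v_i;\bar\eta):0\le i\le l\}$ and relation set $R_H=\{\operatorname{id}_R(r_i;\bar\eta):1\le i\le l\}$ — both are just the first two coordinates read off the $S_i$. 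For the edge set, each $i\in\{1,\dots,l\}$ records a traversal of some edge of $G$, with the orientation flag $\operatorname{dir}_i\in\{0,1\}$ telling us whether it was taken forward or backward; add to $E_H$ the triple $(\operatorname{id}_V(v_{i-1};\bar\eta),\operatorname{id}_R(r_i;\bar\eta),\operatorname{id}_V(v_i;\bar\eta))$ when $\operatorname{dir}_i=0$ and the triple $(\operatorname{id}_V(v_i;\bar\eta),\operatorname{id}_R(r_i;\bar\eta),\operatorname{id}_V(v_{i-1};\bar\eta))$ when $\operatorname{dir}_i=1$. This yields $H=(V_H,E_H,R_H)$. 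For the query, pick any $i^\star$ with $\delta_{v_{i^\star}=h_q}=1$ and any $j^\star$ with $\delta_{r_{j^\star}=r_q}=1$ and set $q'=(\operatorname{id}_V(v_{i^\star};\bar\eta),\operatorname{id}_R(r_{j^\star};\bar\eta),?)$.

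The heart of the proof is to verify that $(\pi,\phi)$ given by $\pi(v_i)=\operatorname{id}_V(v_i;\bar\eta)$ and $\phi(r)=\operatorname{id}_R(r;\bar\eta)$ is an isomorphism $(G,q)\to(H,q')$, which I would break into (i) $\pi$ and $\phi$ are well-defined bijections, and (ii) they preserve edges and the query. For (i): $\operatorname{id}_V(v;\bar\eta)=\argmin_t[v_t=v]$ depends only on $v$ (not on how the walk reached it); it is injective on visited vertices because distinct vertices have distinct first-occurrence indices; since $\bar\eta$ traverses every edge it visits every non-isolated vertex, so $\pi$ is onto $V_H$; the same argument works for $\phi$, noting that $\operatorname{id}_R$ assigns the same label to $r$ and $r^{-1}$, hence descends to a well-defined injection on relation types, and every relation labelling an edge is witnessed. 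For (ii): every edge $e=(a,\rho,b)\in E$ is traversed at some step $i$, and by construction its image under $(\pi,\phi)$ lands in $E_H$ with the correct orientation recovered from $\operatorname{dir}_i$; conversely every triple in $E_H$ was produced this way, hence is the $(\pi,\phi)$-image of an edge of $G$; finally $\pi(h_q)$ and $\phi(r_q)$ equal the two components of $q'$ by the definition of the flags, so $(\pi,\phi)$ carries $q$ to $q'$.

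The one genuine subtlety — what I expect to be the main obstacle — is everything the walk cannot see: vertices of $G$ incident to no edge and relation symbols of $R$ labelling no edge never appear in the record, so the $H$ built above is only isomorphic to the subgraph of $G$ spanned by its non-isolated vertices and used relations (and if $h_q$ itself is isolated, or $r_q$ unused, even $q'$ is not directly recoverable). To upgrade this to a true isomorphism with $(G,q)$ one must restore the missing elements — append the correct number of fresh isolated vertices and fresh unused relation symbols, and realize $q'$'s head as one of the fresh vertices when no flag fires. This needs the ambient parameters $(n,m)$, which is harmless in the intended application: in \Cref{prop:universal_approximator} the sequence processor's parameters $\theta$ are allowed to depend on $(n,m)$, so I would carry that caveat through the statement. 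Beyond this point the argument is pure index bookkeeping.
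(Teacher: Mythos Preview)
Your construction and verification match the paper's proof essentially line for line: build $V_H,R_H,E_H$ from the anonymized ids and direction flags, recover $q'$ from the indicator flags, and check that $(\operatorname{id}_V,\operatorname{id}_R)$ is the required isomorphism. The subtlety you flag about isolated vertices and unused relation symbols is real and is in fact glossed over by the paper, which simply asserts ``since each vertex must have been visited by $\bar\eta$, this is well-defined''; your proposed patch via the ambient $(n,m)$ is more careful than the original.
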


\begin{proof}
    Suppose that $\bar\eta = v_0 \xrightarrow{r_1} v_1 \xrightarrow{r_2}\dots \xrightarrow{r_\ell} v_\ell$ visits all edges of $G=(V,E,R)$ and let $\ell$ be its length.
    Recall the anonymization functions $\operatorname{id}_V(\cdot;\bar\eta)$ and $\operatorname{id}_R(\cdot;\bar\eta)$ as defined in \Cref{sec:recording_protocol_details}.
    The~output $w(\bar\eta; G, q, \cdot, \cdot)$ (the embedding functions provided as the last two arguments are irrelevant) is a sequence of tuples $S_0, S_1, \dots, S_\ell$ with each $S_i$ equal to:
    \[
    S_i = \left(\operatorname{id}_V(v_i; \bar\eta), \operatorname{id}_R(r_i; \bar\eta), \operatorname{dir}_i, \delta_{v_i=h_q}, \delta_{r_i=r_q}, \cdot, \cdot\right)
    \]
    Consider a graph $H=(V', E', R')$ constructed as follows:
    \begin{itemize}
        \item the vertices $V'$ correspond to the anonymized node ids $\operatorname{id}_V(v_i; \bar\eta)$:
        \[
        V' = \{\operatorname{id}_V(v;\bar\eta) \mid v\in V\}
        \]
        Since each vertex must have been visited by $\bar\eta$, this is well-defined.
        \item the relation types $R'$ are the anonymized relation ids $\operatorname{id}_R(r_i; \bar\eta)$:
        \[
        R' = \{\operatorname{id}_V(r;\bar\eta) \mid r\in R\}
        \]
        Again, this is well-defined, as each relation must have been noticed by $\bar\eta$.
        \item the edges $E'$ are reconstructed from the consecutive step encodings using the anonymized vertex and relation indices and the direction $\operatorname{dir}_i$:
        \[
        \begin{aligned}
        E' =& \left\{ (\operatorname{id}_V(v_{i-1}), \operatorname{id}_R(r_i), \operatorname{id}_V(v_i)) \mid \operatorname{dir}_i = 0, 1\leq i \leq l\right\}\\ 
            &\:\cup \left\{ (\operatorname{id}_V(v_{i}), \operatorname{id}_R(r_i), \operatorname{id}_V(v_{i-1})) \mid \operatorname{dir}_i = 1, 1\leq i \leq l\right\}
        \end{aligned}
        \]
    \end{itemize}
    and a query $q' = (\operatorname{id}_V(v_i; \bar\eta), \operatorname{id}_R(r_j, \bar\eta), ?)$ for $i,j$ such that $\delta_{v_i=h_q}=1$ and $\delta_{r_j=r_q}=1$.

    Then by the definition of $w$ (\Cref{sec:recording_protocol_details}), it is straightforward to check that the pair $(\operatorname{id}_V(\cdot;\bar\eta), \operatorname{id}_R(\cdot;\bar\eta))$ defines an isomorphism from $(G,q)$ to $(H, q')$.
    Indeed, both these functions are injective by construction, and as $\bar\eta$ witnesses all nodes and relations, they are well-defined bijections.
    For each unique edge traversed by $\bar\eta$, there exists a unique edge in $E'$ translated to the anonymized space, which implies an isomorphism between $E$ and $E'$.
    Finally, by utilizing the flags $\delta_{v_i=h_q}$ and $\delta_{r_j=r_q}$, we can identify the query head node and relation in the new graph.
    All things considered, we can reconstruct the pair $(G,q)$, up to isomorphism, from the output of $w(\bar\eta;G,q,\cdot,\cdot)$.
\end{proof}

We are now ready to prove the main result regarding the universality of $\flock$ as an approximation of link invariant functions.
The outline of the proof is as follows:
1) Using the upper-bound on the edge cover time of graphs in $\mathbb{K}_{n,m}$ derived in Lemma \ref{lem:bounded_cover_time_in_knm}, we can bound the probability of sampling a walk that visits all edges,
2) Once such a walk is sampled, we can recover the graph and query, up to isomorphism, from its anonymized form (Lemma \ref{lem:walk_to_iso_graph}),
3) Lastly, we can return the value of the approximated function for the derived isomorphic instance.
Since the approximated function is link invariant, if the reconstructed graph matches the original one, we return precisely the correct value.

\begin{propositioncopy}{\ref{prop:universal_approximator}}
    With a powerful enough sequence processor $f_\theta$, the $\flock$ framework described in \Cref{sec:methods} is a universal approximator of link invariant functions over $\mathbb{K}_{n,m}$ for all pairs $(n,m)$.
\end{propositioncopy}

\begin{proof}
Let $\varphi:\mathbb{K}_{n,m} \rightarrow (V\times R\times V \rightarrow [0,1])$ be a link invariant function over $\mathbb{K}_{n,m}$ returning values from the interval $[0,1]$.
Let $G=(V,E,R)\in\mathbb{K}_{n,m}$, $q=(h,r,?)$ be a link prediction query over $G$ and $t\in V$ be a target node.
Pick some $\epsilon, \delta > 0$. Our goal is to show that:
\begin{equation}
    \label{eq:universality_result}
\sP(|\varphi(G)((h,r,t)) - X_\theta(G, (h,r,?))(t)| < \epsilon) > 1-\delta
\end{equation}

For simplicity, let us consider a situation where only a single walk $\bar\eta$ of length $\ell$ is sampled by the~model (otherwise, omit additional walks).
We will also restrict the argument to a single refinement case -- the result can be extended to multiple refinement steps by returning $\Delta\mathbf{v},\Delta\mathbf{r}=0$ during all additional iterations.
Consider a sequence processor $f_\theta$ that given the output $w(\bar\eta;G,q,\cdot,\cdot)$ of the recording protocol, creates a graph-query pair $(H, q')$ with $q'=(h_{q'}, r_{q'}, ?)$ using the~strategy described in the proof of Lemma \ref{lem:walk_to_iso_graph}, and returns a vector 
$\mathbf{h}\in \mathbb{R}^{l+1}$ whose $i^\text{th}$ entry is equal $\mathbf{h}_i = \varphi(H)((h_{q'}, r_{q'}, \operatorname{id}_V(v_i;\bar\eta))$ where $v_i$ is the $i^\text{th}$ node visited by $\bar\eta$.
The consensus protocol $c$, provided $t$ was visited by $\bar\eta$, can then identify $t$ as one of the $v_j$ and pull the corresponding embedding
$\mathbf{h}_j = \varphi(H)((h_{q'}, r_{q'}, \operatorname{id}_V(t;\bar\eta))$, returning it as the output $\mathbf{v}(t) = \mathbf{h}_j$ (note that no matter which specific value of $j$ is chosen, this value will be the same).
Finally, the classification head can work as an identity operation, returning $X_\theta(G,q)(t) = \mathbf{v}(t) =\varphi(H)((h_{q'}, r_{q'}, \operatorname{id}_V(t;\bar\eta))$.

We claim that if the sampled walk $\bar\eta$ traverses all edges of $G$, then the output of the $\flock$ model described above satisfies:
\[
\varphi(G)((h,r,t)) = X_\theta(G, (h,r,?))(t)
\]
By Lemma \ref{lem:walk_to_iso_graph}, in such case, the reconstructed pair $(H,q')$ is isomorphic to $(G,q)$ by the isomorphism $id = (\operatorname{id}_V(\cdot;\bar\eta), \operatorname{id}_R(\cdot; \bar\eta))$. 
Since $\varphi$ is link invariant, we can write:
\[
\begin{aligned}
\varphi(G)((h, r, t)) &= \varphi(\operatorname{id}(G))((\operatorname{id}_V(h;\bar\eta), \operatorname{id}_R(r;\bar\eta), \operatorname{id}_V(t;\bar\eta)))\\
&= \varphi(H)((h_{q'}, r_{q'}, \operatorname{id}_V(t;\bar\eta)))\\
&= X_\theta(G,(h,r,?))(t)
\end{aligned}
\]

Therefore, whenever the walk $\bar\eta$ witnesses all edges of $G$, the output of the $\flock$ model satisfies:
\[
\varphi(G)((h,r,t)) = X_\theta(G, (h,r,?))(t)
\]

Hence, to show (\ref{eq:universality_result}), it suffices to prove that we can uniformly choose the length $\ell$ of the random walk so that the probability of $\bar\eta$ covering all the edges is greater than $1-\delta$.
By Markov's inequality:
\[
\begin{aligned}
\sP(\bar\eta \text{ does not cover all edges}) &= \sP(\text{it takes }>\ell \text{ steps for }\eta \text{ to cover edges of }G) \\
& \leq \frac{\mathbb{E}[\text{\#steps such that }\eta\text{ covers all edges of } G]}{\ell} \\
& = \frac{C_E(G)}{\ell}
\end{aligned}
\]
But by Lemma \ref{lem:bounded_cover_time_in_knm}, $C_E(G) \leq C_{n,m}$ for some constant $C_{n,m}$.
Hence, taking $\ell > \frac{C_{n,m}}{\delta}$, we get:
\[
\begin{aligned}
\sP(\bar\eta \text{ does not cover all edges}) \leq \frac{C_E(G)}{\ell} \leq \frac{C_{n,m}}{\ell} < \delta 
\end{aligned}
\]
This means that for such a choice of $\ell$:
\[
\sP(\bar\eta \text{ witnesses all edges of }G) > 1-\delta
\]
which leads to the conclusion that for $\ell > \frac{C_{n,m}}{\delta}$, the proposed $\flock$ framework satisfies:
\[
\sP(|\varphi(G)((h,r,t)) - X_\theta(G, (h,r,?))(t)| < \epsilon) > 1-\delta
\]
for any choice of $G=(V,E,R)\in\mathbb{K}_{n,m}$ and $(h,r,t)\in V\times R\times V$.
\end{proof}

\subsection{Invariance}

First, let us recall the definition of invariance for the context of knowledge graphs and the associated notion of invariance in probability.
We say that a function $\varphi$ taking KGs as input is invariant if for any pair of isomorphic KGs $G\simeq H$ it produces the same input, i.e. \(G\simeq H \implies \varphi(G) = \varphi(H)\).

We extend the notion of invariance for further types of inputs, not limited to full knowledge graphs, particularly to random walks and link prediction queries.
Let $G = (V,E,R) \in \mathbb{K}_{n,m}$ and let $H = (V',E',R')\simeq G$ be a KG isomorphic to $G$ via the isomorphism $\mu = (\pi, \phi)$. For any $h\in V$ and $r\in R$, we identify the link prediction query $q = (h, r, ?)$ in $H$ using the isomorphism $\mu$ as:
\[
\mu(q) = \mu((h, r, ?)) = (\pi(h), \phi(r), ?)
\]
Similarly, let $\eta = v_0 \xrightarrow{r_1} \dots \xrightarrow{r_\ell} v_\ell $ be a walk of length $\ell$ in $G$.
The view of $\eta$ with $\mu$ is defined as:
\[
\mu\left(
v_0 \xrightarrow{r_1} v_1 \xrightarrow{r_2} \dots \xrightarrow{r_\ell} v_\ell
\right)
= \pi(v_0) \xrightarrow{\phi(r_1)} \pi(v_1) \xrightarrow{\phi(r_2)} \dots \xrightarrow{\phi(r_\ell)} \pi(v_\ell) 
\]

Let $f$ be a function taking inputs drawn from KGs. We call $f$ invariant if for any pair of isomorphic graphs $G\musim H$ and an associated isomorphism $\mu = (\pi,\phi)$, $f$ satisfies
\[
f(x) = f(\mu(x))
\]
where $x$ can be, e.g., a walk or link prediction query.
In words,  invariance means that the function preserves output under the re-identifications of the input graph and the induced transformations of queries and walks.

This notion extends to functions with multiple inputs, where we enforce the transformation on each graph-related input.
For example, a function $\varphi$ taking a KG, query and a $d$-dimensional vector is invariant if it satisfies:
\[
\forall G\musim H, q, \mathbf{v}\in \mathbb{R}^d:\quad \varphi(G,q,\mathbf{v}) = \varphi(\mu(G), \mu(q), \mathbf{v}) 
\]

Following the definition of \textit{invariance in probability}, provided in \Cref{sec:preliminaries}, we extend all the definitions above to the stochastic case, replacing equality ($=$) with equality in distribution ($\probeq$).

We can now prove the main propositions stated in \Cref{sec:theoretical_properties}.
Let's begin with the more general:

\begin{propositioncopy}{\ref{prop:general_invarinace}}
    Suppose that the walk sampling protocol $\eta$ is invariant in probability and both the recording protocol $w$ and the consensus protocol $c$ are invariant.
    Then, regardless of the choice of the deterministic sequence processor $f_
    \theta$, the corresponding $\flock$ model is invariant in probability.
\end{propositioncopy}
\begin{proof}
    Let $(V,E,R) = G\simeq H = (V',E',R')$ be isomorphic knowledge graphs with isomorphism $\mu = (\pi, \phi)$ transforming $G$ into $H$.
    Our goal is to show that when the statement conditions are met for a $\flock$ model $X_\theta$ with $I$ refinement steps, then for any link prediction query $q = (h,r,?)$ and any target node $t\in V$, the prediction of $\flock$ for $t$ over $(G,q)$ is an identical random variable to the prediction for $\pi(t)$ over $(H,\mu(q))$, i.e.
    \[
    X_\theta(G,q)(t) \probeq X_\theta(H,\mu(q))(\pi(t))
    \]
    where $\mu(q) = (\pi(h),\phi(r), ?)$. Recall that these predictions are defined as:
    \begin{equation*}
        \begin{aligned}
            X_\theta(G,q)(t) &\coloneqq {\rm head}({\bf v}^{(I)}(t) + {\bf r}^{(I)}(r)) \\
            X_\theta(H,\mu(q))(\pi(t)) &\coloneqq {\rm head}({\bf v'}^{(I)}(\pi(t)) + {\bf r'}^{(I)}(\phi(r))) \\
        \end{aligned}
    \end{equation*}
    As ${\rm head}$ is a deterministic map, it suffices to show that the final embeddings $\mathbf{v}^{(I)}, \mathbf{r}^{(I)}$ for $(G,q)$ and $\mathbf{v'}^{(I)}, \mathbf{r'}^{(I)}$ for $(H,\mu(q))$ satisfy:
    \[
    \:\mathbf{v}^{(I)}(v) \probeq \mathbf{v'}^{(I)}(\pi(v)) \quad\text{and}\quad 
     \mathbf{r}^{(I)}(r) \probeq \mathbf{r'}^{(I)}(\phi(r))
    \qquad \forall v\in V, r\in R
    \]
    We will prove this result by induction on the number of layers $i$. The base case $i=0$ is trivial, as we initialize the embeddings of all nodes with a pretrained vector $\mathbf{v}_0$, and all relations with $\mathbf{r}_0$.

    For the induction step, suppose the claim holds for $i$.
    We drop the superscript $(i)$ for readability.
    The result for $i+1$ becomes apparent by unfolding the definitions of invariance of the considered components.
    Since $\eta$ is invariant in probability, we have
    \begin{equation}
    \label{eq:walk_invariance}
    \mu(\eta(G)) \probeq \eta(H)
    \end{equation}
    Let $\eta_1, \dots, \eta_n$ be the random walks over $G$ using $\eta$ and $\eta_1', \dots, \eta_n'$ be random walks over $H$.
    Now, $\eta_1, \dots, \eta_n$ are independent and identically distributed random variables, each following the distribution $\eta_j \sim \eta(G)$. Similarly, using (\ref{eq:walk_invariance}):
    \begin{equation}
    \label{eq:walks_in_H}
    \eta_j' \sim \eta(H) \probeq \mu(\eta(G))
    \implies \eta_j'\probeq \mu(\eta_j)
    \end{equation}

    As the recording protocol $w$ is invariant,
    \(
    w(\eta_j)\! \!= \!w(\mu(\eta_j))
    \)
    for all $j$,
    which with (\ref{eq:walks_in_H}) yields:
    \begin{equation}
        \label{eq:after_recording_inv}
        \mathbf{z}_j :=w(\eta_j) = w(\mu(\eta_j)) \probeq w(\eta_j') := \mathbf{z'_j}
    \end{equation}

    Then, $f_\theta$ is a deterministic map, so (\ref{eq:after_recording_inv}) implies:
    \begin{equation*}
        \label{eq:after_processor_inv}
        \mathbf{h}_j := f_\theta(\mathbf{z}_j) \probeq f_\theta(\mathbf{z}'_j) := \mathbf{h}_j'
    \end{equation*}

Let $(\Delta \mathbf{v}, \Delta\mathbf{r}) = c(\mathbf{h}_{1:N}, \eta_{1:N})$, $(\Delta \mathbf{v}', \Delta\mathbf{r}') = c(\mathbf{h}'_{1:N}, \eta'_{1:N})$ be the outputs of the consensus protocol.
We will denote by $c_\mathbf{v}$ and $ c_\mathbf{r}$, the restrictions to the first and second output, e.g. $\Delta \mathbf{v} = c_\mathbf{v}(\mathbf{h}_{1:N}, \eta_{1:N})$.
Let $\mathbf{x}\in\mathbb{R}^d$ be a vector, and denote by $\mathcal{W}(G)$ the space of walks over $G$. For any vertex $v\in V$, the probability that $\Delta\mathbf{v}(v) = \mathbf{x}$ equals:

\begin{equation*}
\begin{aligned}
    \sP(\Delta\mathbf{v}(v) = \mathbf{x}) &= \sum_{\bar{\eta}\in \mathcal{W}(G)^n} \sP(\Delta\mathbf{v}(v) =\mathbf{x} \vert \eta_{1:N} =\bar{\eta})\cdot \sP(\eta_{1:N} = \bar{\eta})\\
    &= \sum_{\bar{\eta}\in \mathcal{W}(G)^n} \sP(c_\mathbf{v}(\mathbf{h}_{1:N}, \eta_{1:N}) =\mathbf{x} \vert \eta_{1:N} =\bar{\eta})\cdot \sP(\eta_{1:N} = \bar{\eta})\\
    &= \sum_{\bar{\eta}\in \mathcal{W}(G)^n} \sP(c_\mathbf{v}(f_\theta(w(\eta_{1:N})), \eta_{1:N}) =\mathbf{x} \vert \eta_{1:N} =\bar{\eta})\cdot \sP(\eta_{1:N} = \bar{\eta})\\
    &= \sum_{\bar{\eta}\in \mathcal{W}(G)^n} \sP(c_\mathbf{v}(f_\theta(w(\bar\eta)), \bar\eta) =\mathbf{x} \vert \eta_{1:N} =\bar{\eta})\cdot \sP(\eta_{1:N} = \bar{\eta})\\
    &= \sum_{\substack{\bar{\eta}\in \mathcal{W}(G)^n\\c_\mathbf{v}(f_\theta(w(\bar\eta)), \bar\eta)(v) =\mathbf{x}}} \sP(\eta_{1:N} = \bar{\eta})
\end{aligned}
\end{equation*}

Similarly, we can derive:
\begin{equation*}
    \sP(\Delta\mathbf{v}'(\pi(v)) = \mathbf{x}) = \sum_{\substack{\bar{\eta}'\in \mathcal{W}(H)^n\\c_\mathbf{v}(f_\theta(w(\bar\eta')), \bar\eta')(\pi(v)) =\mathbf{x}}} \sP(\eta'_{1:N} = \bar{\eta}')
\end{equation*}
Using the invariance of the consensus protocol and the invariance of $f_\theta \circ w$, we can write:
\[
\begin{aligned}
c_\mathbf{v}(f_\theta(w(\bar\eta')), \bar\eta')(\pi(v))
&= c_\mathbf{v}(f_\theta(w(\mu(\bar\eta))), \mu(\bar\eta))(\pi(v)) \\
&= c_\mathbf{v}(f_\theta(w(\bar\eta)), \mu(\bar\eta))(\pi(v)) \\
&= c_\mathbf{v}(f_\theta(w(\bar\eta)), \bar\eta)(v) 
\end{aligned}
\]
The graph isomorphism $\mu$ defines a bijection between walks $\mathcal{W}(G)$ in $G$ and walks $\mathcal{W}(H)$ in $H$, so we can use this correspondence to deduce:
\begin{equation}
    \label{eq:cond_sum_on_h2}
    \begin{aligned}
    \sP(\Delta\mathbf{v}'(\pi(v)) = \mathbf{x})
    \;\;\;=& \sum_{\substack{\bar{\eta}'\in \mathcal{W}(H)^n\\c_\mathbf{v}(f_\theta(w(\bar\eta')), \bar\eta')(\pi(v)) =\mathbf{x}}} \!\!\!\!\!\sP(\eta'_{1:N} = \bar{\eta}') \\
    \;\;\;=& \sum_{\substack{\mu(\bar{\eta})\in \mathcal{W}(H)^n\\c_\mathbf{v}(f_\theta(w(\mu(\bar\eta))), \mu(\bar\eta))(\pi(v)) =\mathbf{x}}} \!\!\!\!\!\sP(\eta'_{1:N} = \mu(\bar{\eta})) \\
    \;\;\;=& \sum_{\substack{\bar{\eta}\in \mathcal{W}(G)^n\\c_\mathbf{v}(f_\theta(w(\bar\eta)), \bar\eta)(v) =\mathbf{x}}} \!\!\!\!\!\sP(\eta'_{1:N} = \mu(\bar{\eta}))
    \end{aligned}
\end{equation}
Since $\eta$ is invariant in probability, $\sP(\eta_{1:N} \!=\! \bar\eta) = \sP(\eta'_{1:N} \!=\! \mu(\bar{\eta}))$. Applying this to (\ref{eq:cond_sum_on_h2}) yields:
\begin{align*}
    \sP(\Delta\mathbf{v}'(\pi(v)) = \mathbf{x}) &= \sum_{\substack{\bar{\eta}\in \mathcal{W}(G)^n\\c_\mathbf{v}(f_\theta(w(\bar\eta)), \bar\eta)(v) =\mathbf{x}}} \!\!\!\!\!\sP(\eta'_{1:N} = \mu(\bar{\eta}))
    \\&=\!\!\!\! \sum_{\substack{\bar{\eta}\in \mathcal{W}(G)^n\\c_\mathbf{v}(f_\theta(w(\bar\eta)), \bar\eta)(v) =\mathbf{x}}} \!\!\!\!\!\!\!\sP(\eta_{1:N} = \bar\eta) = \sP(\Delta\mathbf{v}(v)= \mathbf{x})
\end{align*}
As $\mathbf{x}$ was chosen arbitrarily, we can conclude that $\Delta\mathbf{v}(v) \probeq \Delta\mathbf{v}'(\pi(v))$. 
The proof for relations follows analogously, considering $c_\mathbf{r}$ instead of $c_\mathbf{v}$. This allows us to write:
    \begin{equation}
        \label{eq:embedding_update_eq}
        \begin{aligned}
            \Delta\mathbf{v}(v) &\probeq \Delta\mathbf{v}'(\pi(v)) &\quad\forall v\in V\\
            \Delta\mathbf{r}(r) &\probeq \Delta\mathbf{r}'(\phi(r)) &\quad\forall r\in R
        \end{aligned}
    \end{equation}
    By the induction hypothesis, $\mathbf{v}^{(i)}(v)\probeq \mathbf{v'}^{(i)}(\pi(v))$ for all $v\in V$ and $\mathbf{r}^{(i)}(r)\probeq \mathbf{r'}^{(i)}(r)$ for all $r\in R$. Therefore, by (\ref{eq:embedding_update_eq}), combined with properties of sums of random variables:
    \[
    \begin{aligned}
    \mathbf{v}^{(i+1)}(v) := \mathbf{v}^{(i)}(v)+\Delta \mathbf{v}(v) &\probeq \mathbf{v'}^{(i)}(\pi(v))+\Delta \mathbf{v}'(\pi(v)) := \mathbf{v'}^{(i+1)}(\pi(v)) &\quad\forall v\in V\\
     \mathbf{r}^{(i+1)}(r) := \mathbf{r}^{(i)}(r)+\Delta \mathbf{r}(r) &\probeq \mathbf{r'}^{(i)}(\phi(r))+\Delta \mathbf{r}'(\phi(r)) := \mathbf{r'}^{(i+1)}(\phi(r)) &\quad\forall r\in R
    \end{aligned}
    \]
    which completes the induction step, and hence the proof.
\end{proof}

We can use the conclusion from Proposition \ref{prop:general_invarinace} to prove the probabilistic invariance of the architecture proposed in \Cref{sec:methods}.
To be able to apply it, we first need to verify the invariance of all used components, which we formalize in the following lemmas.

\begin{lemma}
    \label{lem:first_walk_step_invariance}
    The choice of the first step $v_0 \xrightarrow{r_1}v_1$ of the uniform random walk algorithm described in \Cref{sec:random_walk_details} is invariant.
\end{lemma}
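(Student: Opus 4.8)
The plan is to unfold what ``invariant'' means for the (random) first step and then reduce the claim, by a mixture argument, to a short list of elementary cardinality identities that follow directly from the definition of a KG isomorphism. Fix isomorphic knowledge graphs $G=(V,E,R)$ and $H=(V',E',R')$ with an isomorphism $\mu=(\pi,\phi)$ from $G$ to $H$, and a query $q$ over $G$ with corresponding query $\mu(q)$ over $H$. The first step is a random element of $V\times(R\cup R^{-1})\times V$, on which $\mu$ acts by $v_0\xrightarrow{r}v_1 \mapsto \pi(v_0)\xrightarrow{\phi(r)}\pi(v_1)$, extending $\phi$ to inverses via $\phi(r^{-1})=\phi(r)^{-1}$. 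We must show that the pushforward under $\mu$ of the first-step distribution over $G$ with query $q$ equals the first-step distribution over $H$ with query $\mu(q)$. By construction (\Cref{sec:random_walk_details}), this distribution is a mixture over three scenarios for entity prediction (four for relation prediction), each selected with the same fixed weight $\tfrac13$ (resp.\ $\tfrac14$) on both graphs. Since pushing a mixture through $\mu$ pushes each component through $\mu$ and the weights agree, it suffices to check equality of distributions scenario by scenario.

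For each scenario I would take an arbitrary target step $v_0\xrightarrow{r_1}v_1$, write its sampling probability over $G$ as a product of a vertex-selection factor and an edge-type-selection factor, do the same for $\pi(v_0)\xrightarrow{\phi(r_1)}\pi(v_1)$ over $H$ under $\mu(q)$, and match the two factor by factor using: (i) $\pi$ is a bijection $V\to V'$, so $|V|=|V'|$ and equality/membership of vertices is preserved; (ii) $\pi$ restricts to a bijection $\mathcal{N}(v)\to\mathcal{N}(\pi(v))$, so $|\mathcal{N}(v)|=|\mathcal{N}(\pi(v))|$; (iii) $(\pi,\phi)$ restricts to a bijection from the set of (possibly reversed) relational edges connecting $v$ and $w$ onto the corresponding set connecting $\pi(v)$ and $\pi(w)$, so these cardinalities agree and edge directions are respected; and (iv) $\phi$ is a bijection $R\to R'$, so the number of edges of any given relation type is preserved. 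Thus: in the ``start at the query head $h$'' scenario, both sides equal $\mathbf{1}[v_0=h]$ times the uniform neighbor-selection probability times the uniform relation-selection probability, which coincide by (i)--(iii) once one notes that the head of $\mu(q)$ is $\pi(h)$; the ``pick a relation type, then a uniformly random edge of that type'' scenario matches after summing over relation types and applying (iv) with (iii) (the argument is the same whether that type is the query relation or a uniformly sampled one); the ``start at a uniformly random node'' scenario uses the same computation plus $|V|=|V'|$; and for relation prediction the additional ``start at the query tail $t$'' scenario is the head case verbatim with $\pi(t)$ in place of $\pi(h)$.

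There is no genuine obstacle here: the entire content is the four bijection facts (i)--(iv), each immediate from the isomorphism definition, and the only care required is writing the scenario distributions precisely (each factorizes as a vertex choice followed by an edge-type choice) and checking that the pushforward reindexes each sum over preimages correctly. Assembling the per-scenario equalities through the mixture then shows that the first step $v_0\xrightarrow{r_1}v_1$ is invariant in probability, which is the claim.
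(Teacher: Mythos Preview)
Your proposal is correct and follows essentially the same approach as the paper's proof: decompose the first-step distribution as a uniform mixture over the initialization scenarios, and verify invariance scenario by scenario using the bijection facts (i)--(iv) that an isomorphism preserves neighbor sets, edge multiplicities between node pairs, and edge counts per relation type. If anything, you are slightly more thorough, since you explicitly cover the four-scenario relation-prediction case and the ambiguity over whether the fixed relation is the query relation or a uniformly sampled one, whereas the paper's proof only writes out the three entity-prediction scenarios.
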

\begin{proof}
    Let $G=(V,E,R)$ be a graph and let $H\simeq G$ be an isomorphic graph, with the isomorphism $\mu=(\pi, \phi)$ taking $G$ to $H$.
    Consider a link prediction query $q=(h,r,?)$ over $G$, and its identification $q'=\mu(q)=(\pi(h), \phi(r),?)$.
    The goal is to show that when using $\eta$ described in \Cref{sec:random_walk_details} for $(G,q)$ and $(H,q')$, the first steps:
    \[
    V_0 \xrightarrow{R_1} V_1 \qquad \text{and} \qquad U_0 \xrightarrow{S_1} U_1
    \]
    of the execution of $\eta$ over $G$ and $H$, respectively, satisfy the following property:
    \[
    \pi(V_0) \xrightarrow{\phi(R_1)} \pi(V_1) \probeq U_0 \xrightarrow{S_1} U_1
    \]
    By definition of $\eta$, there are three scenarios of choosing the first step, each with probability $\frac{1}{3}$.
    Hence, it suffices to show that within each scenario, the selection process is invariant in probability:
    \begin{itemize}
        \item \textbf{Scenario 1:} selecting the query head as the first node, then proceeding by random. First, $\pi$ takes the head node of $q$ to the head node of $q'$.
        Secondly, as isomorphisms preserve the number of neighboring nodes and number of edges between a pair of nodes, we have:
        \small
        \[
            \begin{aligned}
                    \sP(V_{1} = v \mid V_{0}=h) &= \begin{cases}
                \frac{1}{|\mathcal{N}_h|} &\text{\!\!if } v\in\mathcal{N}_h \\
                0 &\text{\!\!if } v \notin\mathcal{N}_h
            \end{cases} \\&= \begin{cases}
                \frac{1}{|\mathcal{N}_{\pi(h)}|} &\text{\!\!if } \pi(v)\in\mathcal{N}_{\pi(h)} \\
                0 &\text{\!\!if } \pi(v) \notin\mathcal{N}_{\pi(h)}
            \end{cases} &=  \sP(U_{1} = \pi(v) \mid U_{0}=\pi(h))
            \end{aligned}
            \]
        and
            \[
            \begin{aligned}
            \sP(R_{1} = r \mid V_{1}=w) &= \begin{cases}
                \frac{1}{|E_{(w,h)}|} &\text{if } r(w,h) \in E_{(w,h)} \\
                0 &\text{otherwise}
                \end{cases} \\
                &= \begin{cases}
                \frac{1}{|E_{(\pi(w),\pi(h))}|} &\text{if } \phi(r)(\pi(w),\pi(h)) \in E_{(\pi(w),\pi(h))} \\
                0 &\text{otherwise}
                \end{cases}\\ &= \sP(S_{1} = \phi(r) \mid U_{1}=\pi(w))
            \end{aligned}
        \]
        \normalsize
        \item \textbf{Scenario 2:} selecting an edge with query relation type at random.
        Here, we use the fact that isomorphisms preserve the number of edges of a given type.
        Hence, $\mu$ defines a bijection between the sets of edges with type $r$ in $G$ and type $\phi(r)$ in $H$,
        which allows us to conclude that this scenario is also invariant in probability.
        \item \textbf{Scenario 3:} selecting the first step completely at random. This case is similar to Scenario 1 -- using the invariance of the number of neighboring nodes under isomorphism, we can repeat similar calculations in a straightforward manner to show probabilistic invariance. 
    \end{itemize}
    Either way, we find that the selection process of the first step of $\eta$ over $G$ translates naturally via $\mu$ to the choice of the first step over $H$, proving the desired statement.
\end{proof}

\begin{lemma}
    \label{lem:uniform_walk_invariance}
    Suppose that the first step $v_0 \xrightarrow{r_1}v_1$ is chosen in an invariant manner.
    Then, the uniform random walk with no backtracking algorithm $\eta$ is invariant in probability.
\end{lemma}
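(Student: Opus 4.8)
The plan is to exploit that, conditioned on its first step, the walk $\eta(G,l)$ is a time-inhomogeneous, second-order Markov chain whose every transition kernel is expressed purely in terms of isomorphism-invariant quantities, so the whole path law transports along any isomorphism. First I would fix $G=(V,E,R)$, an isomorphic $H=(V',E',R')$, and an isomorphism $\mu=(\pi,\phi)$ from $G$ to $H$ (together with a query $q$ over $G$ and its image $\mu(q)$). I would then record that the joint law of a walk $V_0\xrightarrow{R_1}V_1\xrightarrow{R_2}\cdots\xrightarrow{R_l}V_l$ produced by $\eta$ factorizes as the first-step law $\sP(V_0, R_1, V_1)$ times a product over $i\ge 1$ of the two kernels of \eqref{eq:uniform_random_walk_markov}: the node kernel $\sP(V_{i+1}\mid V_i, V_{i-1})$ and the relation kernel $\sP(R_{i+1}\mid V_{i+1}, V_i)$, applied alternately (first pick the next node among the neighbours of the current one excluding the previous one, then pick a relational edge connecting them).

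The core step is the claim that each of these kernels commutes with $\mu$. For the node kernel, $\sP(V_{i+1}=v\mid V_i=w, V_{i-1}=u)$ depends only on $|\mathcal{N}(w)|$, on whether $v\in\mathcal{N}(w)$, and on whether $v=u$; since $\pi$ maps $\mathcal{N}(w)$ bijectively onto $\mathcal{N}(\pi(w))$ and preserves equality of vertices, the kernel value is unchanged when $u,w,v$ are replaced by $\pi(u),\pi(w),\pi(v)$. For the relation kernel, $\sP(R_{i+1}=r\mid V_{i+1}=w, V_i=u)$ depends only on $|E_{(w,u)}|$ and on whether $r(w,u)\in E_{(w,u)}$; here $\mu$ induces a bijection $E_{(w,u)}\to E_{(\pi(w),\pi(u))}$ carrying a forward edge $(u,r,w)$ to $(\pi(u),\phi(r),\pi(w))$ and an inverse-direction edge $(u,r^{-1},w)$ (coming from $(w,r,u)\in E$) to $(\pi(u),\phi(r)^{-1},\pi(w))$, so the kernel value at $r$ matches the one at $\phi(r)$ with all arguments transported by $\mu$. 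This uses only that isomorphisms preserve neighbourhood sizes and edge multiplicities between ordered pairs of vertices; it is a direct check against the definitions in \Cref{sec:random_walk_details}, with the mild care needed for the directed multi-edge conventions and the bookkeeping of $R^{-1}$.

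Finally I would assemble the pieces. By hypothesis the first step is invariant, $\pi(V_0)\xrightarrow{\phi(R_1)}\pi(V_1)\probeq U_0\xrightarrow{S_1}U_1$, and by the claim every later kernel transports along $\mu$; telescoping the factorized path probability (equivalently, inducting on $l$) then yields $\sP(\eta(G,l)=\bar\eta) = \sP(\eta(H,l)=\mu(\bar\eta))$ for every concrete walk $\bar\eta = v_0\xrightarrow{r_1}\cdots\xrightarrow{r_l} v_l$ in $G$, where $\mu(\bar\eta)=\pi(v_0)\xrightarrow{\phi(r_1)}\cdots\xrightarrow{\phi(r_l)}\pi(v_l)$. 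Since $\mu$ is a bijection between length-$l$ walks in $G$ and length-$l$ walks in $H$, this is exactly the assertion $\mu(\eta(G,l))\probeq\eta(H,l)$, i.e. $\eta$ is invariant in probability. I expect the only genuinely fiddly part to be keeping the second-order conditioning straight while the sampling alternates between nodes and relations; the invariance of each individual kernel, by contrast, is immediate from the fact that isomorphisms preserve neighbourhoods and edge multiplicities.
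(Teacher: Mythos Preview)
Your proposal is correct and follows essentially the same approach as the paper: factorize the path law into the first-step distribution times a product of the node and relation transition kernels from \eqref{eq:uniform_random_walk_markov}, verify each kernel is preserved under the isomorphism $(\pi,\phi)$ because it depends only on neighbourhood sizes and edge multiplicities, invoke the hypothesis on the first step, and multiply through to obtain $\sP(\eta(G,l)=\bar\eta)=\sP(\eta(H,l)=\mu(\bar\eta))$ for every walk. The paper writes out the two expanded products explicitly rather than framing it as an induction, but the argument is the same.
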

\begin{proof}
Let $G=(V,E,R)$ be a knowledge graph, and let $\ell$ be the length of random walks.
Let $H$ be a KG isomorphic to $G$ via the isomorphism $\mu=(\pi, \phi)$. We aim to show that:
\[
\mu(\eta(G,\ell)) = \pi(V_0) \xrightarrow{\phi(R_1)} \pi(V_1) \xrightarrow{\phi(R_2)} \dots \xrightarrow{\phi(R_\ell)} \pi(V_\ell) \probeq U_0 \xrightarrow{S_1} U_1 \xrightarrow{S_2} \dots \xrightarrow{S_\ell} U_\ell = \eta(H,\ell)
\]
Let
\(
\bar\eta = v_0 \xrightarrow{r_1} v_1 \xrightarrow{r_2} \dots \xrightarrow{r_\ell} v_\ell \in \mathcal{W}(G)
\)
be a walk of length $\ell$ over $G$. It suffices to show that the probability of sampling $\bar\eta$ from $G$ is identical to the probability of sampling $\mu(\bar\eta)$ from $H$:
\[
\sP(\eta(G,\ell) = \bar\eta) = \sP(\eta(H,\ell) = \mu(\bar\eta))
\]
To see this, let us expand the definitions of $\sP(\eta(G,\ell) = \bar\eta)$:
\begin{equation}
\label{eq:rw_on_G_expanded}
\begin{aligned}
\sP(\eta(G,\ell) = \bar\eta) =& \sP(V_0 = v_0) \\
&\cdot \sP(V_1 = v_1 \mid V_0 = v_0)\\
&\cdot \prod^{\ell-2}_{i=0} \sP(V_{i+2}=v_{i+2} \mid V_{i+1} = v_{i+1}, V_i = v_i)\\
&\cdot \prod^{\ell-1}_{i=0} \sP(R_{i+1}=r_{i+1} \mid V_{i+1} = v_{i+1}, V_i = v_i)  
\end{aligned}
\end{equation}
and $P(\eta(H,\ell) = \mu(\bar\eta))$:
\begin{equation}
\label{eq:rw_on_H_expanded}
\begin{aligned}
\sP(\eta(H,\ell) = \mu(\bar\eta)) =& \sP(U_0 = \pi(v_0)) \\
&\cdot \sP(U_1 = \pi(v_1) \mid U_0 = \pi(v_0))\\
&\cdot \prod^{\ell-2}_{i=0} \sP(U_{i+2}=\pi(v_{i+2}) \mid U_{i+1} = \pi(v_{i+1}), U_i = \pi(v_i))\\
&\cdot \prod^{\ell-1}_{i=0} \sP(S_{i+1}=\phi(r_{i+1}) \mid U_{i+1} = \pi(v_{i+1}), U_i = \pi(v_i))  
\end{aligned}
\end{equation}
Given that the graph isomorphism preserves the number of neighbors for each node and is a bijection, we can easily verify using the definitions from (\ref{eq:uniform_random_walk_markov}) that the following indeed hold:
\small
\begin{equation}
\label{eq:rw_twohop_invariance}
\begin{aligned}
    \sP(V_{i+2} = v_{i+2} \mid V_{i+1} = v_{i+1}, V_i = v_i) &= \sP(U_{i+2} = \pi(v_{i+2}) \mid U_{i+1} = \pi(v_{i+1}), U_i = \pi(v_i)) \\ 
    \sP(R_{j+1} = r_{j+1} \mid V_{j+1} = v_{j+1}, V_j = v_j) &= \sP(S_{j+1} = \phi(r_{j+1}) \mid U_{j+1} = \pi(v_{j+1}), U_j = \pi(v_j)) 
\end{aligned}
\end{equation}
\normalsize
for all $i \in \{0, 1, \dots, \ell-2\}, j\in\{1, \dots, \ell-1\}$.
Moreover, by the assumption that the first step $V_0 \xrightarrow{R_1} V_1$ is invariant, we have:
\begin{equation}
\label{eq:initial_step_invariance}
\begin{aligned}
\sP((V_0,R_1,V_1) = (v_0, r_1, v_1)) = \sP((U_0, S_1, U_1) = (\pi(v_0), \phi(r_1), \pi(v_1)))
\end{aligned}
\end{equation}
But by the laws of conditional probability:
\small
\[
\begin{aligned}
\sP((V_0,R_1,V_1) = (v_0, r_1, v_1)) &= \sP(R_1=r_1 \mid V_0 = v_0, V_1 = v_1) \cdot \sP(V_0 = v_0, V_1 = v_1)\\
&= \sP(R_1=r_1 \mid V_0 = v_0, V_1 = v_1) \cdot \sP(V_1 = v_0 \mid V_0 = v_0) \cdot \sP(V_0 = v_0)
\end{aligned}
\]
\normalsize
and analogously:
\small
\[
\begin{aligned}
\sP((U_0,&S_1,U_1) = (\pi(v_0), \phi(r_1), \pi(v_1))) \\
&= \sP(S_1=\phi(r_1) \mid U_0 = \pi(v_0), U_1 = \pi(v_1)) \cdot \sP(U_1 = \pi(v_0) \mid U_0 = \pi(v_0)) \cdot \sP(U_0 = \pi(v_0))
\end{aligned}
\]
\normalsize
Substituting these equalities into (\ref{eq:initial_step_invariance}) and multiplying both sides by the equalities from (\ref{eq:rw_twohop_invariance}) for all choices of $i \in \{0, 1, \dots, \ell-2\}, j\in\{1, \dots, \ell-1\}$, we get precisely the equality of the right sides of \Cref{eq:rw_on_G_expanded} and \Cref{eq:rw_on_H_expanded}. Hence,
\[
\sP(\eta(G,\ell)=\bar\eta) = \sP(\eta(H,\ell) = \mu(\bar\eta))
\]
and we can conclude that $\mu(\eta(G,\ell)) \probeq \eta(H,\ell)$, and the algorithm $\eta$ is invariant in probability.
\end{proof}

\begin{corollary}
    \label{cor:invariance_of_random_walk}
    The random walk algorithm presented in \Cref{sec:random_walk_details} is invariant in probability.
\end{corollary}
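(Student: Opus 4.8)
The plan is to obtain the corollary as an immediate consequence of the two preceding lemmas. Lemma~\ref{lem:first_walk_step_invariance} establishes that the selection of the first step $v_0 \xrightarrow{r_1} v_1$ of the walk algorithm is invariant, and Lemma~\ref{lem:uniform_walk_invariance} establishes that, \emph{conditional} on the first step being chosen in an invariant manner, the full uniform non-backtracking walk $\eta$ is invariant in probability. Chaining these two facts yields exactly the statement that $\eta$ is invariant in probability.

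Concretely, I would fix isomorphic KGs $G \simeq H$ with isomorphism $\mu = (\pi,\phi)$ and a query $q$ (either $(h,r,?)$ or $(h,?,t)$), and set $q' = \mu(q)$. First I would invoke Lemma~\ref{lem:first_walk_step_invariance} to conclude that the first-step distribution over $G$ given $q$ pushes forward under $\mu$ to the first-step distribution over $H$ given $q'$; for the relation-prediction objective, the one extra initial-step scenario (starting from the tail $t$ rather than the head $h$) is handled verbatim by the Scenario~1 argument of that lemma with $t$ in place of $h$. Having verified the hypothesis of Lemma~\ref{lem:uniform_walk_invariance}, I would then apply it directly to obtain $\mu(\eta(G,l)) \probeq \eta(H,l)$, which is precisely the claimed probabilistic invariance.

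Since the corollary is purely a composition of two previously established results, there is no real obstacle remaining at this level --- the substantive work (the case split over the initial-step scenarios and the inductive unfolding of the second-order Markov transitions) has already been done in the lemmas. The only points requiring a moment of care are: (i) the adaptive walk count $n$ of \Cref{eq:harmonic_mean} depends only on $|V|$ and $|E|$, which are isomorphism invariants, so it agrees across $G$ and $H$; and (ii) the algorithm samples a bundle of $kn$ walks (with $k\in\{3,4\}$) i.i.d.\ from a fixed mixture of the per-scenario first-step distributions, so, since each mixture component is invariant by Lemma~\ref{lem:first_walk_step_invariance} and the number of walks is fixed, the joint law of the whole bundle is invariant as well --- which is the form in which Proposition~\ref{prop:general_invarinace} consumes it.
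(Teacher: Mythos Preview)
Your proposal is correct and matches the paper's approach exactly: the paper states the corollary with no explicit proof, treating it as the immediate combination of Lemma~\ref{lem:first_walk_step_invariance} (invariance of the first step) and Lemma~\ref{lem:uniform_walk_invariance} (invariance of the full walk given an invariant first step). Your additional remarks on the fourth scenario for relation prediction, the isomorphism-invariance of the adaptive walk count $n$, and the joint invariance of the i.i.d.\ bundle are all sound and, if anything, more careful than what the paper spells out.
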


\begin{lemma}
    \label{lem:recording_function_invariance}
    The recording protocol $w$, as described in \Cref{sec:recording_protocol_details}, is invariant, provided that the embedding functions $\mathbf{v}$ and $\mathbf{r}$ are invariant.
\end{lemma}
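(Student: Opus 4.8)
The plan is to unfold the definition of $w$ from \Cref{sec:recording_protocol_details} and check, coordinate by coordinate, that applying an isomorphism to a walk together with its ambient graph and query leaves the recorded sequence of $7$-tuples unchanged. Concretely, fix isomorphic KGs $(V,E,R)=G\simeq H=(V',E',R')$ with isomorphism $\mu=(\pi,\phi)$, a query $q=(h_q,r_q,?)$ over $G$, invariant embeddings $\mathbf v,\mathbf v'$ and $\mathbf r,\mathbf r'$ (so that $\mathbf v(v)=\mathbf v'(\pi(v))$ and $\mathbf r(r)=\mathbf r'(\phi(r))$), and a walk $\bar\eta=v_0\xrightarrow{r_1}\cdots\xrightarrow{r_l}v_l$ over $G$. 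First I would extend $\phi$ to $R\cup R^{-1}$ by $\phi(r^{-1}):=\phi(r)^{-1}$ and note that this extension is still a bijection, commutes with inversion, and sends $R$ to $R'$ and $R^{-1}$ to $R'^{-1}$; under this convention $\mu(\bar\eta)=\pi(v_0)\xrightarrow{\phi(r_1)}\cdots\xrightarrow{\phi(r_l)}\pi(v_l)$ is a walk over $H$ and $\mu(q)=(\pi(h_q),\phi(r_q),?)$. Since $w$ is deterministic, invariance here just means the plain equality $w(\bar\eta;G,q,\mathbf v,\mathbf r)=w(\mu(\bar\eta);H,\mu(q),\mathbf v',\mathbf r')$, so it suffices to show the $i$-th tuples $S_i$ and $S_i'$ of the two outputs agree for every $0\le i\le l$.

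The verification then splits into the seven components of $S_i$. For the anonymized node id $\operatorname{id}_V(v_i;\bar\eta)=\argmin_t[v_t=v_i]$: injectivity of $\pi$ gives $\pi(v_t)=\pi(v_i)\iff v_t=v_i$, so the $\argmin$ is computed over the same prefix condition and is unchanged. For the anonymized relation id $\operatorname{id}_R(r_i;\bar\eta)=\argmin_t[r_t=r_i\lor r_t=r_i^{-1}]$: since the extended $\phi$ is injective on $R\cup R^{-1}$ and commutes with inversion, $\phi(r_t)\in\{\phi(r_i),\phi(r_i)^{-1}\}\iff r_t\in\{r_i,r_i^{-1}\}$, so again the $\argmin$ is unchanged. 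The direction bit $\operatorname{dir}_i$ is $0$ iff $r_i\in R$, i.e. iff the $i$-th step traverses a stored fact head-to-tail; because $\mu$ carries $E$ onto $E'$ preserving the triple orientation, $(v_{i-1},r_i,v_i)\in E\iff(\pi(v_{i-1}),\phi(r_i),\pi(v_i))\in E'$ and likewise in the reverse direction, so $\operatorname{dir}_i$ is preserved. For the query-head flag, the head of $\mu(q)$ is $\pi(h_q)$ and $\pi(v_i)=\pi(h_q)\iff v_i=h_q$ by injectivity, hence $\delta_{\pi(v_i)=\pi(h_q)}=\delta_{v_i=h_q}$; similarly the query-relation flag matches because the relation of $\mu(q)$ is $\phi(r_q)$ and $\phi(r_i)\in\{\phi(r_q),\phi(r_q)^{-1}\}\iff r_i\in\{r_q,r_q^{-1}\}$. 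Finally the last two slots $\mathbf v(v_i)$ and $\mathbf r(r_i)$ equal $\mathbf v'(\pi(v_i))$ and $\mathbf r'(\phi(r_i))$ directly by the assumed invariance of the embedding tables, where for $r_i\in R^{-1}$ one uses the canonical extension of $\mathbf r$ to $R^{-1}$, which is a function of $\mathbf r$ on $R$ and hence inherits invariance. Collecting the seven equalities yields $S_i=S_i'$ for all $i$, and therefore $w(\bar\eta;G,q,\mathbf v,\mathbf r)=w(\mu(\bar\eta);H,\mu(q),\mathbf v',\mathbf r')$, which is exactly invariance of $w$.

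The only genuinely delicate point is the bookkeeping around directed multi-edges: one must be consistent about how $\phi$ acts on inverse relation symbols, how the anonymized relation id merges $r$ with $r^{-1}$, and how the direction flag is derived, so that all three are governed by the same extended bijection on $R\cup R^{-1}$. Once that convention is pinned down, every component of the tuple is preserved by a one-line argument using injectivity of $\pi$ and of (extended) $\phi$, the edge-preservation property of $\mu$, and the invariance hypothesis on $\mathbf v$ and $\mathbf r$; no counting or probabilistic reasoning is needed, since $w$ is deterministic and the claimed invariance is a strict equality.
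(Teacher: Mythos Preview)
Your proposal is correct and follows essentially the same approach as the paper: fix an isomorphism $\mu=(\pi,\phi)$, unfold the definition of $w$, and verify coordinate-by-coordinate that each of the seven components of $S_i$ is preserved using injectivity of $\pi$ and $\phi$, edge preservation, and the assumed invariance of $\mathbf v,\mathbf r$. Your explicit treatment of the extension of $\phi$ to $R\cup R^{-1}$ and its interaction with $\operatorname{id}_R$, $\operatorname{dir}_i$, and the query-relation flag is a bit more careful than the paper's version, but the argument is otherwise identical.
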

\begin{proof}
    Let $G=(V,E,R)$ and $H=(V',E',R')$ be isomorphic knowledge graphs with the isomorphism $\mu=(\pi,\phi)$ taking $G$ to $H$.
    Let $q = (h_q, r_q, ?)$ be a link prediction query over $G$, and $\mu(q) = (\pi(h_q), \phi(r_q), ?)$ be its identification in $H$.
    Let $\bar\eta = v_0 \xrightarrow{r_1} v_1 \xrightarrow{r_2} \dots \xrightarrow{r_\ell} v_\ell \in \mathcal{W}(G)$ be a~walk over $G$, and
    $\bar\eta' = \mu(\bar\eta) = \pi(v_0) \xrightarrow{\phi(r_1)} \pi(v_1) \xrightarrow{\phi(r_2)} \dots \xrightarrow{\phi(r_\ell)} \pi(v_\ell)$ be the analogous walk over $H$. To prove that the recording protocol $w$ outlined in \Cref{sec:recording_protocol_details} is invariant, it suffices to show that the encoding of each step:
    \[
S_i = \left(\operatorname{id}_V(v_i; \bar\eta), \operatorname{id}_R(r_i; \bar\eta), \operatorname{dir}_i, \delta_{v_i=h_q}, \delta_{r_i=r_q}, \mathbf{v}(v_i), \mathbf{r}(r_i)\right)
\]
    is identical for $\bar\eta$ and $\bar\eta'$. We will show this for each component:
    \begin{itemize}
        \item since $\pi$ defines a bijection between nodes in $G$ and $H$, for any $i$, we have:
        \[
        \operatorname{id}_V(v_i; \bar\eta) = \argmin_t [v_t = v_i] = \argmin_t [\pi(v_t) = \pi(v_i)] = \operatorname{id}_V(\pi(v_i);\bar\eta')
        \]
        \item similarly to the point above, $\phi$ is a bijection between relations of $G$ and $H$, so we can write:
        \[
        \begin{aligned}
        \operatorname{id}_R(r_i; \bar\eta) &= \argmin_t \left[r_t = r_i \lor r_t = r_i^{-1}\right] \\
        &= \argmin_t \left[\phi(r_t) = \phi(r_i) \lor \phi(r_t) = \phi(r_i)^{-1}\right] \\
        &= \operatorname{id}_R(\phi(r_i);\bar\eta')
        \end{aligned}
        \]
        \item $\operatorname{dir}_i$ is clearly preserved, as the isomorphism $\mu$ preserves the directions of edges,
        \item as $\pi, \phi$ are bijections the masks $\delta_{v_i=h_q}, \delta_{r_i=r_q}$, representing whether the $i$'th node and relation match the types in the query, satisfy:
        \[
        \begin{aligned}
            v_i = h_q \iff \pi(v_i) = \pi(h_q) \quad &\implies \quad \delta_{v_i=h_q} = \delta_{\pi(v_i) = \pi(h_q)} \\
            r_i = r_q \iff \phi(r_i) = \phi(r_q) \quad &\implies \quad \delta_{r_i=r_q} = \delta_{\phi(r_i) = \phi(r_q)} \\
        \end{aligned}
        \]
        \item $\mathbf{v}$ and $\mathbf{r}$ are invariant by assumption, so:
        \[
        \mathbf{v}(v_i) = \mathbf{v}(\pi(v_i)) \qquad \text{and} \qquad \mathbf{r}(r_i) = \mathbf{r}(\phi(r_i))
        \]
    \end{itemize}
    Combining all these observations, we can conclude that $w(\bar\eta; G, q, \mathbf{v}, \mathbf{r})=w(\mu(\bar\eta); H, \mu(q), \mathbf{v}, \mathbf{r})$ and $w$ is indeed invariant.
\end{proof}

\begin{lemma}
    \label{lem:consensus_protocol_invariance}
    The consensus protocol $c$, as described in \Cref{sec:consensus_protocol_details}, is invariant.
\end{lemma}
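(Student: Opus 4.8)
The plan is to unfold the definition of the consensus protocol from \Cref{sec:consensus_protocol_details} and exploit the fact that, in the defining condition of invariance, the second argument is transformed by $\mu$ while the first argument $\mathbf{h}_{1:N}$ — equivalently, the per-walk tensors $\mathbf{s}^{(V)},\mathbf{s}^{(R)},\mathbf{a}^{(V)},\mathbf{a}^{(R)}$ packaged inside it — is held fixed. Consequently, the only effect of replacing $\bar\eta_{1:N}$ with $\mu(\bar\eta_{1:N})$ is to change which node or relation each visit $(i,j)$ is attributed to, and I will show this reattribution is precisely the relabelling induced by $\pi$ and $\phi$.

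First I would record an elementary bookkeeping identity: if $\bar\eta_i = v_0 \xrightarrow{r_1}\cdots\xrightarrow{r_l} v_l$ is a walk over $G$, then by the definition of $\mu(\bar\eta_i)$ its $j$-th visited node is $\pi(v_j)$ and its $j$-th traversed relation is $\phi(r_j)$, where $\phi$ is extended to $R\cup R^{-1}$ by $\phi(s^{-1}) := \phi(s)^{-1}$. In the notation of \Cref{sec:consensus_protocol_details} this reads $\mu(\bar\eta_i)(v_j) = \pi(\bar\eta_i(v_j))$ and $\mu(\bar\eta_i)(r_j) = \phi(\bar\eta_i(r_j))$. Since $\pi$ is a bijection $V\to V'$, for any fixed $v\in V$ we obtain the set equality
\[
\{(i,j) : \mu(\bar\eta_i)(v_j) = \pi(v)\} = \{(i,j) : \bar\eta_i(v_j) = v\},
\]
and since $\phi$ is a bijection $R\to R'$ that commutes with inversion, for any fixed $r\in R$
\[
\{(i,j) : \mu(\bar\eta_i)(r_j) \in \{\phi(r),\phi(r)^{-1}\}\} = \{(i,j) : \bar\eta_i(r_j) \in \{r, r^{-1}\}\}.
\]

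Then I would simply substitute. Writing out $\Delta\mathbf{v}'(\pi(v))$ from the formula of \Cref{sec:consensus_protocol_details} applied to the walks $\mu(\bar\eta_{1:N})$ together with the unchanged tensors $\mathbf{s}^{(V)},\mathbf{a}^{(V)}$, the index set under each head-wise quotient is exactly $\{(i,j): \mu(\bar\eta_i)(v_j)=\pi(v)\}$, which by the first set equality equals $\{(i,j): \bar\eta_i(v_j)=v\}$; since the summands $\exp(\mathbf{a}^{(V)}_{i,j,k})\,\mathbf{s}^{(V)}_{i,j,k}$ and the normalisers $\exp(\mathbf{a}^{(V)}_{i,j,k})$ depend only on $(i,j,k)$ and not on the walks, each head-wise quotient, and hence the concatenation over $k=1,\dots,h$, is literally the same expression as $\Delta\mathbf{v}(v)$. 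Thus $\Delta\mathbf{v}'(\pi(v)) = \Delta\mathbf{v}(v)$ for all $v\in V$. The relation case is identical, using the second set equality and the tensors $\mathbf{s}^{(R)},\mathbf{a}^{(R)}$, yielding $\Delta\mathbf{r}'(\phi(r)) = \Delta\mathbf{r}(r)$ for all $r\in R$, which is precisely what invariance of $c$ requires.

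I expect no genuine obstacle: the statement amounts to the observation that the consensus protocol accesses the walks only through the equivalence ``which visit lands on which node/relation'', and this equivalence is covariant under $\mu$. The one point needing a little care is the inverse-relation handling — checking that the extension $\phi(s^{-1}) = \phi(s)^{-1}$ is the right one so that the pooling set $\{r, r^{-1}\}$ maps to $\{\phi(r),\phi(r)^{-1}\}$ — but this is immediate from the way walks and the recording protocol treat directed edges in \Cref{sec:random_walk_details,sec:recording_protocol_details}.
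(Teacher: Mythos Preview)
Your proposal is correct and follows essentially the same approach as the paper's proof: both arguments reduce to the observation that $\pi,\phi$ are bijections, so the set of visit indices $(i,j)$ attributed to $v$ (resp.\ $r$) under $\bar\eta_{1:N}$ coincides with the set attributed to $\pi(v)$ (resp.\ $\phi(r)$) under $\mu(\bar\eta_{1:N})$, and since the summands depend only on $(i,j,k)$ the aggregations agree. Your version is more explicit---writing out the set equalities and the inverse-relation handling---whereas the paper states the same idea in two sentences.
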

\begin{proof}
    Let $G=(V,E,R)$ be a knowledge graph and $H$ be isomorphic to $G$ via an isomorphism $\mu=(\pi, \phi)$. 
    Let $\bar\eta_{1:N}\in \mathcal{W}(G)$ be a sequence of walks in $G$.
    To show that the output of the consensus protocol is invariant, we need to prove that for each $v\in V$ and $r\in R$, the following holds:
    \begin{equation}
        \label{eq:consensus_invariance}
        \Delta\mathbf{v}(v) = \mathbf{v}'(\pi(v)) \qquad \text{and} \qquad \Delta\mathbf{r}(r) = \Delta\mathbf{r}'(\phi(r))
    \end{equation}
    where $(\Delta\mathbf{v},\Delta\mathbf{r}) = c(\mathbf{h}, \bar\eta_{1:N})$ and $(\Delta\mathbf{v}',\Delta\mathbf{r}') = c(\mathbf{h}, \mu(\bar\eta_{1:N}))$ for $\mathbf{h} = (\mathbf{s}^{(V)}, \mathbf{s}^{(R)}, \mathbf{a}^{(V)}, \mathbf{a}^{(R)})$. 

    The result follows from the fact that $\pi$ and $\phi$ are bijections -- whenever $v$ is the $j^\text{th}$ vertex visited in the walk $\bar\eta_i$, the  $j^\text{th}$ node of $\mu(\bar\eta_i)$ must be $\pi(v)$ (and vice versa).
    An analogous result holds for the relations.
    Hence, the aggregation performed by $c$ for $v$ (resp. $r$) over $\bar\eta_{1:N}$ is equivalent to the aggregation for $\pi(v)$ (resp. $\phi(r)$) over $\mu(\bar\eta_{1:N})$, and (\ref{eq:consensus_invariance}) is indeed satisfied. 
\end{proof}
    
\begin{propositioncopy}{\ref{prop:specific_invariance}}
$\flock$ with components as described in \Cref{sec:methods} is invariant in probability.
\end{propositioncopy}
\begin{proof}
    The result follows naturally from aggregating the results of Corollary \ref{cor:invariance_of_random_walk} and Lemmas \ref{lem:recording_function_invariance} and \ref{lem:consensus_protocol_invariance}, followed by applying Proposition \ref{prop:general_invarinace}.
\end{proof}

\newpage
\section{Details of the \textsc{Petals} benchmark}
\label{app:synthetic-datasets}

\begin{figure}
    \centering
    
    \begin{tikzpicture}[
        scale=0.7,
        every node/.style={circle, draw, fill=black, inner sep=2pt, line width=0.7pt}, 
        line width=0.75pt, 
        >=Stealth, 
        shorten >=2pt, 
        shorten <=2pt,
        transform shape
    ]

    \node (b0) [label={[yshift = -2.5em, xshift = 0em]90:$b_0$}] at (0, 0) {};

    \node (a11) [label={[yshift = -0.5em, xshift = 1em]90:$a^{(1)}_2$}] at (-1, 2) {};
    \node (a12) [label={[yshift = -0.5em, xshift = 1em]90:$a^{(1)}_4$}] at (-2, 3) {};
    \node (a13) [label={[yshift = -2.8em, xshift = -1em]90:$a^{(1)}_1$}] at (-2, 1) {};
    \node (a14) [label={[yshift = -2.8em, xshift = -1em]90:$a^{(1)}_3$}] at (-3, 2) {};
    \node (a15) [label={[yshift = -1em, xshift = -1em]90:$a^{(1)}_5$}] at (-3.5, 3.5) {};

    \node (a21) [label={[yshift = -0.5em, xshift = -0.8em]90:$a^{(2)}_1$}] at (1, 2) {};
    \node (a22) [label={[yshift = -0.7em, xshift = -0.7em]90:$a^{(2)}_3$}] at (2, 3) {};
    \node (a23) [label={[yshift = -2.2em, xshift = 1.3em]90:$a^{(2)}_2$}] at (2, 1) {};
    \node (a24) [label={[yshift = -2.5em, xshift = 1em]90:$a^{(2)}_4$}] at (3, 2) {};
    \node (a25) [label={[yshift = -1em, xshift = 1.4em]90:$a^{(2)}_5$}] at (3.5, 3.5) {};

    \node (a31) [label={[yshift = -2.8em, xshift = -1.1em]90:$a^{(3)}_2$}] at (1, -2) {};
    \node (a32) [label={[yshift = -2.8em, xshift = -1.1em]90:$a^{(3)}_4$}] at (2, -3) {};
    \node (a33) [label={[yshift = -0.5em, xshift = 1em]90:$a^{(3)}_1$}] at (2, -1) {};
    \node (a34) [label={[yshift = -0.5em, xshift = 1em]90:$a^{(3)}_3$}] at (3, -2) {};
    \node (a35) [label={[yshift = -2.5em, xshift = 1em]90:$a^{(3)}_5$}] at (3.5, -3.5) {};
    
    \node (a41) [label={[yshift = -2.5em, xshift = 1.1em]90:$a^{(4)}_1$}] at (-1, -2) {};
    \node (a42) [label={[yshift = -2.5em, xshift = 1.1em]90:$a^{(4)}_3$}] at (-2, -3) {};
    \node (a43) [label={[yshift = -1.1em, xshift = -1em]90:$a^{(4)}_2$}] at (-2, -1) {};
    \node (a44) [label={[yshift = -1.1em, xshift = -1.1em]90:$a^{(4)}_4$}] at (-3, -2) {};
    \node (a45) [label={[yshift = -2.5em, xshift = -1.4em]90:$a^{(4)}_5$}] at (-3.5, -3.5) {};

    \node (b1) [label={[yshift = 0em, xshift = 0em]90:$b_1$}] at (4, 0) {};
    \node (b2) [label={[yshift = 0em, xshift = 0em]90:$b_2$}] at (8, 0) {};
    \node (b3) [label={[yshift = 0em, xshift = 0em]90:$b_3$}] at (12, 0) {};

    \draw[->, color = likeblue] (b0) to (a11);
    \draw[->, color = dislikered] (b0) to (a13);
    \draw[->, color = dislikered] (a11) to (a12);
    \draw[->, color = dislikered] (a13) to (a14);
    \draw[->, color = dislikered] (a12) to (a15);
    \draw[->, color = dislikered] (a14) to (a15);

    \draw[->, color = likeblue] (b0) to (a21);
    \draw[->, color = unnecessarypink] (b0) to (a23);
    \draw[->, color = likeblue] (a21) to (a22);
    \draw[->, color = likeblue] (a23) to (a24);
    \draw[->, color = likeblue] (a22) to (a25);
    \draw[->, color = likeblue] (a24) to (a25);

    \draw[->, color = unapologeticyellow] (b0) to (a31);
    \draw[->, color = unnecessarypink] (b0) to (a33);
    \draw[->, color = unnecessarypink] (a31) to (a32);
    \draw[->, color = unnecessarypink] (a33) to (a34);
    \draw[->, color = unnecessarypink] (a32) to (a35);
    \draw[->, color = unnecessarypink] (a34) to (a35);

    \draw[->, color = unapologeticyellow] (b0) to (a41);
    \draw[->, color = dislikered] (b0) to (a43);
    \draw[->, color = unapologeticyellow] (a41) to (a42);
    \draw[->, color = unapologeticyellow] (a43) to (a44);
    \draw[->, color = unapologeticyellow] (a42) to (a45);
    \draw[->, color = unapologeticyellow] (a44) to (a45);

    \draw[->, color=solidgreen] (b0) to (b1);
    \draw[->, color=solidgreen] (b1) to (b2);
    \draw[->, color=solidgreen] (b2) to (b3);

    \draw[->, color=solidgreen, dashed] (b0) to[out=90, in=-20] (a11);
    \draw[->, color=solidgreen, dashed] (b0) to[out=180, in=-70] (a13);

    \end{tikzpicture}
    \caption{An example of a graph from \textsc{Petals} with $c=4$, $l=2$ and $t=3$, and the associated link prediction instances (dashed). The relation types `\textcolor{dislikered}{red}', `\textcolor{likeblue}{blue}', `\textcolor{unnecessarypink}{pink}' and `\textcolor{unapologeticyellow}{yellow}' are structurally isomorphic, hence become equated in the eyes of the existing KGFMs. }
    \vspace{-1em}
    \label{fig:relish-example}
\end{figure}

State-of-the-art knowledge graph foundation models (KGFMs) typically impose relational invariance.
Formally, given two knowledge graphs $G=(V, E, R)$ and $H=(V', E', R')$, if there exists an isomorphism $(\pi, \phi)$ from $G$ to $H$, then for any $r\in R$, the model enforces identical representations for $r$ and its image $\phi(r)\in R'$.
This design promotes generalization across different graphs, as it aligns analogous relations, but reduces expressivity within a single graph ($G=H$), where relations related by automorphisms are forced to be indistinguishable.
Concretely, if an automorphism $(\pi, \phi)$ of $G$ maps $r_1$ to $r_2$, then the model must treat $r_1$ and $r_2$ as identical during inference. 
While some approaches mitigate this limitation via the labeling trick, assigning distinct embeddings to query-specific nodes and relations, this only isolates the queried relation type and does not resolve the underlying issue in general.

Motivated by this limitation, we introduce the \textsc{Petals} benchmark.
\textsc{Petals} comprises 220 graphs, each paired with a link prediction query $(h,r,?)$ and a target set $\{t_1, t_2\}$.
While $t_1$ and $t_2$ are non-isomorphic, KGFMs enforcing relational invariance are unable to distinguish them, producing identical predictions.
We empirically validate this property by evaluating the classification accuracy of marking $t_1$ as \textsc{true} and $t_2$ as \textsc{false}, reported in \Cref{tab:synthetic-exp-results}.

\subsection{Structure of the studied KGs}
Knowledge graphs in \textsc{Petals} follow a flower-like structure, parametrized by the number $c$ of `petals', their length $l$ and the length $t$ of the `stem' (see \Cref{fig:relish-example} for visualization).

\textbf{Vertices. }
Each `petal' is a set $A^{(i)}$ of $2l+1$ vertices $A^{(i)} = \left\{a_1^{(i)}, a_2^{(i)}, \dots, a_{2l+1}^{(i)}\right\}$, while the stem $B$ consists of $t+1$ nodes $B = \{b_0, b_1, \dots, b_t\}$.
The full set of entities is then:
\[
V = B \cup \bigcup_{i=1}^{c} A(i) = \{b_0, b_1, \dots, b_t\} \cup \left\{a_j^{(i)} \mid 1 \leq i \leq c, 1 \leq j \leq 2l+1 \right\}
\]
We call $b_0$ the `central' node, as it is connected to every petal, as described below.

\textbf{Edges. }
The nodes of the stem are connected in a consecutive manner by the same relation type $r_0$.
Precisely, for each $i \in {1, \cdots, t}$, there exists an edge $(b_{i-1},r_0, b_i)$.
Each petal $A^{(i)}$ is associated with two edge types $r^{(i)}_1, r^{(i)}_2$, and is connected to the central node $b_0$ with links $\!\left(b_0, r^{(i)}_1, a_1^{(i)}\right)$ and $\!\left(b_0, r^{(i)}_2,a_2^{(i)}\right)$. The rest of the petal is connected with edges of type $r_1^{(i)}$ only, going from $a^{(i)}_{2j-1}$ to $a^{(i)}_{2j+1}$, and from $a^{(i)}_{2j}$ to $a^{(i)}_{2j+2}$. Finally, there are also edges linking $a^{(i)}_{2\ell-1}$ and $a^{(i)}_{2l}$ to $a^{(i)}_{2l+1}$.
Therefore, the full set of edges can be characterized as:
\[
\begin{aligned}
E = \left(\{(b_{i-1}, r_0, b_i) \mid 1 \leq i \leq t\}\right) &\cup \left( \bigcup_{i=1}^c \left\{(b_0, r^{(i)}_1, a^{(i)}_1), (b_0, r^{(i)}_2, a^{(i)}_2)\right\} \right)\\
&\cup \left( \bigcup_{i=1}^c\bigcup_{j=1}^{\ell-1} \left\{\!\left(a^{(i)}_{2j-1}, r^{(i)}_1, a^{(i)}_{2j+1}\right), \!\left(a^{(i)}_{2j}, r^{(i)}_{1}, a^{(i)}_{2j+2}\right)\right\} \right)\\
&\cup \left(\bigcup_{i=1}^c \left\{\!\left(a^{(i)}_{2\ell-1}, r^{(i)}_1, a^{(i)}_{2l+1}\right), \!\left(a^{(i)}_{2l}, r^{(i)}_1, a^{(i)}_{2l+1}\right)\right\}\right)
\end{aligned}
\]

We select each of the types $r^{(i)}_1$ and $r^{(i)}_2$ from the set of considered relations $R = \{r_1, \dots, r_{|R|}\}$ so that any relation-invariant model will equate all petals (i.e. so that for each pair of petals, there is an automorphism taking one to another).
For instance, \Cref{fig:relish-example} displays a cyclic pattern, in which $r^{(i)}_2 = r^{(i+1)}_1$.
Such symmetry causes all petals to be isomorphic, and leads to the inability of KGFMs to distinguish between the relations inside them. 

\textbf{Link prediction instances. }
Although the petals are isomorphic to each other, given the asymmetry of edge types from $b_0$ to $a^{(i)}_1$ and $a^{(i)}_2$, the nodes within a single petal generally can be distinguished.
Therefore, for each graph with the structure as described above, we randomly sample one of the stem nodes $b_s$, and ask the link prediction query $(b_s, r_0, ?)$.
For the target nodes, we randomly select petal index $i$ and distance $j$ from the central node $b_0$, and consider the predictions for $a^{(i)}_{2j-1}$ and $a^{(i)}_{2j}$.
For example, \Cref{fig:relish-example} shows the case when $b_s = b_0$, $i=1$ and $j=1$, where the query is $(b_0, r_0, ?)$ and we are interested in the scores for $a^{(1)}_{1}$ and $a^{(1)}_{2}$.  

\subsection{Parameters and generation}
We construct \textsc{Petals} by manually designing 11 relation-assignment schemes that guarantee isomorphism across all petals.
For each such selection, which already determines the number $c$ of petals, we generate 20 graphs corresponding to all combinations of $t\in \{1,2,3,4\}$ and $l \in \{1,2,3,4,5\}$.
Each graph is paired with a link prediction query and two target nodes, sampled as described above.
This yields $11\cdot20 = 220$ instances that constitute the \textsc{Petals} benchmark.

\newpage

\section{Computational Complexity} 
\label{sec:complexity}

Recall that $I$ is the iterations in each forward pass of $\flock$; $n$ is the base walk count; $\ell$ is the walk length; $L$ is the number of linear sequence-model layers (such as GRU); and $d$ is the hidden dimension for the sequence processor. Note that in practice, we perform $P$ forward passes and ensemble their outputs to reduce variance. For a single pass ($P{=}1$), walk sampling and recording cost $O(n\ell)$, while the sequence processor with $L$ layers of hidden dimension $d$ costs $O(n\ell L d^{2})$. The consensus protocol costs $O(n\ell d)$. In total, the time complexity is $O\big(PIn\ell Ld^{2}\big)$, which scales linearly with the number of (base) walks $n$, the length of walks $\ell$, and the number of ensembled predictions $P$. We empirically verified this in \Cref{app:scalability_analysis}. 

Compared with message-passing KGFMs like $\ultra$ and $\trix$, $\flock$'s complexity is \emph{independent} of the graph size and average degree; empirically, however, using more walks (increasing $n$) and longer walks (increasing $\ell$) improves coverage and yields more fine-grained representation.  

The space complexity of $\flock$ per forward pass is $O(n\ell d)$ plus model parameters $O(Ld^{2})$. Note that running ensembles sequentially keeps peak memory near this bound, while parallel ensembling increases it by a factor of $P$.

\section{Scalability analysis}
\label{app:scalability_analysis}

\begin{table}[t!]
\centering
\caption{Training scalability analysis on a single NVIDIA RTX A6000 (48\,GB) with batch size = 8. $\flock$ using $16$ number of base walks and $1$ ensemble.}
\footnotesize
\label{tab:train-scalability}
\begin{tabular}{lccc}
\toprule
\textbf{Model} & \textbf{Parameters} & \textbf{Time / batch (s)} & \textbf{GPU memory (GB)} \\
\midrule
\ultra  & 168{,}705 & 0.117 & 2.110 \\
\trix   & 87{,}138  & 0.690 & 3.442 \\
\flock  & 801{,}969 & 1.30  & 27.89 \\
\bottomrule
\end{tabular}
\end{table}

\begin{table}[t!]
\centering
\caption{Inference scalability on a single NVIDIA RTX A6000 (48\,GB) with batch size $=8$.
Left columns specify base walks $n$ and ensembled passes $P$. Dashes indicate not applicable.}
\footnotesize
\label{tab:test-scalability}
\begin{tabular}{lcccc}
\toprule
\textbf{Model} & \textbf{\# Base Walks $n$} & \textbf{Ensemble $P$} & \textbf{Time /batch (s)} & \textbf{GPU memory (GB)} \\
\midrule
\ultra & ---  & 1  & 0.073 & 0.848 \\
\trix  & ---  & 1  & 0.500 & 1.382 \\
\midrule
\multirow{9}{*}{\flock}
& 16  & 1   & 1.26 & 2.868 \\
&  16 & 2   & 1.99 & 2.864 \\
&  16 & 4   & 3.24 & 3.938 \\
&  16 & 8   & 5.45 & 5.172 \\
&  16 & 16  & 9.45 & 8.892 \\
& 128 & 1   & 1.77 & 5.000 \\
& 128 & 2   & 2.80 & 7.880 \\
& 128 & 4   & 5.00 & 14.42 \\
& 128 & 8   & 10.05 & 43.68 \\
\bottomrule
\end{tabular}
\end{table}

To investigate the scalability of the proposed method $\flock$, we report the training and inference time per batch and peak GPU memory for $\ultra$, $\trix$, and variants of $\flock$ on a single RTX A6000 (48 GB) in \Cref{tab:train-scalability,tab:test-scalability}. 

\paragraph{Training.} During training, we fix $\flock$ to $n=16$ base walks and with an ensemble size of $P=1$, which yields higher cost than $\ultra$/$\trix$ but remains feasible on a single GPU. In addition, unlike $\ultra$/$\trix$, $\flock$ does not rely on GNN message passing where highly optimized fused sparse kernels (e.g., RSPMM kernel developed in \citet{zhu2022neural}) accelerate computation; instead, $\flock$’s runtime is dominated by walk sampling and sequence encoding, making time per batch the main bottleneck. As a result, pretraining typically takes about three days. One avenue for future work is to develop similarly highly optimized kernels for random-walk sampling.

\paragraph{Inference.} Additionally, we report the inference results in \Cref{tab:test-scalability}, where we vary the number of walks $n$ and ensembled passes $P$. We observe near-linear growth of latency and VRAM with $n$, reflecting the dominant costs of walk sampling and sequence processing. Note that during inference, ensembled predictions are parallelizable, meaning that with sufficient GPU memory, these $P$ stochastic passes can be executed concurrently, so the latency grows sublinearly in $P$, while peak VRAM scales roughly linearly with $P$. In practice, reducing $n$ (walks) or $P$ (ensembled passes) lowers both memory and latency, while larger $n/P$ settings trade extra cost for better coverage and stability on harder KGs.

\section{Noise injection over existing KGFMs}
\label{sec:noise_injection}

\begin{table}[t!]
\caption{Noise injection over the best performing KGFM baseline $\trix$.}
  \centering
  \setlength{\tabcolsep}{4pt}
  \footnotesize
  \begin{subtable}[t]{0.33\linewidth}
    \centering
    \caption{Zero-shot entity prediction.}
    \label{tab:noise-injection-entity}
    \begin{tabular}{@{}lcc@{}}
      \toprule
      & \textbf{MRR} & \textbf{Hits@10} \\
      \midrule
      $\trix$      & 0.366 & 0.518 \\
      \,+ noise    & 0.385 & 0.545 \\
      $\flock$     & \textbf{0.391} & \textbf{0.560} \\
      \bottomrule
    \end{tabular}
  \end{subtable}\hfill
  \begin{subtable}[t]{0.33\linewidth}
    \centering
    \caption{Zero-shot relation prediction.}
    \label{tab:noise-injection-relation}
    \begin{tabular}{@{}lcc@{}}
      \toprule
      & \textbf{MRR} & \textbf{Hits@1} \\
      \midrule
      $\trix$      & 0.792 & 0.687 \\
      \,+ noise    & 0.739 & 0.643 \\
      $\flock$     & \textbf{0.881} & \textbf{0.817} \\
      \bottomrule
    \end{tabular}
  \end{subtable}\hfill
  \begin{subtable}[t]{0.33\linewidth}
    \centering
\caption{Accuracy on \textsc{Petals}.}
\label{tab:noise-injection-petals}
\begin{tabular}{@{}lc@{}}
  \toprule
  & {\textbf{Accuracy}} \\
  \midrule
  {$\trix$}      & {50\%} \\
  {\,+ noise}    & {52\%} \\
  {$\flock$}     & {\textbf{100\%}} \\
  \bottomrule
\end{tabular}
\end{subtable}
\end{table}

\paragraph{Setup.} \draft{Since noise injection is a possible way to build a probabilistic equivariant KGFM in a different way from our approach \citep{gao2023double}, it is natural to ask how such KGFMs would perform compared to $\flock$. To answer this question,} we apply noise injection over the best performing KGFMs baselines $\trix$. Specifically, in each forward pass, we add element-wise noise sampled from a uniform distribution $\epsilon \sim \gU[-0.5,0]$ to all relation and entity embeddings after the initialization stage. Note that the addition of noise technically breaks deterministic node-relation equivariance, but the resulting model ($\trix$ + noise) still respects probabilistic node-relation equivariance. We then pretrain $\trix$ using the same experimental setup shown in \Cref{sec:main_experiment}, and compare with $\trix$ without noise injection and $\flock$. To minimize the variance induced by injected noise and to ensure a fair comparison, we report ensembled prediction results with 16 samples for both $\trix$ + noise and $\flock$.
\draft{This is a strong baseline implementing the ideas of prior work on noise injection and test-time ensembling for message passing networks on KGs \citep{ingram, gao2023double}.}

\paragraph{Results.} We report the average zero-shot performance for entity prediction and relation prediction over 54 KGs in \Cref{tab:noise-injection-entity,tab:noise-injection-relation}, respectively, {as well as trained performance for \textsc{Petals} in} \Cref{tab:noise-injection-petals}. Across all tasks, $\trix$ with naive noise injection fails to close the gap between $\flock$. In particular, $\trix$ + noise degrades compared with vanilla $\trix$ without noise injection in relation prediction, while boosting the performance in the entity prediction task. We hypothesize that such a difference lies in the added randomization breaks symmetry among entity embeddings more than among relation embeddings, and entity prediction depends more on having distinguishable entity representations than relation prediction does.
Additionally, we attribute this performance gap between $\flock$ and $\trix$ + noise to the source of randomization. $\flock$ introduces stochasticity through random walks, which induces \emph{structure-informed} perturbations that respect the underlying topology. In contrast, $\trix$ with naive noise injection attempts to break deterministic node-relation equivariance by introducing structure-agonistic noise, which might, in turn, hurt the model's generalization. 
Together, these findings suggest that simply adding structure-agonistic noise is insufficient; performance gains only arise when stochasticity is topology-aware and is induced from the graph structure in a principled way.

\section{Case study: relation embedding on \textsc{Metafam}}
\label{app:case_study}

\begin{figure}[t]
    \centering
    \begin{subfigure}[b]{0.49\textwidth}
        \includegraphics[width=\textwidth]{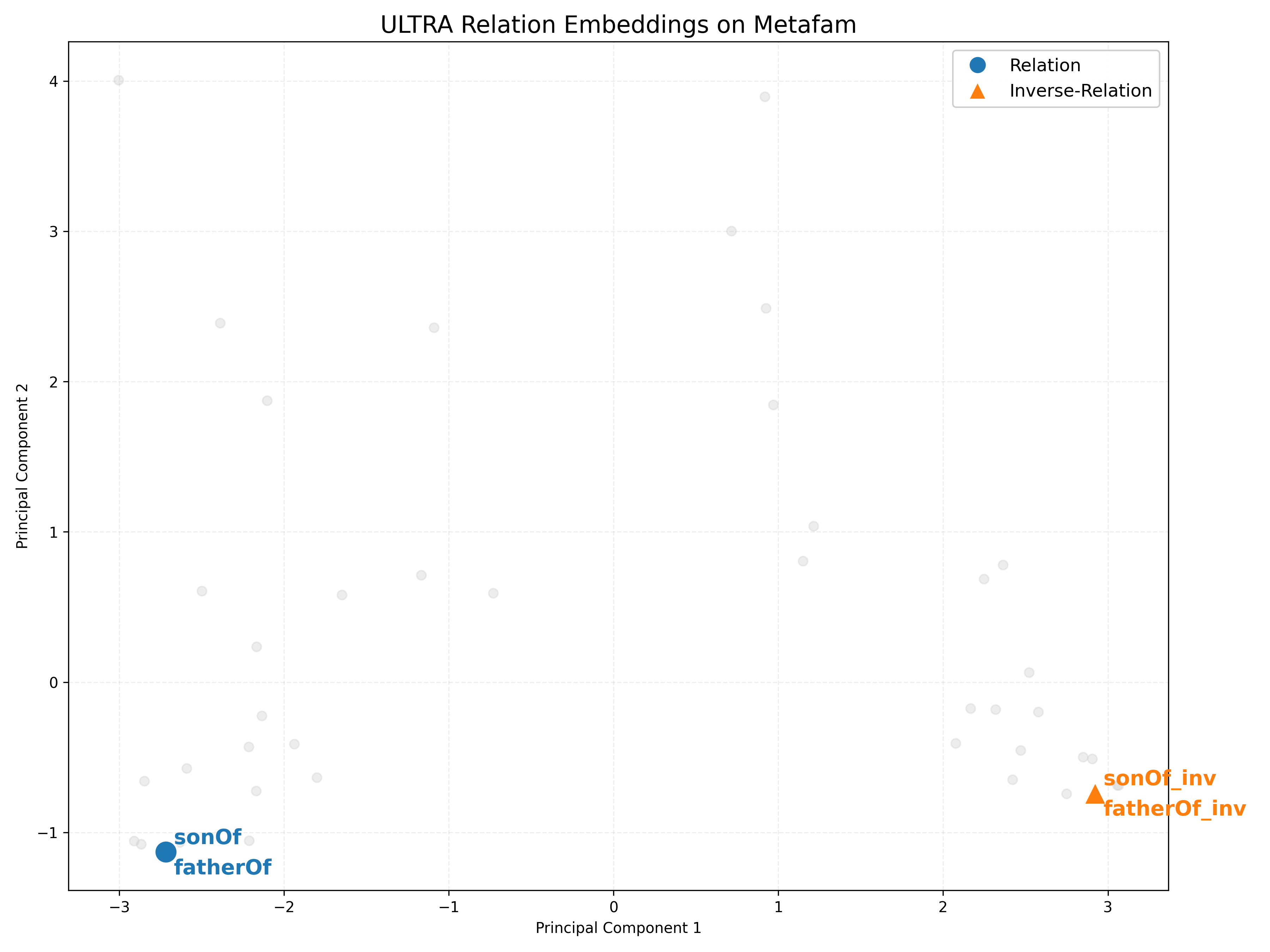}
    \end{subfigure}
    \hfill
    \begin{subfigure}[b]{0.49\textwidth}
        \includegraphics[width=\textwidth]{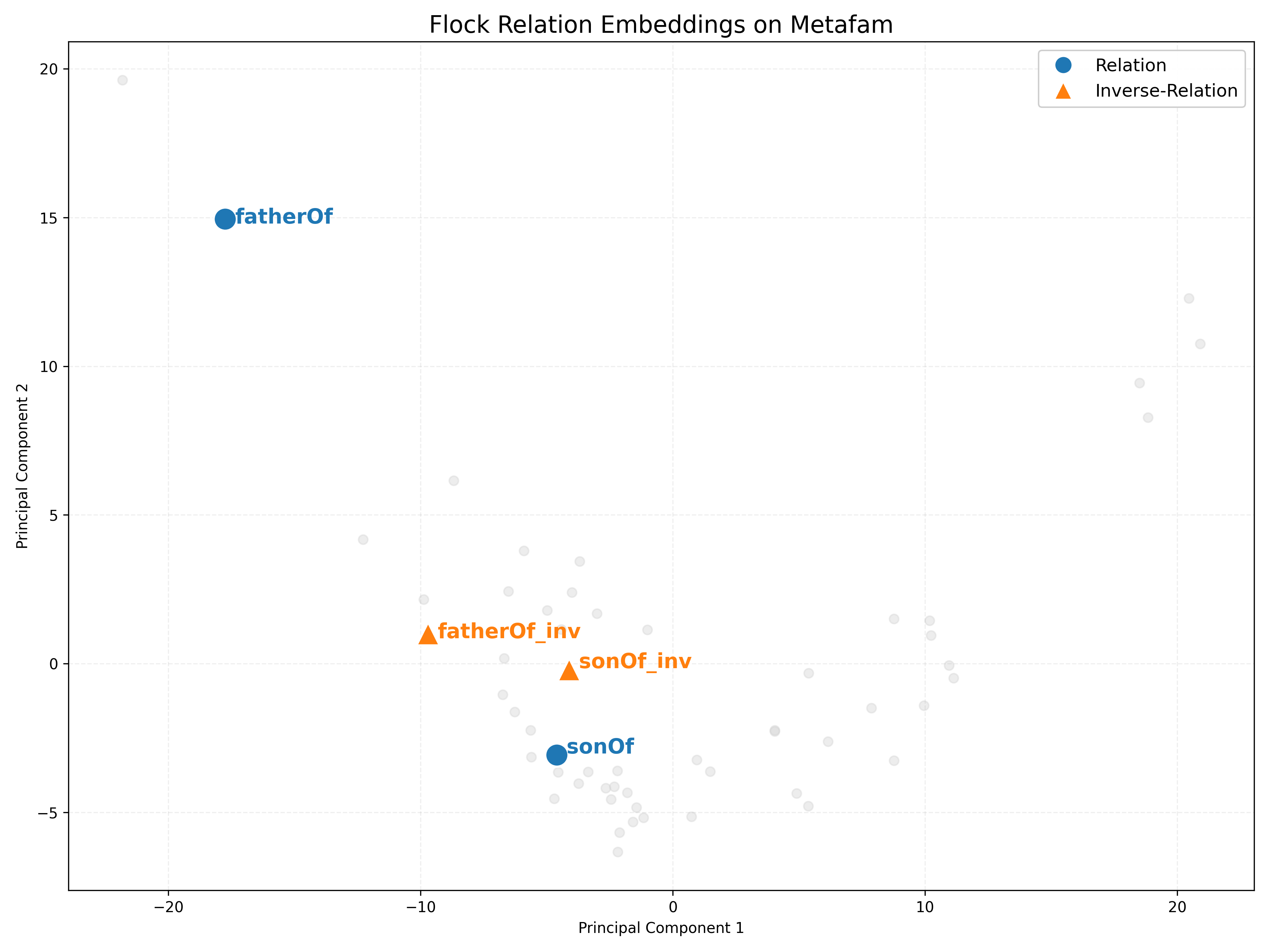}
    \end{subfigure}
    \caption{{PCA of relation embeddings on \textsc{Metafam}. $\ultra$ (Left) maps several inverse pairs (e.g., \texttt{fatherOf} vs.\ \texttt{sonOf})
    to almost similar embeddings, where $\flock$ (Right) yields clearly separated embeddings, indicating that its probabilistic equivariance allow $\flock$ to distinguish between these semantically different relations, explaining its strong zero-shot performances.}}
    \label{fig:case_study}
\end{figure}

\paragraph{Setup.} To further showcase why expressivity matters for zero-shot generalization, we present a case study on the \textsc{Metafam} dataset~\citep{zhou2023multitaskperspetivelinkprediction}. \textsc{Metafam} is built from a fixed family-relations ontology:
during training, models observe edges with relations
\texttt{motherOf}, \texttt{fatherOf}, \texttt{daughterOf}, \texttt{sonOf},
while the test queries only involve \texttt{motherOf} and \texttt{fatherOf}.
Up to gender symmetries, this reduces to two effective predictive patterns
(\emph{parent\_of} vs.\ \emph{child\_of}), and the test set focuses on a single one
(\emph{parent\_of}). 

In the zero-shot setting, KGFMs cannot adapt to this ontology and must rely on their pretrained relation representations.
Notice that here, \textsc{Metafam} is challenging: many relations are structurally similar (e.g., \texttt{fatherOf}, \texttt{sonOf}, \texttt{sisterOf}, \texttt{nieceOf}) yet encode opposite predictive patterns.

\paragraph{Result.}

\Cref{fig:case_study} shows that $\ultra$’s relation embeddings largely collapse
these families, placing \texttt{fatherOf} and \texttt{sonOf} in almost identical
positions in the PCA plane.
This collapse makes it difficult to distinguish who is the parent and who is the child, leading to poor zero-shot performance. 
$\flock$, in contrast, can distinguish between these relations even if they are structurally similar, thanks to its random-walk sampling which introduce probabilistic equivariance on nodes and relations. As a results, $\flock$ can produce distinct embeddings to \texttt{fatherOf} and \texttt{sonOf} and achieves much stronger zero-shot performance on \textsc{Metafam}.

\section{Analysis of KG sparsity and performance}\label{app:sparsity}

\begin{figure}[t]
    \centering
    \includegraphics[width=0.6\textwidth]{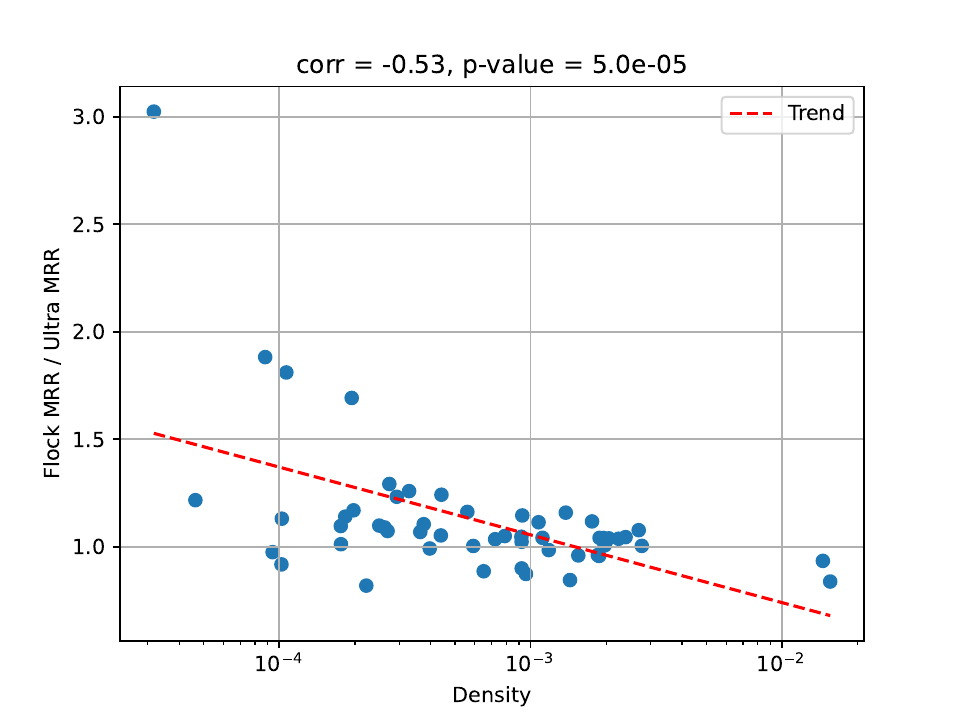}
    \caption{{The zero-shot entity prediction performance of $\flock$ relative to $\ultra$, plotted against the densities of the 53 KGs. Performance of $\flock$ and log-density of KGs have a Pearson correlation coefficient of -0.53 with p-value 5.0e-5, showing a statistically significant negative correlation.}}
    \label{fig:density}
\end{figure}

\paragraph{Setup.}

While \Cref{app:case_study} explains $\flock$'s high performance for \textsc{Metafam}, understanding its performances for other general KGs would be beneficial.
Thus, we present an additional analysis on the 53 remaining KGs by identifying a structural property that is indicative of the performance of $\flock$.
For the performance measure, we use the relative gain of $\flock$'s zero-shot entity prediction MRR compared to $\ultra$.
For the structural property, we focus on density of KGs defined by $\frac{|E|}{|V|(|V|-1)}$ which affects the speed of random walks traversing all edges of a KG (intuitively because more edges means more time needed to traverse all of them), and hence is relevant in the context of our theory in \Cref{sec:theoretical_properties}.
We make the argument more grounded below.

\paragraph{Result.}

\Cref{fig:density} shows that $\flock$ tends to perform well on sparse KGs, while less so on dense KGs, and the tendency is statistically significant. Interestingly, this agrees with our theoretical analysis in \Cref{sec:theoretical_properties}, in which a necessary condition for universality is that the random walk covers all edges of a KG with a high probability (Proposition~\ref{prop:universal_approximator}). The time taken until covering all edges is called the edge cover time, and it is known to be e.g., $O(|V| |E|)$ for uniform walks \citep{zuckerman1991on}, which is proportional to the density of a graph. This suggests the performance of $\flock$ is associated with the easiness to visit as many edges as possible rapidly, which is more challenging for dense KGs. This analysis is consistent with the observations of recent works on graph learning based on random walks, e.g., \citet[Section 6]{wang2024rum} and \citet{kim_rwnn}.

\section{Further experimental details}
\label{app:experimental_details}

\paragraph{Datasets.}
This section provides the full details for all experiments described in the main text. For pretraining, we fit the $\flock$ model on three standard transductive knowledge graph completion benchmarks, following \citet{galkin2023ultra}: FB15k-237~\citep{FB15k237}, WN18RR~\citep{Dettmers2018FB}, and CoDEx Medium~\citep{safavi-koutra-2020-codex}.
Then, we evaluate zero-shot transfer of both entity prediction and relation prediction, as well as the finetuning performance on multiple datasets grouped as follows:

\begin{itemize}
    
\item \textbf{Inductive $e,r$.} Link prediction tasks involving previously unseen nodes and relation types. This includes the 13 datasets from \textsc{Ingram}~\citep{ingram}: FB-25, FB-50, FB-75, FB-100, WK-25, WK-50, WK-75, WK-100, NL-0, NL-25, NL-50, NL-75, NL-100, as well as 10 datasets from MTDEA~\citep{zhou2023multitaskperspetivelinkprediction}: MT1 tax, MT1 health, MT2 org, MT2 sci, MT3 art, MT3 infra, MT4 sci, MT4 health, Metafam, and FBNELL.

\item  \textbf{Inductive $e$.} Link prediction tasks involving novel nodes but fixed relation types. This category comprises 12 GraIL datasets~\citep{grail2020teru} (WN-v1 through WN-v4, FB-v1 through FB-v4, NL-v1 through NL-v4), 4 INDIGO benchmarks~\citep{INDIGO} (HM 1k, HM 3k, HM 5k, HM Indigo), and 2 NodePiece datasets~\citep{galkin2022nodepiece}: ILPC Small and ILPC Large.

\item  \textbf{Transductive.} Link prediction tasks where both entities and relations are observed during training. These include CoDEx Small, CoDEx Large~\citep{safavi-koutra-2020-codex}, NELL-995~\citep{nell995WenhanXiongDeepPath}, YAGO 310~\citep{Mahdisoltani2015YAGO3AK}, WDsinger, NELL23k, FB15k-237(10), FB15k-237(20), FB15k-237(50)~\citep{FB15k237-10-20-50}, AristoV4~\citep{chenrelation}, DBpedia100k~\citep{ding2018improving}, ConceptNet100k~\citep{malaviya2020commonsense}, and Hetionet~\citep{hetionet}.

\end{itemize}

\paragraph{Full results of \Cref{sec:main_experiment}.}
Full tables of zero-shot entity prediction results are presented in \Cref{tab:zeroshot-entity}, and full tables of finetuned performance are given in \Cref{tab:finetuned-entity}.
We further provide the complete zero-shot and finetuned relation prediction results in \Cref{tab:zeroshot-relation} and \Cref{tab:finetuned-relation}.
The dataset statistics are in \Cref{tab:inductive-e-r-statistics}, \Cref{tab:inductive-e-statistics} and \Cref{tab:transductive-statistics}.
\Cref{tab:pretrain-mix-table} presents the pretraining graph mix shown in \Cref{sec:scaling_main}.
Finally, detailed hyperparameter settings can be found in \Cref{tab:hyperparameter}, \Cref{tab:hyperparameter-finetune-1} and \Cref{tab:hyperparameter-finetune-2}.

\paragraph{Training.}  Following conventions in the literature~\citep{zhu2022neural,huang2023theory}, for each triple $(h,r,t)$, we add the corresponding inverse triple $(h,r^{-1},t)$, where $r^{-1}$ is a fresh relation symbol.
All $\flock$ instances and its variants are optimized to minimize the negative log-likelihood over positive and negative facts under the \emph{partial completeness assumption}~\citep{partial_completeness_assumption}, where
negatives are generated by randomly corrupting either the head or the tail entity (for entity prediction)
or by corrupting the relation (for relation prediction).
To reduce overfitting, we remove edges that directly connect the queried endpoints.
The best checkpoint is selected by validation performance.
For entity prediction, we take the embedding for potential target $t$ and relations $r$, and obtain the score $p(h,r,t)$ by passing into a 2-layer MLP. For relation prediction, we concatenate the embedding for source $h$, target $t$, and potential relation $r$ to obtain the score $p(h,r,t)$. 

Let \((h,r,t)\) be a positive triple and let \(k\) denote the number of negatives sampled per positive, where $(h_i,r,t_i)$ is the $i$-th negative samples for entity prediction, and $h,r_i,t_i$ is the $i$-th negative samples for relation prediction. 
Following \citet{sun2019rotate}, we also consider a self-adversarial variant where negatives are reweighted
according to their current difficulty. With adversarial temperature \(\alpha>0\),
the weights for entity and relation prediction, respectively, are
\[
w^{\mathrm{ent}}_{i,\alpha}
=\operatorname{Softmax}\!\left(\frac{\log\!\big(1-p(h_i',r,t_i')\big)}{\alpha}\right),\qquad
w^{\mathrm{rel}}_{i,\alpha}
=\operatorname{Softmax}\!\left(\frac{\log\!\big(1-p(h,r_i',t)\big)}{\alpha}\right).
\]
The corresponding losses become
\begin{align*}
\mathcal{L}_{\mathrm{ent}}^{\mathrm{adv}}
&= -\log p(h,r,t)\;-\;\sum_{i=1}^{k} w^{\mathrm{ent}}_{i,\alpha}\,\log\!\big(1-p(h_i',r,t_i')\big), \\
\mathcal{L}_{\mathrm{rel}}^{\mathrm{adv}}
&= -\log p(h,r,t)\;-\;\sum_{i=1}^{k} w^{\mathrm{rel}}_{i,\alpha}\,\log\!\big(1-p(h,r_i',t)\big).
\end{align*}

\begin{table}[p]
\caption{Zero-shot entity prediction results.
Bold indicates the best score per row.}
\label{tab:zeroshot-entity}
\centering
\footnotesize
\begin{adjustbox}{max height=0.48\textheight}
\begin{tabular}{lcccc|cc}
\toprule
\multirow{2}{*}{Dataset} 
& \multicolumn{2}{c}{$\ultra$} 
& \multicolumn{2}{c}{$\trix$} 
& \multicolumn{2}{c}{$\flock$} \\
\cmidrule(lr){2-3}\cmidrule(lr){4-5}\cmidrule(lr){6-7}
& MRR & Hits@10 & MRR & Hits@10 & MRR & Hits@10 \\
\midrule 
\multicolumn{7}{c}{\textbf{Inductive} $e,r$} \\ 
\midrule
FB-25   & 0.388 & 0.640 & 0.393 & 0.650 & \textbf{0.404} & \textbf{0.664} \\
FB-50   & 0.338 & 0.543 & 0.334 & 0.547 & \textbf{0.352} & \textbf{0.566} \\
FB-75   & 0.403 & 0.604 & 0.401 & 0.611 & \textbf{0.418} & \textbf{0.622} \\
FB-100  & 0.449 & 0.642 & 0.436 & 0.635 & \textbf{0.452} & \textbf{0.663} \\
WK-25   & \textbf{0.316} & \textbf{0.532} & 0.305 & 0.496 & 0.280 & 0.491 \\
WK-50   & \textbf{0.166} & \textbf{0.324} & \textbf{0.166} & 0.313 & 0.136 & 0.278 \\
WK-75   & 0.365 & 0.537 & 0.368 & 0.513 & 0.382 & \textbf{0.538} \\
WK-100  & 0.164 & 0.286 & \textbf{0.188} & 0.299 & 0.187 & \textbf{0.304} \\
NL-0    & 0.342 & 0.523 & \textbf{0.385} & 0.549 & 0.381 & \textbf{0.606} \\
NL-25   & \textbf{0.395} & 0.569 & 0.377 & 0.589 & 0.345 & \textbf{0.590} \\
NL-50   & \textbf{0.407} & \textbf{0.570} & 0.404 & 0.548 & 0.366 & 0.565 \\
NL-75   & \textbf{0.368} & \textbf{0.547} & 0.351 & 0.525 & 0.311 & 0.524 \\
NL-100  & 0.471 & 0.651 & \textbf{0.486} & 0.676 & 0.452 & \textbf{0.692}\\
MT1 tax    & 0.224 & 0.305 & \textbf{0.358} & 0.452 & 0.282 & \textbf{0.383} \\
MT1 health & 0.298 & 0.374 & 0.376 & 0.457 & \textbf{0.385} & \textbf{0.481} \\
MT2 org    & 0.095 & 0.159 & 0.091 & 0.156 & \textbf{0.100} & \textbf{0.163} \\
MT2 sci    & 0.258 & 0.354 & \textbf{0.323} & \textbf{0.465} & 0.318 & 0.458 \\
MT3 art    & 0.259 & 0.402 & 0.284 & 0.441 & \textbf{0.301} & \textbf{0.466} \\
MT3 infra  & 0.619 & 0.755 & 0.655 & 0.797 & \textbf{0.684} & \textbf{0.821} \\
MT4 sci    & 0.274 & 0.449 & 0.290 & 0.460 & \textbf{0.301} & \textbf{0.463} \\
MT4 health & 0.624 & 0.737 & 0.677 & 0.775 & \textbf{0.680} & \textbf{0.780} \\
Metafam    & 0.238 & 0.644 & 0.341 & 0.815 & \textbf{0.476} & \textbf{0.935} \\
FBNELL     & 0.485 & 0.652 & 0.473 & 0.660 & \textbf{0.502} & \textbf{0.700} \\
\midrule 
\multicolumn{7}{c}{\textbf{Inductive} $e$} \\ 
\midrule
FB-v1     & 0.498 & 0.656 & \textbf{0.515} & 0.682 & 0.500 & \textbf{0.697} \\
FB-v2     & 0.512 & 0.700 & 0.525 & 0.730 & \textbf{0.535} & \textbf{0.737} \\
FB-v3     & 0.491 & 0.654 & 0.501 & 0.669 & \textbf{0.511} & \textbf{0.685} \\
FB-v4     & 0.486 & 0.677 & 0.493 & 0.687 & \textbf{0.505} & \textbf{0.702} \\
WN-v1     & 0.648 & 0.768 & \textbf{0.699} & 0.791 & 0.698 & \textbf{0.803} \\
WN-v2     & 0.663 & 0.765 & 0.678 & 0.781 & \textbf{0.696} & \textbf{0.790} \\
WN-v3     & 0.376 & 0.476 & 0.418 & 0.541 & \textbf{0.467} & \textbf{0.608} \\
WN-v4     & 0.611 & 0.705 & 0.648 & 0.723 & \textbf{0.653} & \textbf{0.729} \\
NL-v1     & 0.785 & \textbf{0.913} & \textbf{0.806} & 0.898 & 0.658 & 0.863 \\
NL-v2     & 0.526 & 0.707 & 0.569 & 0.768 & \textbf{0.588} & \textbf{0.797} \\
NL-v3     & 0.515 & 0.702 & 0.558 & 0.743 & \textbf{0.590} & \textbf{0.783} \\
NL-v4     & 0.479 & 0.712 & 0.538 & 0.765 & \textbf{0.555} & \textbf{0.786} \\
HM 1k     & 0.059 & 0.092 & \textbf{0.072} & \textbf{0.128} & 0.069 & 0.119 \\
HM 3k     & 0.037 & 0.077 & \textbf{0.069} & \textbf{0.119} & 0.067 & 0.118 \\
HM 5k     & 0.034 & 0.071 & 0.062 & 0.110 & \textbf{0.064} & \textbf{0.116} \\
HM Indigo & \textbf{0.440} & \textbf{0.648} & 0.436 & 0.645 & 0.423 & 0.638 \\
ILPC Small& 0.302 & 0.443 & 0.303 & 0.455 & \textbf{0.309} & \textbf{0.459} \\
ILPC Large& 0.290 & 0.424 & 0.307 & 0.428 & \textbf{0.318} & \textbf{0.438} \\
\midrule
\multicolumn{7}{c}{\textbf{Transductive}} \\ 
\midrule
NELL995         & 0.406 & 0.543 & 0.472 & 0.629 & \textbf{0.494} & \textbf{0.655} \\
NELL23k         & 0.239 & 0.408 & \textbf{0.290} & \textbf{0.497} & 0.233 & 0.398 \\
WDsinger        & 0.382 & 0.498 & \textbf{0.511} & \textbf{0.609} & 0.410 & 0.528 \\
ConceptNet100k  & 0.082 & 0.162 & 0.193 & 0.345 & \textbf{0.248} & \textbf{0.453} \\
CoDEx Small     & \textbf{0.472} & 0.667 & \textbf{0.472} & \textbf{0.670} & 0.441 & 0.644 \\
CoDEx Large     & 0.338 & \textbf{0.469} & 0.335 & \textbf{0.469} & \textbf{0.342} & 0.464 \\
YAGO310         & \textbf{0.451} & 0.615 & 0.409 & 0.627 & 0.414 & \textbf{0.674} \\
AristoV4        & 0.182 & 0.282 & 0.181 & 0.286 & \textbf{0.308} & \textbf{0.443} \\
DBpedia100k     & 0.398 & 0.576 & 0.426 & 0.603 & \textbf{0.450} & \textbf{0.627} \\
Hetionet        & 0.257 & 0.379 & \textbf{0.279} & \textbf{0.420} & 0.246 & 0.371 \\
FB15k-237(10) & \textbf{0.248} & 0.398 & 0.246 & 0.393 & 0.246 & \textbf{0.402} \\
FB15k-237(20) & 0.272 & 0.436 & 0.269 & 0.430 & \textbf{0.273} & \textbf{0.444} \\
FB15k-237(50) & \textbf{0.324} & \textbf{0.526} & 0.321 & 0.521 & 0.319 & 0.518 \\
\bottomrule
\end{tabular}
\end{adjustbox}
\end{table}

\begin{table}[p]
  \caption{Finetuned entity prediction results. Bold indicates the best score per row.}
  \label{tab:finetuned-entity}
  \centering
  \footnotesize
  \begin{adjustbox}{max height=0.48\textheight}
  \begin{tabular}{lcccc|cc}
    \toprule
    \multirow{2}{*}{Dataset} & \multicolumn{2}{c}{$\ultra$} & \multicolumn{2}{c}{$\trix$} & \multicolumn{2}{c}{$\flock$} \\
    \cmidrule(lr){2-3}\cmidrule(lr){4-5}\cmidrule(lr){6-7}
     & MRR & Hits@10 & MRR & Hits@10 & MRR & Hits@10 \\
    \midrule
    \multicolumn{7}{c}{\textbf{Inductive} $e,r$} \\
    \midrule
    FB-25      & 0.383 & 0.635 & 0.393 & 0.650 & \textbf{0.405} & \textbf{0.666} \\
    FB-50      & 0.334 & 0.538 & 0.334 & 0.547 & \textbf{0.357} & \textbf{0.570} \\
    FB-75      & 0.400 & 0.598 & 0.401 & 0.611 & \textbf{0.425} & \textbf{0.630} \\
    FB-100     & 0.444 & 0.643 & 0.436 & 0.633 & \textbf{0.460} & \textbf{0.668} \\
    WK-25      & \textbf{0.321} & \textbf{0.535} & 0.300 & 0.493 & 0.298 & 0.506 \\
    WK-50      & 0.140 & 0.280 & \textbf{0.166} & \textbf{0.313} & 0.127 & 0.260 \\
    WK-75      & 0.380 & 0.530 & 0.368 & 0.513 & \textbf{0.405} & \textbf{0.556} \\
    WK-100     & 0.168 & 0.286 & \textbf{0.188} & 0.299 & 0.187 & \textbf{0.306} \\
    NL-0       & 0.329 & 0.551 & 0.385 & 0.549 & \textbf{0.418} & \textbf{0.619} \\
    NL-25      & \textbf{0.407} & 0.596 & 0.377 & 0.589 & 0.405 & \textbf{0.626} \\
    NL-50      & \textbf{0.418} & \textbf{0.595} & 0.405 & 0.555 & 0.391 & 0.562 \\
    NL-75      & \textbf{0.374} & \textbf{0.570} & 0.351 & 0.525 & 0.344 & 0.544 \\
    NL-100     & 0.458 & 0.684 & 0.482 & 0.691 & \textbf{0.486} & \textbf{0.714} \\
    MT1 tax    & 0.330 & 0.459 & 0.397 & \textbf{0.508} & \textbf{0.413} & 0.497 \\
    MT1 health & 0.380 & 0.467 & 0.376 & 0.457 & \textbf{0.394} & \textbf{0.493} \\
    MT2 org    & 0.104 & 0.170 & 0.098 & 0.162 & \textbf{0.107} & \textbf{0.174} \\
    MT2 sci    & 0.311 & 0.451 & 0.331 & \textbf{0.526} & \textbf{0.366} & 0.525 \\
    MT3 art    & 0.306 & 0.473 & 0.289 & 0.441 & \textbf{0.330} & \textbf{0.483} \\
    MT3 infra  & 0.657 & 0.807 & 0.672 & 0.810 & \textbf{0.709} & \textbf{0.838} \\
    MT4 sci    & 0.303 & 0.478 & 0.305 & 0.482 & \textbf{0.324} & \textbf{0.509} \\
    MT4 health & 0.704 & 0.785 & 0.702 & 0.785 & \textbf{0.711} & \textbf{0.790} \\
    Metafam    & \textbf{0.997} & \textbf{1.000} & \textbf{0.997} & \textbf{1.000} & 0.992 & \textbf{1.000} \\
    FBNELL     & 0.481 & 0.661 & 0.478 & 0.655 & \textbf{0.531} & \textbf{0.714} \\
    \midrule
    \multicolumn{7}{c}{\textbf{Inductive} $e$} \\
    \midrule
    FB-v1      & 0.509 & 0.670 & 0.515 & 0.682 & \textbf{0.549} & \textbf{0.721} \\
    FB-v2      & 0.524 & 0.710 & 0.525 & 0.730 & \textbf{0.553} & \textbf{0.754} \\
    FB-v3      & 0.504 & 0.663 & 0.501 & 0.669 & \textbf{0.528} & \textbf{0.696} \\
    FB-v4      & 0.496 & 0.684 & 0.493 & 0.687 & \textbf{0.510} & \textbf{0.702} \\
    WN-v1      & 0.685 & 0.793 & 0.705 & 0.798 & \textbf{0.715} & \textbf{0.811} \\
    WN-v2      & 0.679 & 0.779 & 0.682 & 0.780 & \textbf{0.702} & \textbf{0.795} \\
    WN-v3      & 0.411 & 0.546 & 0.425 & 0.543 & \textbf{0.494} & \textbf{0.627} \\
    WN-v4      & 0.614 & 0.720 & 0.650 & 0.722 & \textbf{0.665} & \textbf{0.741} \\
    NL-v1      & 0.757 & 0.878 & \textbf{0.804} & 0.899 & 0.762 & \textbf{0.928} \\
    NL-v2      & 0.575 & 0.761 & 0.571 & 0.764 & \textbf{0.612} & \textbf{0.806} \\
    NL-v3      & 0.563 & 0.755 & 0.571 & 0.759 & \textbf{0.606} & \textbf{0.803} \\
    NL-v4      & 0.469 & 0.733 & 0.551 & 0.772 & \textbf{0.572} & \textbf{0.801} \\
    HM 1k      & 0.042 & 0.100 & \textbf{0.072} & 0.128 & 0.071 & \textbf{0.153} \\
    HM 3k      & 0.030 & 0.090 & \textbf{0.069} & 0.119 & 0.067 & \textbf{0.153} \\
    HM 5k      & 0.025 & 0.068 & \textbf{0.074} & 0.118 & 0.061 & \textbf{0.130} \\
    HM Indigo  & 0.432 & 0.639 & \textbf{0.436} & \textbf{0.645} & 0.418 & 0.633 \\
    ILPC Small & 0.303 & 0.453 & 0.303 & \textbf{0.455} & \textbf{0.305} & 0.454 \\
    ILPC Large & 0.308 & \textbf{0.431} & 0.310 & \textbf{0.431} & \textbf{0.320} & 0.441 \\
    \midrule
    \multicolumn{7}{c}{\textbf{Transductive}} \\ 
    \midrule
    NELL995        & 0.509 & 0.660 & 0.506 & 0.648 & \textbf{0.531} & \textbf{0.665} \\
    NELL23k        & 0.268 & 0.450 & \textbf{0.306} & \textbf{0.536} & 0.280 & 0.465 \\
    WDsinger       & 0.417 & 0.526 & \textbf{0.502} & \textbf{0.620} & 0.435 & 0.543 \\
    ConceptNet100k & 0.310 & 0.529 & 0.340 & 0.564 & \textbf{0.352} & \textbf{0.580} \\
    CoDEx Small    & \textbf{0.490} & \textbf{0.686} & 0.484 & 0.676 & 0.463 & 0.648 \\
    CoDEx Large    & 0.343 & 0.478 & \textbf{0.348} & \textbf{0.481} & 0.342 & 0.467 \\
    YAGO310        & \textbf{0.557} & \textbf{0.710} & 0.541 & 0.702 & 0.552 & 0.700 \\
    AristoV4       & 0.343 & 0.496 & 0.345 & 0.499 & \textbf{0.383} & \textbf{0.523} \\
    DBpedia100k    & 0.436 & 0.603 & 0.457 & 0.619 & \textbf{0.470} & \textbf{0.623} \\
    Hetionet       & \textbf{0.399} & \textbf{0.538} & 0.394 & 0.534 & 0.314 & 0.465 \\
    FB15k-237(10)   & 0.254 & 0.411 & 0.253 & 0.408 & \textbf{0.260} & \textbf{0.420} \\
    FB15k-237(20)   & 0.274 & 0.445 & 0.273 & 0.441 & \textbf{0.284} & \textbf{0.459} \\
    FB15k-237(50)   & \textbf{0.325} & \textbf{0.528} & 0.322 & 0.522 & 0.317 & 0.517 \\
    \midrule
    \multicolumn{7}{c}{\textbf{Pretrained}} \\ 
    \midrule
    FB15k-237 & \textbf{0.368}	 & \textbf{0.564} &	0.366	& 0.559 & 0.343	&	0.532\\
    WN18RR & 0.480	& 0.614	& 0.514	& 0.611  & \textbf{0.550} &	\textbf{0.656}\\ 
    CoDEx Medium & \textbf{0.372}  & \textbf{0.525} &	0.365 &	0.521  & 0.351 &	0.496\\
    \bottomrule
  \end{tabular}
  \end{adjustbox}
\end{table}

\begin{table}[p]
\caption{Zero-shot relation prediction results. Bold indicates the best score per row.}
\label{tab:zeroshot-relation}
\centering
\footnotesize
\begin{adjustbox}{max height=0.48\textheight}
\begin{tabular}{lcccc|cc}
\toprule
\multirow{2}{*}{Dataset} & \multicolumn{2}{c}{$\ultra$} & \multicolumn{2}{c}{$\trix$} & \multicolumn{2}{c}{$\flock$} \\
\cmidrule(lr){2-3}\cmidrule(lr){4-5}\cmidrule(lr){6-7}
& MRR & Hits@1 & MRR & Hits@1 & MRR & Hits@1 \\
\midrule
\multicolumn{7}{c}{\textbf{Inductive} $e,r$} \\ 
\midrule
FB-25        & 0.687 & 0.565 & 0.805 & 0.724 & \textbf{0.895} & \textbf{0.839} \\
FB-50        & 0.696 & 0.575 & 0.780 & 0.699 & \textbf{0.880} & \textbf{0.820} \\
FB-75        & 0.698 & 0.555 & 0.822 & 0.747 & \textbf{0.903} & \textbf{0.844} \\
FB-100       & 0.830 & 0.728 & 0.921 & 0.880 & \textbf{0.962} & \textbf{0.938} \\
WK-25        & 0.857 & 0.760 & 0.881 & 0.823 & \textbf{0.952} & \textbf{0.929} \\
WK-50        & 0.865 & 0.793 & 0.868 & 0.818 & \textbf{0.921} & \textbf{0.882} \\
WK-75        & 0.911 & 0.875 & 0.916 & 0.883 & \textbf{0.962} & \textbf{0.944} \\
WK-100       & 0.887 & 0.812 & 0.907 & 0.869 & \textbf{0.963} & \textbf{0.937} \\
NL-0         & 0.632 & 0.502 & 0.658 & 0.519 & \textbf{0.714} & \textbf{0.574} \\
NL-25        & 0.688 & 0.562 & \textbf{0.742} & 0.614 & 0.729 & \textbf{0.632} \\
NL-50        & 0.680 & 0.569 & 0.755 & 0.636 & \textbf{0.813} & \textbf{0.728} \\
NL-75        & 0.795 & 0.692 & 0.788 & 0.699 & \textbf{0.833} & \textbf{0.756} \\
NL-100       & 0.743 & 0.564 & 0.884 & 0.796 & \textbf{0.939} & \textbf{0.889} \\
MT1 tax      & 0.985 & 0.976 & 0.975 & 0.958 & \textbf{0.998} & \textbf{0.997} \\
MT1 health   & 0.721 & 0.561 & 0.973 & 0.949 & \textbf{0.991} & \textbf{0.983} \\
MT2 org      & 0.974 & 0.951 & 0.986 & 0.973 & \textbf{0.991} & \textbf{0.984} \\
MT2 sci      & 0.976 & 0.961 & 0.964 & 0.941 & \textbf{0.995} & \textbf{0.992} \\
MT3 art      & 0.881 & 0.798 & 0.885 & 0.825 & \textbf{0.944} & \textbf{0.907} \\
MT3 infra    & 0.962 & 0.935 & 0.940 & 0.905 & \textbf{0.989} & \textbf{0.980} \\
MT4 sci      & 0.933 & 0.891 & 0.966 & 0.944 & \textbf{0.974} & \textbf{0.957} \\
MT4 health   & 0.826 & 0.719 & 0.937 & 0.898 & \textbf{0.990} & \textbf{0.983} \\
Metafam      & 0.124 & 0.000 & 0.291 & 0.011 & \textbf{0.490} & \textbf{0.223} \\
FBNELL       & 0.700 & 0.564 & 0.726 & 0.605 & \textbf{0.833} & \textbf{0.737} \\
\midrule 
\multicolumn{7}{c}{\textbf{Inductive} $e$} \\ 
\midrule
FB-v1        & 0.646 & 0.523 & 0.705 & 0.599 & \textbf{0.814} & \textbf{0.723} \\
FB-v2        & 0.695 & 0.570 & 0.713 & 0.590 & \textbf{0.847} & \textbf{0.761} \\
FB-v3        & 0.679 & 0.553 & 0.742 & 0.644 & \textbf{0.860} & \textbf{0.780} \\
FB-v4        & 0.638 & 0.488 & 0.766 & 0.665 & \textbf{0.873} & \textbf{0.799} \\
WN-v1        & 0.836 & 0.740 & 0.792 & 0.613 & \textbf{0.924} & \textbf{0.858} \\
WN-v2        & 0.853 & 0.790 & 0.764 & 0.572 & \textbf{0.924} & \textbf{0.863} \\
WN-v3        & 0.707 & 0.577 & 0.741 & 0.568 & \textbf{0.937} & \textbf{0.888} \\
WN-v4        & 0.860 & 0.803 & 0.764 & 0.570 & \textbf{0.937} & \textbf{0.886} \\
NL-v1        & 0.636 & 0.358 & 0.657 & 0.453 & \textbf{0.862} & \textbf{0.731} \\
NL-v2        & 0.742 & 0.652 & 0.780 & 0.696 & \textbf{0.893} & \textbf{0.855} \\
NL-v3        & 0.669 & 0.544 & 0.725 & 0.612 & \textbf{0.815} & \textbf{0.731} \\
NL-v4        & 0.606 & 0.489 & 0.794 & 0.691 & \textbf{0.868} & \textbf{0.807} \\
ILPC Small   & 0.905 & 0.843 & 0.919 & 0.872 & \textbf{0.955} & \textbf{0.921} \\
ILPC Large   & 0.875 & 0.799 & 0.894 & 0.829 & \textbf{0.948} & \textbf{0.908} \\
HM 1k        & 0.626 & 0.447 & 0.663 & 0.414 & \textbf{0.687} & \textbf{0.500} \\
HM 3k        & 0.592 & 0.439 & 0.664 & 0.418 & \textbf{0.714} & \textbf{0.549} \\
HM 5k        & 0.605 & 0.452 & 0.672 & 0.428 & \textbf{0.746} & \textbf{0.593} \\
HM Indigo    & 0.681 & 0.559 & 0.852 & 0.765 & \textbf{0.956} & \textbf{0.921} \\
\midrule
\multicolumn{7}{c}{\textbf{Transductive}} \\ 
\midrule
NELL995        & 0.583 & 0.437 & 0.578 & 0.457 & \textbf{0.684} & \textbf{0.555} \\
NELL23k        & 0.669 & 0.548 & 0.756 & 0.657 & \textbf{0.831} & \textbf{0.762} \\
WDsinger       & 0.668 & 0.546 & 0.720 & 0.621 & \textbf{0.823} & \textbf{0.738} \\
ConceptNet100k & 0.181 & 0.083 & 0.650 & 0.469 & \textbf{0.795} & \textbf{0.658} \\
CoDExSmall     & 0.900 & 0.820 & 0.961 & 0.935 & \textbf{0.982} & \textbf{0.970} \\
CoDExLarge     & 0.892 & 0.824 & 0.902 & 0.837 & \textbf{0.973} & \textbf{0.950} \\
YAGO310        & 0.646 & 0.403 & 0.783 & 0.598 & \textbf{0.971} & \textbf{0.943} \\
AristoV4       & 0.254 & 0.201 & 0.389 & 0.265 & \textbf{0.597} & \textbf{0.496} \\
DBpedia100k    & 0.650 & 0.509 & 0.717 & 0.582 & \textbf{0.919} & \textbf{0.861} \\
Hetionet       & 0.634 & 0.524 & 0.809 & 0.707 & \textbf{0.940} & \textbf{0.890} \\
FB15k-237(10)   & 0.688 & 0.550 & 0.795 & 0.711 & \textbf{0.918} & \textbf{0.876} \\
FB15k-237(20)   & 0.695 & 0.558 & 0.834 & 0.758 & \textbf{0.952} & \textbf{0.923} \\
FB15k-237(50)   & 0.717 & 0.591 & 0.876 & 0.812 & \textbf{0.968} & \textbf{0.946}\\
\bottomrule
\end{tabular}
\end{adjustbox}
\end{table}

\begin{table}[p]
\caption{Finetuned relation prediction results. Bold indicates the best score per row.}
\footnotesize
\label{tab:finetuned-relation}
\centering
\begin{adjustbox}{max height=0.48\textheight}
\begin{tabular}{lcccc|cc}
\toprule
\multirow{2}{*}{Dataset} & \multicolumn{2}{c}{$\ultra$} & \multicolumn{2}{c}{$\trix$} & \multicolumn{2}{c}{$\flock$} \\
\cmidrule(lr){2-3}\cmidrule(lr){4-5}\cmidrule(lr){6-7}
& MRR & Hits@1 & MRR & Hits@1 & MRR & Hits@1 \\
\midrule
\multicolumn{7}{c}{\textbf{Inductive} $e,r$} \\ 
\midrule
FB-25        & 0.684 & 0.563 & 0.805 & 0.724 & \textbf{0.909} & \textbf{0.857} \\
FB-50        & 0.696 & 0.575 & 0.780 & 0.699 & \textbf{0.881} & \textbf{0.820} \\
FB-75        & 0.754 & 0.638 & 0.822 & 0.699 & \textbf{0.911} & \textbf{0.854} \\
FB-100       & 0.851 & 0.769 & 0.921 & 0.880 & \textbf{0.965} & \textbf{0.939} \\
WK-25        & 0.897 & 0.834 & 0.905 & 0.860 & \textbf{0.968} & \textbf{0.954} \\
WK-50        & 0.865 & 0.793 & 0.881 & 0.840 & \textbf{0.925} & \textbf{0.876} \\
WK-75        & 0.911 & 0.875 & 0.937 & 0.910 & \textbf{0.965} & \textbf{0.948} \\
WK-100       & 0.924 & 0.879 & 0.916 & 0.885 & \textbf{0.970} & \textbf{0.946} \\
NL-0         & 0.632 & 0.502 & 0.655 & 0.518 & \textbf{0.731} & \textbf{0.602} \\
NL-25        & 0.737 & 0.622 & 0.709 & 0.606 & \textbf{0.757} & \textbf{0.634} \\
NL-50        & 0.808 & 0.704 & 0.774 & 0.683 & \textbf{0.814} & \textbf{0.721} \\
NL-75        & 0.795 & 0.678 & 0.790 & 0.671 & \textbf{0.848} & \textbf{0.774} \\
NL-100       & 0.803 & 0.678 & 0.885 & 0.793 & \textbf{0.937} & \textbf{0.887} \\
MT1 tax      & 0.990 & 0.984 & 0.995 & 0.990 & \textbf{0.999} & \textbf{0.998} \\
MT1 health   & 0.929 & 0.867 & 0.973 & 0.949 & \textbf{0.994} & \textbf{0.988} \\
MT2 org      & 0.981 & 0.963 & 0.987 & 0.978 & \textbf{0.994} & \textbf{0.988} \\
MT2 sci      & 0.977 & 0.961 & 0.990 & 0.984 & \textbf{0.995} & \textbf{0.992} \\
MT3 art      & 0.907 & 0.851 & 0.887 & 0.828 & \textbf{0.950} & \textbf{0.916} \\
MT3 infra    & 0.966 & 0.947 & 0.970 & 0.952 & \textbf{0.996} & \textbf{0.993} \\
MT4 sci      & 0.954 & 0.929 & 0.972 & 0.952 & \textbf{0.983} & \textbf{0.968} \\
MT4 health   & 0.951 & 0.919 & 0.986 & 0.979 & \textbf{0.995} & \textbf{0.991} \\
Metafam      & 0.368 & 0.112 & 0.265 & 0.024 & \textbf{0.997} & \textbf{0.995} \\
FBNELL       & 0.720 & 0.576 & 0.766 & 0.639 & \textbf{0.879} & \textbf{0.801} \\
\midrule 
\multicolumn{7}{c}{\textbf{Inductive} $e$} \\ 
\midrule
FB-v1        & 0.650 & 0.513 & 0.705 & 0.599 & \textbf{0.855} & \textbf{0.766} \\
FB-v2        & 0.675 & 0.547 & 0.713 & 0.590 & \textbf{0.887} & \textbf{0.812} \\
FB-v3        & 0.677 & 0.556 & 0.742 & 0.644 & \textbf{0.879} & \textbf{0.810} \\
FB-v4        & 0.690 & 0.560 & 0.766 & 0.665 & \textbf{0.884} & \textbf{0.807} \\
WN-v1        & 0.844 & 0.754 & 0.776 & 0.591 & \textbf{0.926} & \textbf{0.879} \\
WN-v2        & 0.834 & 0.766 & 0.765 & 0.574 & \textbf{0.927} & \textbf{0.869} \\
WN-v3        & 0.707 & 0.577 & 0.756 & 0.594 & \textbf{0.950} & \textbf{0.911} \\
WN-v4        & 0.861 & 0.795 & 0.804 & 0.651 & \textbf{0.943} & \textbf{0.898} \\
NL-v1        & 0.719 & 0.504 & 0.590 & 0.341 & \textbf{0.883} & \textbf{0.766} \\
NL-v2        & 0.668 & 0.549 & 0.811 & 0.740 & \textbf{0.911} & \textbf{0.870} \\
NL-v3        & 0.646 & 0.484 & 0.757 & 0.643 & \textbf{0.868} & \textbf{0.795} \\
NL-v4        & 0.570 & 0.412 & 0.822 & 0.735 & \textbf{0.906} & \textbf{0.849} \\
ILPC Small   & 0.922 & 0.876 & 0.919 & 0.872 & \textbf{0.953} & \textbf{0.918} \\
ILPC Large   & 0.875 & 0.799 & 0.894 & 0.829 & \textbf{0.953} & \textbf{0.915} \\
HM 1k        & 0.626 & 0.447 & 0.663 & 0.414 & \textbf{0.756} & \textbf{0.561} \\
HM 3k        & 0.592 & 0.439 & 0.664 & 0.418 & \textbf{0.790} & \textbf{0.623} \\
HM 5k        & 0.605 & 0.452 & 0.672 & 0.428 & \textbf{0.744} & \textbf{0.591} \\
HM Indigo    & 0.726 & 0.614 & 0.835 & 0.746 & \textbf{0.946} & \textbf{0.903} \\
\midrule 
\multicolumn{7}{c}{\textbf{Transductive}} \\ 
\midrule
NELL995         & 0.630 & 0.513 & 0.578 & 0.457 & \textbf{0.713} & \textbf{0.584} \\
NELL23k         & 0.688 & 0.571 & 0.755 & 0.658 & \textbf{0.869} & \textbf{0.805} \\
WDsinger        & 0.730 & 0.603 & 0.721 & 0.627 & \textbf{0.885} & \textbf{0.815} \\
ConceptNet100k  & 0.612 & 0.488 & 0.712 & 0.551 & \textbf{0.885} & \textbf{0.813} \\
CoDExSmall      & 0.942 & 0.900 & 0.964 & 0.943 & \textbf{0.981} & \textbf{0.967} \\
CoDExLarge      & 0.907 & 0.850 & 0.908 & 0.845 & \textbf{0.973} & \textbf{0.950} \\
YAGO310         & 0.930 & 0.891 & 0.826 & 0.666 & \textbf{0.970} & \textbf{0.942}\\
AristoV4        & 0.254 & 0.201 & 0.498 & 0.381 & \textbf{0.651} & \textbf{0.547} \\
DBpedia100k     & 0.650 & 0.509 & 0.780 & 0.665 & \textbf{0.923} & \textbf{0.869} \\
Hetionet        & 0.737 & 0.646 & 0.922 & 0.862 & \textbf{0.942} & \textbf{0.897} \\
FB15k-237(10)    & 0.688 & 0.550 & 0.795 & 0.711 & \textbf{0.940} & \textbf{0.905} \\
FB15k-237(20)    & 0.695 & 0.558 & 0.846 & 0.778 & \textbf{0.958} & \textbf{0.931} \\
FB15k-237(50)    & 0.728 & 0.618 & 0.903 & 0.858 & \textbf{0.970} & \textbf{0.948} \\
\midrule
\multicolumn{7}{c}{\textbf{Pretrained}} \\ 
\midrule
FB15k-237 & 0.795 &	0.709 &	0.924 &	0.870 & \textbf{0.976} &\textbf{0.957}\\
WN18RR & 0.914 & 0.871 &	0.783 &	0.634 & \textbf{0.982} &	\textbf{0.968}\\
CoDExMedium & 0.919 &	0.870 &	0.931 &	0.886 & \textbf{0.974} &	\textbf{0.952}\\
\bottomrule
\end{tabular}
\end{adjustbox}
\end{table}

\begin{table}[p]
    \scriptsize
    \centering
    \caption{Dataset statistics for \textbf{inductive}-$e,r$ link prediction datasets. Triples are the
number of edges given at training, validation, or test graphs, respectively, whereas Valid and Test denote triples to be predicted in the validation and test graphs.}
    \label{tab:inductive-e-r-statistics}
    \begin{tabular}{lccc|cccc|ccccc}
    \toprule
 \multirow{2}{*}{ \textbf{Dataset }} & \multicolumn{3}{c}{ \textbf{Training Graph} } & \multicolumn{4}{c}{ \textbf{Validation Graph} } & \multicolumn{4}{c}{ \textbf{Test Graph} }  \\
 \cmidrule{2-12}
 & \textbf{Entities} & \textbf{Rels} & \textbf{Triples} & \textbf{Entities} & \textbf{Rels} & \textbf{Triples} & \textbf{Valid} & \textbf{Entities} & \textbf{Rels} & \textbf{Triples} & \textbf{Test}  \\
 \midrule
 FB-25  & 5190 & 163 & 91571 & 4097 & 216 & 17147 & 5716 & 4097 & 216 & 17147 & 5716  \\
 FB-50  & 5190 & 153 & 85375 & 4445 & 205 & 11636 & 3879 & 4445 & 205 & 11636 & 3879  \\
 FB-75  & 4659 & 134 & 62809 & 2792 & 186 & 9316 & 3106 & 2792 & 186 & 9316 & 3106  \\
 FB-100  & 4659 & 134 & 62809 & 2624 & 77 & 6987 & 2329 & 2624 & 77 & 6987 & 2329  \\
 WK-25  & 12659 & 47 & 41873 & 3228 & 74 & 3391 & 1130 & 3228 & 74 & 3391 & 1131  \\
 WK-50  & 12022 & 72 & 82481 & 9328 & 93 & 9672 & 3224 & 9328 & 93 & 9672 & 3225  \\
 WK-75  & 6853 & 52 & 28741 & 2722 & 65 & 3430 & 1143 & 2722 & 65 & 3430 & 1144  \\
 WK-100  & 9784 & 67 & 49875 & 12136 & 37 & 13487 & 4496 & 12136 & 37 & 13487 & 4496  \\
 NL-0  & 1814 & 134 & 7796 & 2026 & 112 & 2287 & 763 & 2026 & 112 & 2287 & 763  \\
 NL-25  & 4396 & 106 & 17578 & 2146 & 120 & 2230 & 743 & 2146 & 120 & 2230 & 744  \\
 NL-50  & 4396 & 106 & 17578 & 2335 & 119 & 2576 & 859 & 2335 & 119 & 2576 & 859  \\
 NL-75  & 2607 & 96 & 11058 & 1578 & 116 & 1818 & 606 & 1578 & 116 & 1818 & 607  \\
 NL-100  & 1258 & 55 & 7832 & 1709 & 53 & 2378 & 793 & 1709 & 53 & 2378 & 793  \\
 Metafam  & 1316 & 28 & 13821 & 1316 & 28 & 13821 & 590 & 656 & 28 & 7257 & 184   \\
 FBNELL  & 4636 & 100 & 10275 & 4636 & 100 & 10275 & 1055 & 4752 & 183 & 10685 & 597   \\
 \midrule
 Wiki MT1 tax  & 10000 & 10 & 17178 & 10000 & 10 & 17178 & 1908 & 10000 & 9 & 16526 & 1834   \\
 Wiki MT1 health  & 10000 & 7 & 14371 & 10000 & 7 & 14371 & 1596 & 10000 & 7 & 14110 & 1566   \\
 Wiki MT2 org  & 10000 & 10 & 23233 & 10000 & 10 & 23233 & 2581 & 10000 & 11 & 21976 & 2441  \\
 Wiki MT2 sci  & 10000 & 16 & 16471 & 10000 & 16 & 16471 & 1830 & 10000 & 16 & 14852 & 1650  \\
 Wiki MT3 art  & 10000 & 45 & 27262 & 10000 & 45 & 27262 & 3026 & 10000 & 45 & 28023 & 3113  \\
 Wiki MT3 infra  & 10000 & 24 & 21990 & 10000 & 24 & 21990 & 2443 & 10000 & 27 & 21646 & 2405  \\
 Wiki MT4 sci  & 10000 & 42 & 12576 & 10000 & 42 & 12576 & 1397 & 10000 & 42 & 12516 & 1388  \\
 Wiki MT4 health  & 10000 & 21 & 15539 & 10000 & 21 & 15539 & 1725 & 10000 & 20 & 15337 & 1703  \\
\bottomrule
\end{tabular}
\end{table}

\begin{table}[p]
    \small
    \centering
     \caption{Dataset statistics for \textbf{inductive}-$e$ link prediction datasets. Triples are the
number of edges given at training, validation, or test graphs, respectively, whereas Valid and Test denote triples to be predicted in the validation and test graphs.}
    \label{tab:inductive-e-statistics}
    \begin{tabular}{lccc|ccc|ccc}
\toprule \multirow{2}{*}{ \textbf{Dataset }} & \multirow{2}{*}{ \textbf{Rels} } & \multicolumn{2}{c}{ \textbf{Training Graph} } & \multicolumn{3}{c}{ \textbf{Validation Graph} } & \multicolumn{3}{c}{ \textbf{Test Graph} }  \\
\cmidrule{3-10} & & \textbf{Entities} & \textbf{Triples} & \textbf{Entities} & \textbf{Triples} & \textbf{Valid} & \textbf{Entities} & \textbf{Triples} & \textbf{Test} \\
\midrule
 FB-v1  & 180 & 1594 & 4245 & 1594 & 4245 & 489 & 1093 & 1993 & 411 \\
 FB-v2 & 200 & 2608 & 9739 & 2608 & 9739 & 1166 & 1660 & 4145 & 947  \\
 FB-v3  & 215 & 3668 & 17986 & 3668 & 17986 & 2194 & 2501 & 7406 & 1731  \\
 FB-v4  & 219 & 4707 & 27203 & 4707 & 27203 & 3352 & 3051 & 11714 & 2840 \\
 WN-v1  & 9 & 2746 & 5410 & 2746 & 5410 & 630 & 922 & 1618 & 373  \\
 WN-v2  & 10 & 6954 & 15262 & 6954 & 15262 & 1838 & 2757 & 4011 & 852  \\
 WN-v3  & 11 & 12078 & 25901 & 12078 & 25901 & 3097 & 5084 & 6327 & 1143  \\
 WN-v4  & 9 & 3861 & 7940 & 3861 & 7940 & 934 & 7084 & 12334 & 2823  \\
 NL-v1  & 14 & 3103 & 4687 & 3103 & 4687 & 414 & 225 & 833 & 201  \\
 NL-v2  & 88 & 2564 & 8219 & 2564 & 8219 & 922 & 2086 & 4586 & 935  \\
 NL-v3  & 142 & 4647 & 16393 & 4647 & 16393 & 1851 & 3566 & 8048 & 1620  \\
 NL-v4  & 76 & 2092 & 7546 & 2092 & 7546 & 876 & 2795 & 7073 & 1447  \\
 ILPC Small  & 48 & 10230 & 78616 & 6653 & 20960 & 2908 & 6653 & 20960 & 2902  \\
 ILPC Large  & 65 & 46626 & 202446 & 29246 & 77044 & 10179 & 29246 & 77044 & 10184 \\
 HM 1k & 11 & 36237 & 93364 & 36311 & 93364 & 1771 & 9899 & 18638 & 476 \\
 HM 3k  & 11 & 32118 & 71097 & 32250 & 71097 & 1201 & 19218 & 38285 & 1349  \\
 HM 5k & 11 & 28601 & 57601 & 28744 & 57601 & 900 & 23792 & 48425 & 2124  \\
 HM Indigo& 229 & 12721 & 121601 & 12797 & 121601 & 14121 & 14775 & 250195 & 14904  \\
\bottomrule
\end{tabular}
\end{table}

\begin{table}[p]
    \centering
    \caption{Dataset statistics for \textbf{transductive} link prediction datasets. Entity task denotes the entity-prediction task: $h/t$ is predicting both heads and tails, and $t$ is
predicting only tails. }
\footnotesize
    \label{tab:transductive-statistics}
\begin{tabular}{lccccccc}
\toprule \textbf{Dataset} & \textbf{Entities} & \textbf{Rels} & \textbf{Train} & \textbf{Valid} & \textbf{Test} & \textbf{Entity Task} \\
\midrule
FB15k-237  & 14541 & 237 & 272115 & 17535 & 20466 & $h/t$ \\
WN18RR  & 40943 & 11 & 86835 & 3034 & 3134 & $h/t$  \\
 CoDEx Small  & 2034 & 42 & 32888 & 1827 & 1828 & $h/t$ \\
CoDEx Medium  & 17050 & 51 & 185584 & 10310 & 10311 & $h/t$  \\
CoDEx Large & 77951 & 69 & 551193 & 30622 & 30622 & $h/t$ \\
NELL995 & 74536 & 200 & 149678 & 543 & 2818 & $h/t$ \\
YAGO310 & 123182 & 37 & 1079040 & 5000 & 5000 & $h/t$  \\
WDsinger  & 10282 & 135 & 16142 & 2163 & 2203 & $h/t$  \\
NELL23k  & 22925 & 200 & 25445 & 4961 & 4952 & $h/t$ \\
AristoV4       & 44949  & 1605 & 242567  & 20000      & 20000  & $h/t$ \\
DBpedia100k    & 99604  & 470  & 597572  & 50000      & 50000  & $h/t$ \\
ConceptNet100k  & 78334  & 34   & 100000   & 1200       & 1200    & $h/t$ \\
FB15k-237(10)  & 11512 & 237 & 27211 & 15624 & 18150 & $t$  \\
FB15k-237(20) & 13166 & 237 & 54423 & 16963 & 19776 &$t$ \\
FB15k-237(50)  & 14149 & 237 & 136057 & 17449 & 20324 & $t$ \\
Hetionet & 45158 & 24 & 2025177 & 112510 & 112510 & $h/t$  \\
\bottomrule
\end{tabular}
\end{table}

\begin{table}[p]
\centering
\caption{Different graph pretraining mix shown in \Cref{sec:scaling_main}.}
\footnotesize
\label{tab:pretrain-mix-table}
\begin{tabular}{lccccccc}
\toprule
 & 1 & 2 & 3 & 4 & 5 & 6 & 8 \\
\midrule FB15k-237 & $\checkmark$ & $\checkmark$ & $\checkmark$ & $\checkmark$ & $\checkmark$ & $\checkmark$ & $\checkmark$ \\
 WN18RR & & $\checkmark$ & $\checkmark$ & $\checkmark$ & $\checkmark$ & $\checkmark$ & $\checkmark$ \\
 CoDEx Medium & & & $\checkmark$ & $\checkmark$ & $\checkmark$ & $\checkmark$ & $\checkmark$ \\
 NELL995 & & & & $\checkmark$ & $\checkmark$ & $\checkmark$ & $\checkmark$ \\
 YAGO 310 & & & & & $\checkmark$ & $\checkmark$ & $\checkmark$ \\
 ConceptNet100k & & & & & & $\checkmark$ & $\checkmark$ \\
 DBpedia100k & & & & & & & $\checkmark$ \\
 AristoV4 & & & & & & & $\checkmark$ \\
\bottomrule
\end{tabular}
\end{table}

\begin{table}[t]
\small
\centering
\caption{Hyperparameter for $\flock$ in pretraining and finetuning setups.}
\label{tab:hyperparameter}
\begin{tabular}{cccc}
\toprule & \textbf{Hyperparameter} & \textbf{Entity prediction} & \textbf{Relation prediction}\\
\midrule 
\multirow{3}{*}{Random walk} & Walk length $\ell$ & 128 & 128 \\
& \# Pretraining base walk $n_\text{train}$ & 128 & 128 \\
& \# Test-time or finetuning base walk $n$ & 16--512 & 16--512 \\
\midrule 
\multirow{2}{*}{Sequence processor} & \# Layers  & 1 & 1 \\
& Hidden dimension & 64 & 64 \\
\midrule 
\multirow{2}{*}{Consensus protocol} & \# Heads $h$  & 4 & 4 \\
& Head dimension $d_h$ & 16 & 16 \\
\midrule 
\multirow{1}{*}{Update} & \# Update step $I$  & 6 & 6 \\
\midrule 
\multirow{1}{*}{Ensemble} & \# Maximum ensembled passes $P$ & 16 & 16 \\
\midrule 
\multirow{7}{*}{Pretraining} & Optimizer & AdamW  & AdamW \\
& Learning rate & 0.0005 & 0.0005 \\
& Training steps & 400,000 & 40,000 \\
& Adversarial temperature & 1 & 1 \\
& \# Negatives & 512  & 512 \\
& Batch size & 8 & 8 \\
& Weight decay & 0.01  & 0.00 \\
\midrule
\multirow{5}{*}{Finetuning} & Optimizer & AdamW & AdamW \\
& Learning rate & 0.0005  & 0.0005 \\
& Adversarial temperature & 1 & 1 \\
& \# Negatives & 256 & 256 \\
& Batch size & 4--32 & 4--8 \\
\bottomrule
\end{tabular}
\end{table}

\begin{table}[p]
\small
\centering
\caption{Detailed finetuning and inference hyperparameters for $\flock$ in entity prediction. For each dataset, we report the finetuning epochs, batches per epoch, batch size, and the inference settings for both zero-shot and finetuned modes: test-time ensemble size $P$, base walk count $n$.
For Hetionet finetuning we used $(P, n)=(1, 1024)$, instead of $(2, 512)$ as in zero-shot.
}
\label{tab:hyperparameter-finetune-1}
\begin{tabular}{lccccc}
\toprule
\textbf{Dataset} & \textbf{Epoch} & \textbf{\# Batch/Epoch} & \textbf{Batch Size} & \textbf{\# Ensembled Passes $P$} & \textbf{\# Base Walk $n$} \\
\midrule
FB15k-237           & 1 & full & 8 & 16 & 128  \\
WN18RR             & 1 & full & 8 & 16 & 128  \\
CoDEx Small        & 1 & full & 32 & 16 & 16  \\
CoDEx Medium       & 1 & full & 8 & 16 & 128  \\
CoDEx Large        & 1 & 2000 & 4 & 2 & 512 \\
NELL-995           & 1 & full & 8 & 16 & 128 \\
YAGO310            & 1 & 2000 & 4 & 8 & 512 \\
WDsinger           & 1 & full & 8 & 16 & 16  \\
NELL23k            & 3 & full & 8 & 16 & 32  \\
FB15k-237(10)       & 1 & full & 8 & 16 & 32  \\
FB15k-237(20)       & 1 & full & 8 & 16 & 64  \\
FB15k-237(50)       & 1 & full & 8 & 16 & 64  \\
Hetionet           & 1 & 4000 & 8 & 2 & 512 \\
DBpedia100k        & 1 & 1000 & 4 & 2 & 512 \\
AristoV4           & 1 & full & 8 & 4 & 256 \\
ConceptNet100k     & 1 & full & 8 & 16 & 128 \\
FB v1--v4 & 1 & full & 8 & 16 & 16 \\
WN v1--v4 & 1 & full & 8 & 16 & 16 \\
NL v1--v4 & 3 & full & 8 & 16 & 16 \\
ILPC Small & 1 & full & 8 & 16 & 16 \\
ILPC Large & 1 & full & 8 & 16 & 64 \\
FB 25--100  & 3 & full & 8 & 16 & 16 \\
WK 25--100  & 3 & full & 8 & 16 & 16 \\
NL 0--100   & 3 & full & 8 & 16 & 16 \\
Wiki MT1 tax     & 3 & full & 8 & 16 & 16 \\
Wiki MT1 health  & 3 & full & 8 & 16 & 16 \\
Wiki MT2 org     & 3 & full & 16 & 16 & 32 \\
Wiki MT2 sci     & 3 & full & 8 & 16 & 16 \\
Wiki MT3 art     & 3 & full & 16 & 16 & 32 \\
Wiki MT3 infra   & 3 & full & 16 & 16 & 32 \\
Wiki MT4 sci     & 3 & full & 8 & 16 & 16 \\
Wiki MT4 health  & 3 & full & 8 & 16 & 16 \\
Metafam & 3 & full & 8 & 16 & 16 \\
FBNELL & 3 & full & 8 & 16 & 16 \\
HM 1k     & 1 & full & 8 & 16 & 16  \\
HM 3k     & 1 & full & 16 & 16 & 32  \\
HM 5k     & 1 & full & 8 & 16 & 64  \\
HM Indigo & 1 & full & 8 & 16 & 128 \\
\bottomrule
\end{tabular}
\end{table}

\begin{table}[p]
\small
\centering
\caption{Detailed finetuning and inference hyperparameters for $\flock$ in relation prediction. For each dataset, we report the finetuning epochs, batches per epoch, batch size, and the inference settings for both zero-shot and finetuned modes: test-time ensemble size $P$ and base walk count $n$.
}
\label{tab:hyperparameter-finetune-2}
\begin{tabular}{lccccc}
\toprule
\textbf{Dataset} & \textbf{Epoch} & \textbf{\# Batch/Epoch} & \textbf{Batch Size} & \textbf{\# Ensembled Passes $P$} & \textbf{\# Base Walk $n$} \\
\midrule
FB15k-237           & 1 & 1000 & 8 & 16 & 128  \\
WN18RR             & 1 & 1000 & 8 & 16 & 128  \\
CoDEx Small        & 1 & 1000 & 8 & 16 & 16  \\
CoDEx Medium       & 1 & 1000 & 8 & 16 & 128  \\
CoDEx Large        & 1 & 1000 & 4 & 2 & 512 \\
NELL-995           & 1 & 1000 & 8 & 16 & 128 \\
YAGO310            & 1 & 1000 & 8 & 4 & 512 \\
WDsinger           & 1 & 1000 & 8 & 16 & 16  \\
NELL23k            & 1 & 1000 & 8 & 16 & 32  \\
FB15k-237(10)       & 1 & 1000 & 8 & 16 & 32  \\
FB15k-237(20)       & 1 & 1000 & 8 & 16 & 64  \\
FB15k-237(50)       & 1 & 1000 & 8 & 16 & 64  \\
Hetionet           & 1 & 1000 & 4 & 2 & 512 \\
DBpedia100k        & 1 & 1000 & 4 & 2 & 512 \\
AristoV4           & 1 & 1000 & 8 & 4 & 256 \\
ConceptNet100k     & 1 & 1000 & 8 & 16 & 128 \\
FB v1--v4 & 1 & 1000 & 8 & 16 & 16 \\
WN v1--v4 & 1 & 1000 & 8 & 16 & 16 \\
NL v1--v4 & 1 & 1000 & 8 & 16 & 16 \\
ILPC Small & 1 & 1000 & 8 & 16 & 16 \\
ILPC Large & 1 & 1000 & 8 & 16 & 64 \\
FB 25--100  & 1 & 1000 & 8 & 16 & 16 \\
WK 25--100  & 1 & 1000 & 8 & 16 & 16 \\
NL 0--100   & 1 & 1000 & 8 & 16 & 16 \\
Wiki MT1 tax     & 1 & 1000 & 8 & 16 & 16 \\
Wiki MT1 health  & 1 & 1000 & 8 & 16 & 16 \\
Wiki MT2 org     & 1 & 1000 & 8 & 16 & 32 \\
Wiki MT2 sci     & 1 & 1000 & 8 & 16 & 16 \\
Wiki MT3 art     & 1 & 1000 & 8 & 16 & 32 \\
Wiki MT3 infra   & 1 & 1000 & 8 & 16 & 32 \\
Wiki MT4 sci     & 1 & 1000 & 8 & 16 & 16 \\
Wiki MT4 health  & 1 & 1000 & 8 & 16 & 16 \\
Metafam & 1 & 1000 & 8 & 16 & 16 \\
FBNELL & 1 & 1000 & 8 & 16 & 16 \\
HM 1k     & 1 & 1000 & 8 & 16 & 16  \\
HM 3k     & 1 & 1000 & 8 & 16 & 32  \\
HM 5k     & 1 & 1000 & 8 & 16 & 64  \\
HM Indigo & 1 & 1000 & 8 & 16 & 128 \\
\bottomrule
\end{tabular}
\end{table}

\end{document}